\documentclass[UTF8]{article}

\usepackage[utf8]{inputenc} 
\usepackage[T1]{fontenc}    
\usepackage{nicefrac}       
\usepackage{xcolor}         

\usepackage{microtype}
\usepackage{graphicx}
\usepackage{subcaption}
\usepackage{multirow}
\usepackage{booktabs} 
\usepackage{diagbox}

\usepackage[letterpaper]{geometry}
\usepackage[hidelinks]{hyperref}
\usepackage{url}

\usepackage{amssymb}
\usepackage{mathtools}
\usepackage{amsthm}
\usepackage{array}
\usepackage{bm}
\allowdisplaybreaks

\usepackage[ruled, linesnumbered, boxed, vlined]{algorithm2e}

\usepackage{natbib}

\usepackage{subfiles}
\theoremstyle{plain}
\newtheorem{theorem}{Theorem}
\newtheorem{theorem*}{Theorem}
\newtheorem{proposition}{Proposition}
\newtheorem{lemma}{Lemma}
\newtheorem*{lemma*}{Lemma}
\newtheorem{corollary}{Corollary}
\theoremstyle{definition}

\newtheorem{assumption}{Assumption}
\theoremstyle{remark}
\newtheorem{remark}{Remark}

\DeclareMathOperator{\diag}{\mathrm{diag}}
\DeclareMathOperator{\tr}{\mathrm{tr}}

\newcommand{\rme}{\mathord{\mathrm{e}}}
\newcommand{\e}{{\bm{e}}}
\renewcommand{\tilde}{\widetilde}
\renewcommand{\hat}{\widehat}

\newcommand{\x}{{\bm{x}}}
\newcommand{\y}{{\bm{y}}}

\newcommand{\z}{{\bm{z}}}
\newcommand{\w}{{\bm{w}}}
\newcommand{\p}{{\bm{p}}}
\newcommand{\bu}{{\bm{u}}}
\newcommand{\bv}{{\bm{v}}}
\newcommand{\bvt}{\tilde{\bm{v}}}
\newcommand{\xs}{{\bm{x}}_\star}
\newcommand{\ws}{{\bm{w}}_\star}
\newcommand{\wh}{\hat{{\bm{w}}}}

\newcommand{\sig}{{\bm{\sigma}}}

\newcommand{\A}{\mathcal{A}}

\newcommand{\R}{\mathbb{R}}
\newcommand{\E}{\mathbb{E}}

\newcommand{\I}{\mathbb{I}}

\newcommand{\N}{\mathbb{N}}
\newcommand{\D}{\mathcal{D}}
\newcommand{\C}{\mathcal{C}}

\newcommand{\W}{\mathcal{W}}
\newcommand{\F}{\mathcal{F}}
\newcommand{\cP}{\mathcal{P}}
\newcommand{\Q}{\mathcal{Q}}
\renewcommand{\O}{\mathcal{O}}
\newcommand{\Ot}{\tilde{\mathcal{O}}}

\newcommand{\argmax}{\mathop{\mathrm{argmax}}}
\newcommand{\argmin}{\mathop{\mathrm{argmin}}}

\newcommand{\reg}{\mathop{\mathrm{Regret}}}
\newcommand{\pr}[1]{\mathop{\mathrm{Pr}} \left[ #1 \right]}
\newcommand{\Max}[1]{\max \left\{ #1 \right\}}

\usepackage{xparse}
\NewDocumentCommand{\Exp}{mg}{\IfNoValueTF{#2}{\E\left[#1\right]}{\E_{#2}\left[#1\right]}}

\NewDocumentCommand{\tail}{mg}{\IfNoValueTF{#2}{\Delta_{#1}}{\Delta_{#1}^{#2}}}

\newcommand{\Ws}{W_\star}

\newcommand{\Wv}{\overrightarrow{W}}
\newcommand{\Wvs}{\overrightarrow{W}_\star}
\newcommand{\Ht}{\tilde{H}}
\newcommand{\Zt}{\tilde{Z}}

\newcommand{\Ut}{\tilde{U}}
\newcommand{\Vt}{\tilde{V}}
\newcommand{\Lamt}{\tilde{\Lambda}}
\newcommand{\Sigt}{\tilde{\Sigma}}

\newcommand{\rwdv}{{\bm{r}}}
\newcommand{\rwd}{r}

\DeclareMathOperator{\smooth}{\mathrm{smooth}}
\newcommand{\dfst}{\varepsilon^{\mathtt{fst}}}
\newcommand{\dsnd}{\varepsilon^{\mathtt{snd}}}
\newcommand{\ellt}{\tilde{\ell}}
\newcommand{\ellh}{\hat{\ell}}

\title{\textbf{Efficient Multinomial Logistic Bandit via\\ Frequent Directions}}

\author{
\begin{minipage}{\textwidth}
\centering
	Linzhe He$^{1,2}$ \quad Yu-Jie Zhang$^{3}$ \quad Sifan Yang$^{1,2}$ \quad Lijun Zhang$^{1,2}$\\[0.3em]
	$^{1}$State Key Laboratory of Novel Software Technology, Nanjing University, Nanjing 210023, China\\
	$^{2}$School of Artificial Intelligence, Nanjing University, Nanjing 210023, China\\
	$^{3}$Paul G. Allen School of Computer Science \& Engineering, University of Washington, USA\\[0.3em]
	\texttt{helz@lamda.nju.edu.cn} \quad \texttt{yujiez7@cs.washington.edu}\\
	\texttt{yangsf@lamda.nju.edu.cn} \quad \texttt{zlj@nju.edu.cn}
\end{minipage}
}
\date{}

\begin{document}
\maketitle

\begin{abstract}
This paper studies efficient online algorithms for multinomial logistic bandits (MLogB),
where the feedback distribution over $K+1$ outcomes follows a multinomial logistic model of $d$-dimensional action vectors.
A representative UCB-type algorithm, OFUL-MLogB, achieves a regret bound of $\Ot(Kd\sqrt{T})$,
but still requires $\O(K^3d^3)$ time and $\O(K^2d^2)$ space per round due to parameter estimation and optimistic reward construction,
which is prohibitive in high-dimensional settings.
To address this limitation, we propose EOFD-MLogB, which integrates frequent directions matrix sketching into OFUL-MLogB.
By maintaining a low-rank SVD sketch of the accumulated Hessian,
constrained online Newton updates in parameter estimation and $Kd \times K$ spectral-norm computations in the reward bonus
are reduced to one-dimensional root-finding tasks and $K \times K$ eigenvalue computations, respectively.
This yields dominant per-round time complexity $\O(Kd(m+K)^2)$ and space complexity $\O(Kd(m+K))$,
where $m \ll d$ is the sketch size.
We further prove a regret bound of $\Ot(\tail{T}(Kd\ln\tail{T}+m)\sqrt{T})$,
where the sketching error factor $\tail{T}$ is controlled by the $m$-truncated spectral tail of the Hessian.
Thus, when the Hessian is approximately low-rank, the regret is close to that of OFUL-MLogB.
Experiments validate the computational efficiency and competitive performance.
\end{abstract}

\section{Introduction}

Stochastic multinomial logistic bandits (MLogB) \citep{amani2021ucb-based-MNL} model a class of sequential decision-making problems with categorical feedback.
At each round $t\in\N_+$, an agent selects an action $\x_t\in\A_t\subseteq\R^d$ and observes environment feedback $y_t\in[K]\cup\{0\}$,
where each outcome is associated with a reward $\rwd_k \in \R_+$.
The feedback distribution is parameterized by an unknown environment parameter $W_* \in \R^{K\times d}$ through $\pr{y_t=k\mid \x_t}=[\sig(W_*\x_t)]_k$, where $\sig(\cdot)$ is the multinomial logistic function:
\begin{equation}
	[\sig(\z)]_0 = \frac{1}{1+\sum_{k=1}^K \exp(z_k)},
	\quad
	[\sig(\z)]_i = \frac{\exp(z_i)}{1+\sum_{k=1}^K \exp(z_k)}, \quad i\ge 1 .
\end{equation}
This framework captures a wide range of applications such as recommendation systems \citep{li2010recommandation,li2017provablyGLB},
where different user responses to a recommended item are mapped to different rewards.
The goal is to minimize the cumulative regret
\begin{equation}
	\reg(T)
	=
	\sum_{t=1}^{T}
	\max_{\x\in\A_t} \rwdv^\top \sig(W_* \x)
	-
	\rwdv^\top \sig(W_* \x_t).
\end{equation}

A standard approach is the upper confidence bound (UCB),
which maintains confidence sets for $W_*$ and selects actions using optimistic rewards.
A prior UCB-type algorithm, MNL-UCB \citep{amani2021ucb-based-MNL}, combines batch maximum likelihood estimation with a first-order Taylor-based optimistic reward approximation,
achieving provable regret guarantees.
Its improved variant \citep[Appendix C]{amani2021ucb-based-MNL} further employs second-order Taylor-based optimistic reward approximation to construct a tighter optimistic rewards for the non-convex multinomial logistic reward,
but still relies on batch parameter estimation and hence full-history storage.
OFUL-MLogB \citep{zhang2023mlb} avoids this storage cost by using online Newton step (ONS) \citep{Hazan2007ons} for parameter estimation,
while retaining the second-order optimistic reward design.
However, it still maintains and manipulates a $Kd\times Kd$ Hessian matrix,
leading to per-round time complexity $\O(|\A_t|K^3d^2+K^3d^3)$ and space complexity $\O(K^2d^2)$,
which are prohibitive in high-dimensional settings.

The bottleneck comes from the role of a $Kd \times Kd$ accumulated Hessian in both parameter estimation and action selection.
The ONS parameter update requires solving a quadratic-constrained quadratic program (QCQP),
while the optimistic reward construction involves high-dimensional spectral norm evaluations;
both scale poorly with the Hessian dimension.
Although matrix sketching has been successfully used to accelerate stochastic linear bandits \citep{kuzborskij2019efficient}, the nonlinear multinomial logistic model prevents analogous closed-form reductions in the associated QCQP and spectral norm computations.
Consequently, existing sketching techniques do not directly remove these MLogB bottlenecks.
Similar Hessian-induced bottlenecks also arise in ONS-based generalized linear bandits \citep{zhang2026GLBonepass} and other second-order MLogB algorithms \citep{Lee2024r2cs,boudart2025enjoyNonLinear}.

We propose EOFD-MLogB, an efficient variant of OFUL-MLogB based on frequent directions (FD) matrix sketching \citep{ghashami2016fd}.
By maintaining a rank-$m$ SVD sketch of the accumulated Hessian, EOFD-MLogB reduces the space complexity from $\O(K^2d^2)$ to $\O(Kd(m+K))$.
Moreover, the sketched spectral structure reduces the $Kd$-dimensional QCQP in ONS to a one-dimensional root-finding problem,
and reduces the $Kd\times K$ spectral norm computations in optimistic reward construction to $K\times K$ computations.
As a result, the per-round computational cost becomes $\O(|\A_t|(K^3+mK^2+dmK)+dK(m+K)^2)$.

On the theoretical side, we prove a regret bound $\Ot(\tail{T}(Kd\ln\tail{T}+m)\sqrt{T})$, where $\tail{T} = 1 + \rho_{1:T} / \lambda$ is the sketching factor, and $\rho_{1:T}$ is the sketching error term bounded by the $m$-truncated spectral tail of the Hessian.
Thus, when the accumulated Hessian is approximately low-rank, EOFD-MLogB achieves regret close to the $\Ot(Kd\sqrt{T})$ regret of OFUL-MLogB while substantially improving computational efficiency.
Finally, experiments on \texttt{MNIST}, synthetic data, and additional real datasets demonstrate significant speedups with competitive regret performance.

\paragraph{Notations.}
For $W\in\R^{K\times d}$, let $\Wv\in\R^{Kd}$ denote its row-wise vectorization.
We use $\|\cdot\|_F$ and $\|\cdot\|_2$ for Frobenius and spectral norms, respectively,
$\otimes$ for the Kronecker product, and $\|\x\|_Z=\sqrt{\x^\top Z\x}$ for the norm induced by a positive semidefinite matrix $Z$.
We use $\O(\cdot)$ to hide constants, highlighting the dependency on $m,d,K,\kappa,T$,
and $\Ot(\cdot)$ to further hide polylogarithmic factors.


\section{Related Work}

In this section, we review prior work related to MLogB and frequent directions.

\subsection{MLogB and OFUL-MLogB} \label{ssec:logistic-bandit-setting}
MLogB extends logistic bandits ($K=1$), a canonical instance of generalized linear bandits (GLB) \citep{filippi2010glb,li2017provablyGLB,ding2021sgdts4glb}.
For logistic bandits, a UCB-type algorithm OL$^2$M \citep{zhang2016logistic} employed ONS to estimate the parameter;
subsequent works \citep{abeille2021minimax,faury2022jointly-logistic-bandit} improved the regret bound to the optimal order.
The multinomial case ($K>1$) is more challenging, because maximizing the expected multinomial logistic reward is generally a non-convex problem.
The improved variant of MNL-UCB \citep[Appendix C]{amani2021ucb-based-MNL} and subsequent UCB-type algorithms \citep{Lee2024r2cs} handle this non-convexity using second-order optimistic reward approximations,
but rely on batch MLE and storage of the full interaction history.
The state-of-the-art ONS-based OFUL-MLogB algorithm \citep{zhang2023mlb} combines second-order Taylor expansion of optimistic rewards with curvature-aware online Newton step (ONS) updates,
eliminating full-history storage but requiring the maintenance and manipulation of a $Kd\times Kd$ Hessian matrix.

We next summarize the components of OFUL-MLogB that are relevant to our method.
After selecting $\x_t$ and observing $y_t$, we define the logistic loss $\ell_t(W)=-\ln[\sig(W\x_t)]_{y_t}$.
OFUL-MLogB maintains an estimate $W_t$ of $W_*$, updated by a proximal ONS step each round:
\begin{equation} \label{eq:oful-mlogb-W-update}
	W_{t+1}
	=
	\argmin_{W\in\W}
	\underbrace{
		\left< \nabla \ell_t(W_t), \Wv - \Wv_t \right>
		+
		\frac{1}{2}
		\left\| \Wv - \Wv_t \right\|_{\nabla^2 \ell_t(W_t)}^2
	}_{\ellt_t(W)}
	+
	\frac{1}{2\eta}
	\left\| \Wv - \Wv_t \right\|_{H_t}^2 ,
\end{equation}
where $\ellt_t(W)$ is a second-order surrogate of $\ell_t$ at $W_t$ up to an additive constant.
The Hessian is updated as
$H_{t+1}=H_t+\nabla^2\ell_t(W_{t+1})$, with $H_1=\lambda I_{Kd}$.

For action selection, OFUL-MLogB chooses $\x_t$ according to the UCB principle under probability at least $1-\delta$:
\begin{equation} \label{eq:oful-mlogb-decision-rule}
	\x_t = \argmax_{\x \in \A_t} R_t(\x,\delta),
\end{equation}
where
\begin{equation} \label{eq:oful-mlogb-optimistic-reward}
	R_t(\x,\delta)
	=
	\begin{cases}
		\max_{\w \in \C_t} \sigma(\w^\top \x), & K = 1 \\
		\rwdv^\top \sig(W_t \x) + b_t(\x,\delta), & K \ge 2
	\end{cases}.
\end{equation}
Here $\C_t=\{W\in\R^{K\times d}\mid \|\Wv-\Wv_t\|_{H_t}^2\le \beta_t(\delta)\}$ is the confidence region, and $b_t(\x,\delta)$ is the optimistic bonus whose definition is deferred to \eqref{eq:oful-mlogb-optimistic-reward-difference} in Section~\ref{ssec:computation-obstacle}.
When $K=1$, we assume $(r_0,r_1)=(0,1)$, so the expected reward reduces to the scalar logistic reward $\sigma(\w^\top\x)$, and $\C_t$ reduces to the corresponding vector confidence set; see Appendix~\ref{ssec:apd-preliminary}.
These optimistic constructions ensure, with probability at least $1-\delta$, that $\rwdv^\top\sig(W_*\x)$ is upper-bounded by $R_t(\x,\delta)$ for all candidate actions $\x$.
We refer readers to Theorem~\ref{thm:confidence-region} and Lemma~\ref{lem:diff-rwd-upper-bound} for the corresponding guarantee.

Although OFUL-MLogB achieves a near-optimal regret bound $\Ot(Kd\sqrt{T})$ in the multinomial case \citep[Appendix~C.5]{zhang2023mlb}, its dependence on the full $Kd\times Kd$ Hessian leads to costly $\O(|\A_t|K^3d^2+K^3d^3)$ time and $\O(K^2d^2)$ space per round in high-dimensional settings.
Section~\ref{ssec:computation-obstacle} details these computational obstacles.


\subsection{Matrix Sketching and Frequent Directions} \label{ssec:fd}
Matrix sketching is a standard tool for reducing the cost of maintaining high-dimensional covariance matrices.
For a sketch size $m\ll d$ and a stream of vectors $\x_1,\ldots,\x_t\in\R^d$, a sketch maintains $S_t\in\R^{m\times d}$ such that $S_t^\top S_t\approx \sum_{s=1}^t \x_s\x_s^\top$.
Unlike randomized methods such as random projection \citep{baraniuk2010random-projection} or column selection \citep{drineas2006columnselection}, Frequent Directions (FD) \citep{ghashami2016fd} is deterministic and operates efficiently in a streaming fashion.

At step $t$, FD updates the sketch by appending $\x_t$ to $S_{t-1}$,
forming $\hat S_t=[S_{t-1}^\top,\x_t]^\top\in\R^{(m+1)\times d}$, and computes its SVD
\begin{equation}
	U\Sigma V^\top=\hat{S}_t
. \end{equation}
Since $\hat{S}_t^\top\hat{S}_t=V\Sigma^2V^\top=S_{t-1}^\top S_{t-1}+\x_t\x_t^\top$,
FD keeps the top $m$ right singular directions and shrinks the singular values:
\begin{equation}
	S_t = \sqrt{\Sigma_{1:m}^2-\sigma_{m+1}^2 I_m}\,V_{:,1:m}^\top
, \end{equation}
where $\sigma_{m+1}$ is the $(m+1)$-th singular value,
$\Sigma_{1:m}=\mathop{\mathrm{diag}}(\sigma_1,\dots,\sigma_m)$ contains the top $m$ singular values,
and $V_{:,1:m}$ contains the corresponding right singular vectors.

FD has been used in stochastic linear bandits \citep{kuzborskij2019efficient,chen2021efficient,wen2024currentpitfalls} and online convex
optimization \citep{luo2016ONSsketchig,luo2019rfd,wan2022adafd,yang2025dimensionfree}.
In stochastic linear bandits (SLB), sketching is effective because both the parameter update and the UCB bonus admit closed-form reductions via the Woodbury identity \citep{kuzborskij2019efficient}.
In contrast, the nonlinear multinomial logistic model in MLogB does not admit the same closed-form sketching reductions, so new spectral reductions are needed.
We detail the resulting computational obstacles in Section~\ref{ssec:computation-obstacle}.

\section{Efficient Algorithms for MLogB} \label{sec:fd-oful-mlogb}

We operate under the following standard assumptions \citep{amani2021ucb-based-MNL, zhang2023mlb}:
\begin{assumption} \label{asmp:boundedness}
The action set $\A_t$, the true parameter $\Ws$, and the reward vector $\rwdv \in \R^K_+$ are bounded:
$\sup_{\x \in \A_t} \|\x\|_2 \le D$, $\|\Ws\|_F \le S$, and $\|\rwdv\|_2 \le R$.
Without loss of generality, we set $r_0 = 0$ as a dummy outcome,
and ignore $0$-th dimension in the subsequent analysis since it does not contribute to the reward and the regret.
See Appendix~\ref{ssec:apd-preliminary} for more details.
\end{assumption}
\begin{assumption} \label{asmp:derivative-bound}
The multinomial logistic mapping $\sig(\cdot)$ satisfies strong convexity condition:
for any $W \in \W$ and $\x \in \A_t$, there exist constants $\kappa > 0$ and $L \in [0, 1]$ such that $\frac{1}{\kappa} I_K \preceq \nabla \sig(W \x) \preceq L I_K$.
\end{assumption}
The constant $\kappa$ typically depends exponentially on the domain diameter $SD$ \citep[Section 3]{amani2021ucb-based-MNL}.

\subsection{Computational Obstacles} \label{ssec:computation-obstacle}
We analyze the two Hessian-induced bottlenecks that hinder the scalability of OFUL-MLogB.

\paragraph{Constrained optimization for parameter estimation.}
The proximal ONS update \eqref{eq:oful-mlogb-W-update} can be decomposed into an unconstrained gradient step followed by an ellipsoidal projection:
\begin{align}
	\Wv' &= \Wv_t - \eta \Ht_t^{-1} \nabla \ell_t(W_t),
	\label{eq:oful-mlogb-W-gradient} \\
	W_{t+1} &= \argmin_{\|W\|_F \le S} \left\|\Wv-\Wv'\right\|_{\Ht_t}^2 ,
	\label{eq:oful-mlogb-W-projection}
\end{align}
where $\Ht_t = H_t+\eta\nabla^2\ell_t(W_t)$.
The gradient step requires solving a linear system involving the $Kd\times Kd$ matrix $\Ht_t$,
and the projection is a QCQP without a closed-form solution for a general Hessian.
Standard exact methods require a full eigen-decomposition \citep{Hazan2007ons}, leading to $\O(K^3d^3)$ time per round.
Moreover, simply replacing $H_t$ by a sketched matrix $S_t^\top S_t+\lambda I$ does not by itself yield an efficient projection,
which motivates our explicit use of the sketch's spectral form.

\paragraph{High-dimensional spectral norms in optimistic rewards.}
For $K\ge 2$, evaluating the optimistic reward $R_t(\x,\delta)$ in \eqref{eq:oful-mlogb-optimistic-reward} requires the bonus
\begin{equation} \label{eq:oful-mlogb-optimistic-reward-difference}
	b_t(\x, \delta)
	=
	\sqrt{\beta_t(\delta)}
	\left\|(\nabla \sig(W_t \x)\otimes \x)\rwdv\right\|_{H_t^{-1}}
	+
	\frac{1}{2}\|\rwdv\|_1\beta_t(\delta)
	\underbrace{
	\left\|H_t^{-\frac12}(I_K\otimes \x)\right\|_2^2
	}_{\text{spectral norm term}} .
\end{equation}
The dominant cost lies in the spectral norm term.
Specifically, computing the inverse square root $H_t^{-\frac{1}{2}}$ incurs $\Omega(K^2d^2)$ time ($\O(K^3d^3)$ via eigen-decomposition) and $\O(K^2d^2)$ space overhead.
Even after $H_t^{-\frac12}$ is formed, computing $\|H_t^{-\frac12}(I_K\otimes\x)\|_2$ still demands $\O(K^3d^2)$ time.
The eigenvalue-based implementation of OFUL-MLogB costs $\O(K^3d^3)$ time for each action \citep[C.3.4]{zhang2023mlb}.
Thus, action selection becomes impractical in high-dimensional settings.
The same type of bottleneck appears in other MLogB algorithms using second-order optimistic bonuses \citep{lee2025ImprovedConfidBound4MLogB,boudart2025enjoyNonLinear}.

\paragraph{Contrast with SLB.}
In stochastic linear bandits,
sketching is straightforward because the least-squares parameter update $\hat{\theta}=H_t^{-1}\sum_{s=1}^ty_t\x_t$ and the UCB bonus $\|\x\|_{H_t^{-1}}$ both admit closed-form Woodbury reductions to matrix-vector products \citep{kuzborskij2019efficient}.
MLogB lacks this structure:
the nonlinear model leads to a constrained second-order projection \eqref{eq:oful-mlogb-W-projection} for parameter estimation and high-dimensional spectral norms \eqref{eq:oful-mlogb-optimistic-reward-difference} in the optimistic bonus.
These two obstacles preclude a direct application of standard sketching techniques and motivate the spectral reductions developed below.

\subsection{EOFD-MLogB} \label{ssec:fd-oful-mlogb}
We propose EOFD-MLogB (\textbf{E}fficient \textbf{O}ptimism via \textbf{F}requent \textbf{D}irections),
whose time and space complexities scale linearly with the dimension $d$.

\subsubsection{Spectral Maintenance via Adapted FD}
Unlike prior works that maintain the sketch matrix directly \citep{kuzborskij2019efficient,luo2016ONSsketchig},
EOFD-MLogB maintains a low-rank spectral decomposition of the sketched Hessian.
We approximate $H_t=\lambda I_{Kd}+\sum_{s=1}^{t-1}\nabla^2\ell_s(W_{s+1})$ by
\begin{equation}
	Z_t=\lambda I_{Kd}+V_t\Lambda_tV_t^\top
, \end{equation}
where $\Lambda_t\in\R^{m\times m}$ is diagonal and positive semidefinite,
and $V_t\in\R^{Kd\times m}$ has orthonormal columns.
This spectral form is crucial for the efficient parameter update and optimistic reward computation below.

Standard FD is designed for vector streams, while each Hessian update in MLogB is a rank-$K$ matrix: $\nabla^2\ell_t(W_{t+1}) = \nabla\sig(W_{t+1}\x_t)\otimes \x_t\x_t^\top$ (cf.~Appendix~\ref{ssec:apd-preliminary}).
To apply FD, we perform eigen-decomposition
\begin{equation}
	PDP^\top = \mathop{\mathrm{EIG}}(\nabla\sig(W_{t+1}\x_t))
. \end{equation}
Then the Hessian update admits the block factorization
\begin{equation}
	\nabla^2\ell_t(W_{t+1}) = B_tB_t^\top
	, \qquad
	B_t = P D^{1/2}\otimes \x_t \in \R^{Kd\times K}
. \end{equation}
EOFD-MLogB performs a single block FD update by applying FD to $[V_t\Lambda_t^{1/2}, B_t]\in\R^{Kd\times(m+K)}$.
Specifically, compute
\begin{equation}
	\bar U\bar\Sigma\bar V^\top = \mathop{\mathrm{SVD}}\!\left([V_t\Lambda_t^{1/2},B_t]^\top\right)
, \end{equation}
and update
\begin{equation}
	\Lambda_{t+1} = \bar\Sigma_{1:m}^2-\bar\sigma_{m+1}^2I_m
	, \qquad
	V_{t+1} = \bar V_{:,1:m}
. \end{equation}
where $\bar\sigma_{m+1}$ is the $(m+1)$-th singular value,
$\bar\Sigma_{1:m}=\mathop{\mathrm{diag}}(\sigma_1,\dots,\sigma_m)$ contains the top $m$ singular values,
and $\bar V_{:,1:m}$ contains the corresponding right singular vectors.
The discarded mass $\bar\sigma_{m+1}^2$ is accumulated into the sketching error $\rho_{1:t}=\rho_{1:t-1}+\bar\sigma_{m+1}^2$.
The procedure is summarized in Algorithm~\ref{algo:adpfd}.

\begin{algorithm}[tb]
\caption{Adapted FD for Matrix-Valued Updates} \label{algo:adpfd}
\KwIn{Sketch size $m$, Hessian update $\nabla^2 \ell_t(W_{t+1})=\nabla\sig(W_{t+1}\x_t)\otimes \x_t\x_t^\top$, current sketch $(V_t,\Lambda_t,\rho_{1:t-1})$}
$PDP^\top = \mathop{\mathrm{EIG}}(\nabla\sig(W_{t+1}\x_t))$ \\
$B_t = PD^{\frac{1}{2}}\otimes \x_t$ \\
$\bar U\bar\Sigma\bar V^\top = \mathop{\mathrm{SVD}}([V_t\Lambda_t^{\frac12},B_t]^\top)$ \\
$\bar\sigma_{m+1} = \bar\Sigma_{m+1,m+1}$ \\
$\Lambda_{t+1} = \bar\Sigma_{1:m}^2-\bar\sigma_{m+1}^2I_m$ \\
$V_{t+1} = \bar V_{:,1:m}$ \\
$\rho_{1:t} = \rho_{1:t-1}+\bar\sigma_{m+1}^2$ \\
\Return{$(V_{t+1},\Lambda_{t+1},\rho_{1:t})$}
\end{algorithm}

\subsubsection{Optimistic Reward Construction and Parameter Update}
EOFD-MLogB replaces the full Hessian $H_t=\lambda I_{Kd}+\sum_{s=1}^{t-1}\nabla^2\ell_s(W_{s+1})$ by the sketched Hessian $Z_t$ in both parameter estimation \eqref{eq:oful-mlogb-W-update} and optimistic reward construction \eqref{eq:oful-mlogb-optimistic-reward}.

In each iteration $t$, given the current estimator $W_t$, the algorithm selects an action $\x_t$ according to the optimistic principle.
In the binary case ($K=1$), the decision rule simplifies to a linear estimation term plus a bonus vector norm term \citep{zhang2016logistic} due to the monotonicity of $\sigma(\cdot) \in (0, 1)$,
\begin{equation}
	\x_t = \argmax_{\x\in\A_t} \max_{\w\in\C_t} \sigma(\w^\top\x)
	\iff
	\x_t = \argmax_{\x\in\A_t} \w_t^\top \x + \sqrt{\beta_t(\delta)} \|\x\|_{Z_t^{-1}}
. \end{equation}
The exploration bonus $\left\| \x \right\|_{Z_t^{-1}}$ can be computed efficiently using the Woodbury identity:
$Z_t^{-1} = \frac{1}{\lambda} (I_d - V_t \hat{\Lambda}_t V_t^\top)$, where $\hat{\Lambda}_t = \Lambda_t(\lambda I_m + \Lambda_t)^{-1}$.
In the multinomial case ($K \ge 2$),
the first norm term in \eqref{eq:oful-mlogb-optimistic-reward-difference} is computed using the same Woodbury identity for $Z_t^{-1}$,
and the second term requires evaluating the spectral norm $\|Z_t^{-\frac{1}{2}}(I_K\otimes\x)\|_2^2$.
The following lemma exploits the spectral form $Z_t=\lambda I+V_t\Lambda_tV_t^\top$ to reduce this term to the spectral norm of a $K\times K$ matrix.
\begin{lemma} \label{lem:fd-oful-mlogb-efficient-reward}
For any $\x \in \A_t$, let $Z_t = \lambda I_{Kd} + V_t \Lambda_t V_t^\top$.
The spectral norm satisfies
\begin{equation}
	\left\| Z_t^{-\frac{1}{2}} (I_K \otimes \x) \right\|_2^2 = \|A(\x)\|_2
, \end{equation}
where $A(\x) \in \R^{K \times K}$ is a symmetric matrix with entries given by
\begin{equation} \label{eq:fd-oful-mlogb-efficient-reward-A}
	A(\x)_{ij}
	=
	\sum_{k=1}^{m} \left(\frac{1}{\mu_k + \lambda} - \frac{1}{\lambda}\right) \left< \x, \bu_{ki} \right> \left< \x, \bu_{kj} \right>
	+
	\frac{\|\x\|_2^2}{\lambda} \delta_{ij}
. \end{equation}
Here, $\mu_k$ is the $k$-th diagonal element of $\Lambda_t$.
$\bu_{ki} \in \R^d$ denotes the sub-vector extracted from indices $(i-1)d+1$ to $id$ of $V_t$'s $k$-th column,
and $\delta_{ij}$ denotes the Kronecker delta.
\end{lemma}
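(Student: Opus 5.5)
The plan is to reduce the $Kd\times K$ spectral norm to the spectral norm of a $K\times K$ Gram-type matrix, using the identity $\|M\|_2^2 = \|M^\top M\|_2$ with $M = Z_t^{-\frac12}(I_K\otimes\x)\in\R^{Kd\times K}$. This gives
\begin{equation*}
	\left\|Z_t^{-\frac12}(I_K\otimes\x)\right\|_2^2
	=
	\left\|(I_K\otimes\x)^\top Z_t^{-1}(I_K\otimes\x)\right\|_2 ,
\end{equation*}
because $Z_t$ is symmetric positive definite, so $Z_t^{-\frac12}$ is symmetric and $(Z_t^{-\frac12})^\top Z_t^{-\frac12}=Z_t^{-1}$. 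It therefore suffices to show that $A(\x) := (I_K\otimes\x)^\top Z_t^{-1}(I_K\otimes\x)$ has exactly the entries claimed in \eqref{eq:fd-oful-mlogb-efficient-reward-A}. Symmetry of $A(\x)$ is then immediate, and so is positive semidefiniteness, which is convenient but not needed.

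Next I expand $Z_t^{-1}$ in the eigenbasis. Since $V_t$ has orthonormal columns $\bv_1,\dots,\bv_m$ (write $\bv_k$ for the $k$-th column), complete them to an orthonormal basis of $\R^{Kd}$. Then
\begin{equation*}
	Z_t^{-1}=\frac{1}{\lambda}I_{Kd}+\sum_{k=1}^m\Big(\frac{1}{\mu_k+\lambda}-\frac1\lambda\Big)\bv_k\bv_k^\top .
\end{equation*}
This is equivalent to the Woodbury form $Z_t^{-1}=\frac1\lambda(I-V_t\hat\Lambda_tV_t^\top)$ stated before the lemma, since $\frac{1}{\lambda}\cdot\frac{\mu_k}{\lambda+\mu_k}=\frac1\lambda-\frac{1}{\mu_k+\lambda}$.

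Then I compute the entries. The $i$-th column of $I_K\otimes\x$ is $\e_i\otimes\x$, i.e.\ $\x$ placed in block $i$, where $\e_i$ denotes the $i$-th standard basis vector of $\R^K$. Hence $(\e_i\otimes\x)^\top(\e_j\otimes\x)=\|\x\|_2^2\delta_{ij}$. Also $(\e_i\otimes\x)^\top\bv_k=\langle\x,\bu_{ki}\rangle$, because $\bu_{ki}$ is exactly block $i$ of $\bv_k$ under the row-wise vectorization convention. Substituting these into $(\e_i\otimes\x)^\top Z_t^{-1}(\e_j\otimes\x)$ yields the stated formula term by term.

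There is no real obstacle here. The only care points are two checks. The first is that the block indexing of $I_K\otimes\x$ matches the definition of $\bu_{ki}$ as indices $(i-1)d+1$ through $id$; this holds for the Kronecker product convention $I_K\otimes\x$. The second is justifying $\|M\|_2^2=\|M^\top M\|_2$, which follows because both sides equal the largest squared singular value of $M$.
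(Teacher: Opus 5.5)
Your proposal is correct and follows essentially the same route as the paper's proof. Both reduce the squared spectral norm to the largest eigenvalue of the $K\times K$ Gram matrix $(I_K\otimes\x)^\top Z_t^{-1}(I_K\otimes\x)$. Both then expand $Z_t^{-1}=\frac{1}{\lambda}I_{Kd}+\sum_{k=1}^m\bigl(\frac{1}{\mu_k+\lambda}-\frac{1}{\lambda}\bigr)\bv_k\bv_k^\top$ via an orthonormal completion of the columns of $V_t$, and read off the entries blockwise using $(\e_i\otimes\x)^\top\bv_k=\langle\x,\bu_{ki}\rangle$.
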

Based on Lemma~\ref{lem:fd-oful-mlogb-efficient-reward}, the optimistic reward becomes
\begin{equation} \label{eq:fd-oful-mlogb-efficient-reward}
	R_t(\x, \delta)
	= \rwdv^\top \sig(W_t \x) + \sqrt{\beta_t(\delta)} \left\| (\nabla \sig(W_t \x) \otimes \x) \rwdv \right\|_{Z_t^{-1}}
	+ \frac{1}{2} \|\rwdv\|_1 \beta_t(\delta) \|A(\x)\|_2
. \end{equation}

After submitting $\x_t$ and receiving feedback $y_t$, the algorithm updates the parameter estimate.
Since the parameter update in \eqref{eq:oful-mlogb-W-gradient} and \eqref{eq:oful-mlogb-W-projection} involves $\Ht_t = H_t + \eta \nabla^2 \ell_t(W_t)$,
the algorithm maintains a surrogate Hessian $\Zt_t = Z_t + \eta \nabla^2 \ell_t(W_t)$ and decomposes it into spectral form.
Similar to Algorithm~\ref{algo:adpfd}, we compute the eigen-decomposition of $\nabla \sig(W_t \x_t)$ and then perform an SVD on the concatenated matrix:
\begin{align}
	\nabla \sig(W_t \x_t) &= P D P^\top
	\label{eq:fd-oful-mlogb-zt-decomposition}
	\\
	\Ut_t \Sigt_t \Vt_t^\top
	&= \mathop{\mathrm{SVD}}\left([V_t \Lambda_t^{\frac{1}{2}}, P (\eta D)^{\frac{1}{2}} \otimes \x_t]^\top\right)
	\label{eq:fd-oful-mlogb-zt-svd}
	\\
	\Lamt_t &= \Sigt_t^2
	\label{eq:fd-oful-mlogb-zt-lambda}
\end{align}
This yields the decomposition $\Zt_t = \lambda I_{Kd} + \Vt_t \Lamt_t \Vt_t^\top$.
Here, $\Vt_t\in\R^{Kd\times(m+K)}$ has orthonormal columns and $\Lamt_t\in\R^{(m+K)\times(m+K)}$ is diagonal.

The gradient step \eqref{eq:oful-mlogb-W-gradient} is computed as $\Wv' = \Wv_t - \eta \Zt_t^{-1} \nabla \ell_t(W_t)$,
which can be efficiently calculated using the Woodbury identity $\Zt_t^{-1} = \frac{1}{\lambda} \left( I_{Kd} - \Vt_t \Lamt_t (\lambda I_{m+K} + \Lamt_t)^{-1} \Vt_t^\top \right)$.
To avoid the high-dimensional projection in \eqref{eq:oful-mlogb-W-projection},
the algorithm leverages the spectral structure of $\Zt_t = \lambda I + \Vt_t \Lamt_t \Vt_t^\top$ to reduce it to a one-dimensional root-finding problem,
as formalized in the following lemma:
\begin{lemma} \label{lem:fd-oful-mlogb-efficient-W-update}
Let $\Zt_t = \lambda I_{Kd} + \Vt_t \Lamt_t \Vt_t^\top$, where $\bvt_i$ is the $i$-th column of $\Vt_t$ and $\mu_i$ is the $i$-th diagonal
element of $\Lamt_t$.
If $\|\Wv'\|_2 \le S$, then $\Wv_{t+1} = \Wv'$.
Otherwise, the projection is given by:
\begin{equation} \label{eq:fd-oful-mlogb-efficient-W-update-W}
	\Wv_{t+1} = \left(\frac{\lambda}{\lambda + \nu} I + \frac{\nu}{\lambda + \nu} \Vt_t \Lamt_t M \Vt_t^\top\right) \Wv'
, \end{equation}
where $M = ((\lambda + \nu)I_{m+K} + \Lamt_t)^{-1}$ is diagonal.
The dual variable $\nu > 0$ is the unique root of
\begin{equation} \label{eq:fd-oful-mlogb-efficient-W-update-nu}
	f(\nu)= S^2
, \end{equation}
where $f(\nu)$ is a monotonically decreasing function on the interval $(0, +\infty)$
\begin{equation}
	f(\nu) = \frac{(\|\Wv'\|_2^2-\|\Vt_t^\top\Wv'\|_2^2) \lambda^2}{(\lambda + \nu)^2} + \sum_{i=1}^{m+K} \frac{(\bvt_i^\top \Wv')^2 (\mu_i + \lambda)^2}{(\mu_i + \lambda + \nu)^2}
. \end{equation}
\end{lemma}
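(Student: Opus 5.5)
We prove the statement by writing down the KKT conditions of the projection \eqref{eq:oful-mlogb-W-projection} with $\Ht_t$ replaced by $\Zt_t$, namely
\[
\min_{\Wv}\ \|\Wv-\Wv'\|_{\Zt_t}^2 \quad \text{s.t.}\quad \|\Wv\|_2^2\le S^2 ,
\]
and then diagonalizing everything in the eigenbasis of $\Zt_t$. This problem is strongly convex because $\Zt_t\succeq\lambda I$. Slater's condition holds, since $\Wv=0$ is strictly feasible. Hence KKT is necessary and sufficient and the minimizer is unique.

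\textbf{Trivial case.} If $\|\Wv'\|_2\le S$, the point $\Wv'$ is feasible and attains objective value $0$, so $\Wv_{t+1}=\Wv'$.

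\textbf{Stationarity.} Otherwise the constraint must be active: if it were not, stationarity would give $\Wv=\Wv'$, which is infeasible. So there is a multiplier $\nu>0$ with
\[
\Zt_t(\Wv-\Wv')+\nu\Wv=0, \qquad\text{i.e.}\qquad \Wv=(\Zt_t+\nu I)^{-1}\Zt_t\Wv' .
\]

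\textbf{Spectral form.} Complete $\Vt_t$ to an orthonormal basis of $\R^{Kd}$. Then $\Zt_t$ has eigenvalue $\mu_i+\lambda$ on $\bvt_i$, and eigenvalue $\lambda$ on the orthogonal complement, whose projector is $I-\Vt_t\Vt_t^\top$. Hence
\[
(\Zt_t+\nu I)^{-1}\Zt_t=\frac{\lambda}{\lambda+\nu}\bigl(I-\Vt_t\Vt_t^\top\bigr)+\Vt_t\,\mathrm{diag}\!\Bigl(\frac{\mu_i+\lambda}{\mu_i+\lambda+\nu}\Bigr)\Vt_t^\top .
\]
To match \eqref{eq:fd-oful-mlogb-efficient-W-update-W}, use the scalar identity
\[
\frac{\mu_i+\lambda}{\mu_i+\lambda+\nu}-\frac{\lambda}{\lambda+\nu}=\frac{\nu\mu_i}{(\lambda+\nu)(\mu_i+\lambda+\nu)} .
\]
This coefficient equals the $i$-th diagonal entry of $\frac{\nu}{\lambda+\nu}\Lamt_tM$, where $M=((\lambda+\nu)I+\Lamt_t)^{-1}$ is diagonal. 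This gives exactly the claimed formula.

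\textbf{Equation for $\nu$.} Complementary slackness forces $\|\Wv\|_2^2=S^2$. By Pythagoras in the orthonormal eigenbasis,
\[
\|\Wv\|_2^2=\frac{\lambda^2}{(\lambda+\nu)^2}\bigl\|(I-\Vt_t\Vt_t^\top)\Wv'\bigr\|_2^2+\sum_{i=1}^{m+K}\frac{(\mu_i+\lambda)^2}{(\mu_i+\lambda+\nu)^2}\bigl(\bvt_i^\top\Wv'\bigr)^2 .
\]
Since $\Vt_t$ has orthonormal columns, $\|(I-\Vt_t\Vt_t^\top)\Wv'\|_2^2=\|\Wv'\|_2^2-\|\Vt_t^\top\Wv'\|_2^2$. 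So $\|\Wv\|_2^2=f(\nu)$, and $\nu$ must solve $f(\nu)=S^2$.

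\textbf{Uniqueness of the root.} This is the step needing most care. Each coefficient of $f$ is nonnegative, because $\mu_i\ge0$ and the complement mass is nonnegative. Each term $c^2/(c+\nu)^2$ is strictly decreasing in $\nu$ for $c>0$. Since $\Wv'\neq0$ (indeed $\|\Wv'\|_2>S$), some coefficient is positive, so $f$ is strictly decreasing and continuous on $[0,\infty)$.

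At the endpoints, $f(0)=\|\Wv'\|_2^2>S^2$ and $f(\nu)\to0<S^2$ as $\nu\to\infty$. The intermediate value theorem then gives a unique root $\nu>0$. This $\nu$ together with $\Wv$ satisfies all KKT conditions, so the resulting $\Wv$ is the unique projection $\Wv_{t+1}$.
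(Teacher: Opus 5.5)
Your proof is correct and follows essentially the same route as the paper: Slater and KKT, stationarity $\Wv=(\Zt_t+\nu I)^{-1}\Zt_t\Wv'$, ruling out $\nu=0$ by infeasibility, complementary slackness giving $f(\nu)=S^2$ in the completed eigenbasis, and monotonicity with endpoint limits for uniqueness. The only differences are minor: you read the closed form off the eigendecomposition of $\Zt_t$ rather than using the Woodbury identity, which avoids the paper's full-rank assumption on $\Lamt_t$, and your strict-monotonicity argument and final check that the KKT point is the projection are slightly more careful than the paper's.
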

The coefficients in \eqref{eq:fd-oful-mlogb-efficient-W-update-nu} can be precomputed before root-finding.
Since $f(\nu)$ is a scalar function, $\nu$ can be found efficiently via Newton's method or bisection, and errors can be controlled within machine precision.
Algorithm~\ref{algo:fd-oful-mlogb} summarizes the complete EOFD-MLogB procedure.

\begin{algorithm}[tb]
\caption{EOFD-MLogB} \label{algo:fd-oful-mlogb}
\KwIn{regularization parameter $\lambda$, confidence level $\delta$, step size $\eta$, parameter domain diameter $S$, sketch size $m$}
Initialize $W_1 = 0$, $\Lambda_1 = 0_{m\times m}$, arbitrary orthonormal $V_1\in\R^{Kd\times m}$, $\rho_{1:0} = 0$; define $Z_t=\lambda I_{Kd}+V_t\Lambda_tV_t^\top$. \\
\For{$t = 1, 2, \dots, T$}{
	\eIf{$K = 1$}{
		$\x_t = \argmax_{\x \in \A_t} \w_t^\top \x + \sqrt{\beta_t(\delta)} \|\x\|_{Z_t^{-1}}$
	}{
		Construct $A(\x)$ by \eqref{eq:fd-oful-mlogb-efficient-reward-A} for all $\x \in \A_t$, and choose $\x_t = \argmax_{\x \in \A_t} R_t(\x, \delta)$ by \eqref{eq:fd-oful-mlogb-efficient-reward}
	}
	Submit $\x_t$ and receive feedback $y_t$ \\
	Form $\Zt_t = \lambda I_{Kd} + \Vt_t \Lamt_t \Vt_t^\top$ by \eqref{eq:fd-oful-mlogb-zt-decomposition}--\eqref{eq:fd-oful-mlogb-zt-lambda} \\
	$\Wv' = \Wv_t - \eta \Zt_t^{-1} \nabla \ell_t(W_t)$ \\
	\eIf{$\|\Wv'\|_2 \le S$}{
		$\Wv_{t+1} = \Wv'$
	}{
		Find root $\nu$ of \eqref{eq:fd-oful-mlogb-efficient-W-update-nu}, and
		update $\Wv_{t+1}$ by \eqref{eq:fd-oful-mlogb-efficient-W-update-W}
	}
	Pass $\nabla^2 \ell_t(W_{t+1})$ to Algorithm~\ref{algo:adpfd} to update the sketch $(\Lambda_{t+1}, V_{t+1}, \rho_{1:t})$
}
\end{algorithm}

\paragraph{Computational Complexity.}
For space complexity, OFUL-MLogB stores the full $Kd\times Kd$ Hessian and hence requires $\O(K^2d^2)$ space.
In contrast, EOFD-MLogB maintains the sketched decompositions $(V_t,\Lambda_t)$ and $(\Vt_t,\Lamt_t)$, using $\O(Kd(m+K))$ space.
For time complexity, OFUL-MLogB costs $\O(|\A_t|K^3d^2+K^3d^3)$ per round, as discussed in Section~\ref{ssec:computation-obstacle}.
EOFD-MLogB evaluates each candidate action in $\O(Kdm+K^2m+K^3)$ time, while both the parameter update and the block FD update cost $\O(Kd(m+K)^2)$.
Thus, the total per-round time complexity is $\O(|\A_t|(Kdm+K^2m+K^3)+Kd(m+K)^2))$.
If we further emphasize the dependence on $d$, $m$ and $K$, the time complexity can be expressed as $\O(Kd(m+K)^2)$, which is linear in $d$, compared with the cubic dependence of OFUL-MLogB.

\subsection{Regret Analysis} \label{ssec:regret-analysis}
In this section, we establish theoretical guarantees for EOFD-MLogB.
Let $\tail{t}=1+\rho_{1:t}/\lambda$, where $\rho_{1:t}$ is the cumulative FD sketching error associated with $Z_t$, defined in line 7 of Algorithm~\ref{algo:adpfd}.
We first establish a confidence set for the unknown parameter $W_*$.

\begin{theorem}[Confidence Region] \label{thm:confidence-region}
Let $\alpha = 2 (1+SD)+\ln(K+1)$, and $\tail{t} = 1 + \rho_{1:t} / \lambda$.
Set $\lambda \ge \max\{2 L D^2, \alpha \max\{2 \sqrt6 S L D^2, 24 Kd\}\}$ and $\eta = \alpha/2$.
For any $\delta \in (0, 1)$, with probability at least $1 - \delta$:
\begin{equation}
	\Ws \in \C_t = \left\{\, W \in \W \,\middle|\, \left\| \Wv - \Wv_t \right\|_{Z_t}^2 \le \beta_t(\delta) \,\right\}, \quad \forall t \ge 1
, \end{equation}
where the radius
\begin{equation}
	\beta_t(\delta) = \O\left(
		\tail{t} \ln^2 K \ln t
		( m \ln t + Kd \ln \tail{t} + \ln \frac{1}{\delta} )
	\right)
. \end{equation}
\end{theorem}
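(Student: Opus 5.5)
We follow the ONS-based confidence-set analysis of OFUL-MLogB \citep{zhang2023mlb}, adapted in the style of sketched ONS analyses \citep{luo2016ONSsketchig,kuzborskij2019efficient}. The only change is that the full Hessian $H_t$ is replaced by $Z_t$, so the argument has to carry the FD error along.

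\textbf{Step 1: one-step inequality.} We write the projection step as $W_{t+1}=\argmin_{W\in\W}\ellt_t(W)+\frac{1}{2\eta}\|\Wv-\Wv_t\|_{Z_t}^2$ (this is exactly the update of Lemma~\ref{lem:fd-oful-mlogb-efficient-W-update}). The first-order optimality condition, tested against $\Ws$, gives a three-point inequality with two weighting matrices. The squared distance $\|\Wv_{t+1}-\Wvs\|^2$ appears weighted by $Z_t+\eta\nabla^2\ell_t(W_{t+1})$, and $\|\Wv_t-\Wvs\|^2$ appears weighted by $Z_t$. Replacing the Hessian at $W_t$ by the Hessian at $W_{t+1}$ is handled by self-concordance of the multinomial logistic loss, which holds with constant governed by $\sqrt6$ and $\ln(K+1)$. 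The conditions $\lambda\ge 2LD^2$ and $\lambda\ge \alpha\cdot 2\sqrt6 SLD^2$ absorb the resulting error terms.

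\textbf{Step 2: link consecutive sketches.} FD guarantees $Z_{t+1}\preceq Z_t+\nabla^2\ell_t(W_{t+1})$, since it only discards mass. Therefore the left-hand side of Step 1 dominates $\|\Wv_{t+1}-\Wvs\|_{Z_{t+1}}^2$, and the inequality telescopes. We then use the standard ONS argument that $\ell_t(W_t)-\ell_t(\Ws)$ minus its $\frac{1}{\alpha}$-curvature lower bound is a martingale term. Summing over $t$ gives
\[
\|\Wv_{t}-\Wvs\|_{Z_{t}}^2 \lesssim \lambda S^2 + \eta\sum_{s<t}\xi_s + \eta^2\sum_{s<t}\|\nabla\ell_s(W_s)\|_{\Zt_s^{-1}}^2 ,
\]
where $\xi_s$ is a martingale difference: the inner product of $\nabla\ell_s(\Ws)$ with $\Wv_s-\Wvs$, minus its expectation.

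\textbf{Step 3: martingale term.} We bound $\sum\xi_s$ with a Freedman-type inequality. Its conditional variance is controlled by $\sum_s\|\Wv_s-\Wvs\|^2_{\nabla^2\ell_s}$, and this sum is absorbed into the negative curvature terms left over from Step 1. A union bound over a geometric grid of $t$ then produces the $\ln t$ and $\ln\frac1\delta$ factors. The $\ln^2K$ factor comes from the $\ln(K+1)$ in $\alpha=\eta\cdot 2$ entering twice.

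\textbf{Step 4 (main obstacle): elliptical potential under sketching.} We must bound $\sum_s\|\nabla\ell_s\|^2_{\Zt_s^{-1}}$. First, $\nabla\ell_s=(\bm{e}_{y_s}-\sig)\otimes\x_s$ is dominated in PSD order by a multiple of the Hessian blocks, so the sum reduces to $\sum_s\tr(Z_{s+1}'^{-1}\nabla^2\ell_s)$. By the log-determinant argument this is at most $\ln\det(\lambda I+\sum_s\nabla^2\ell_s)-\ln\det(\cdot)$ of the sketch. This is where the true Hessian and the sketch must be reconciled.

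\begin{itemize}
\item The sketch is rank $m$ plus $\lambda I$, so the $\ln\det$ telescopes over the sketched matrices.
\item Each FD step shifts the spectrum by $\bar\sigma_{m+1}^2$. Tracking this per step as in \citet{luo2016ONSsketchig} gives a bound of order $m\ln(1+tD^2/(m\lambda))$ for the $m$ retained directions.
\item The remaining $Kd$ directions incur a multiplicative inflation. Using $H_t\preceq Z_t+\rho_{1:t}I\preceq \tail{t} Z_t$, we get $\|\cdot\|_{Z_t^{-1}}^2\le\tail{t}\|\cdot\|_{H_t^{-1}}^2$, and these directions contribute $Kd\ln\tail{t}$.
\end{itemize}

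The same inflation by $\tail{t}$ multiplies the whole radius. Together this yields $\beta_t=\O(\tail{t}\ln^2K\ln t(m\ln t+Kd\ln\tail{t}+\ln\frac1\delta))$. The condition $\lambda\ge 24\alpha Kd$ is used to make the per-step potential terms at most $1$, so that $x\le 2\ln(1+x)$ applies.

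Getting exactly $m\ln t+Kd\ln\tail{t}$ rather than $Kd\ln t$ is the delicate accounting step. If it fails, we would first try splitting the spectrum into the span of $V_t$ and its orthogonal complement. On the complement, $Z_t=\lambda I$, and the accumulated mass there is at most $\rho_{1:t}$.
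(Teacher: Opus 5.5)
\textbf{Gap: your route puts $\kappa$ into the radius.}
Two parts of your plan match the paper: the telescoping through $Z_{t+1}\preceq Z_t+\nabla^2\ell_t(W_{t+1})$, and the FD accounting in Step~4. That accounting uses $H_s\preceq\tail{s-1}Z_s$ (Lemma~\ref{lem:fd-reg-sketch-relation}) and then splits $\ln(|H_{t+1}|/|\lambda I|)$ into $Kd\ln\tail{t}+m\ln(1+tD^2/(m\lambda))$ (Lemma~\ref{lem:fd-det-origin}).

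The gap is that Steps~2--4 switch to an OL$^2$M-style ONS analysis. That analysis puts the loss at the predictable point $W_s$, adds a martingale term, and leaves a gradient-norm potential $\eta^2\sum_s\|\nabla\ell_s(W_s)\|^2_{\Zt_s^{-1}}$. This potential cannot be turned into a Hessian potential without paying $\kappa$. The domination $(\e_{y_s}-\sig)(\e_{y_s}-\sig)^\top\preceq c\,\nabla\sig(W_s\x_s)$ holds only with $c$ of order $\kappa$. Already for $K=1$ and $y=1$, requiring $(y-\sigma)^2\le c\,\sigma(1-\sigma)$ forces $c\ge(1-\sigma)/\sigma$, which can be about $\exp(SD)$.

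So your argument yields a radius of order $\kappa\,\tail{t}(\cdots)$. The theorem, with $\O(\cdot)$ explicitly tracking $\kappa$ and the paper's remark on $\kappa$-independence, claims a $\kappa$-free radius.

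Your steps are also internally inconsistent. The look-ahead inequality of Step~1 produces $\ell_t(\Ws)-\ell_t(W_{t+1})$, not a loss at $W_t$. It also spends the curvature at $W_{t+1}$ on telescoping, so no curvature is left to absorb your Freedman variance. That variance is measured in $\nabla^2\ell_s(\Ws)$, and converting it to $\nabla^2\ell_s(W_s)$ or $\nabla^2\ell_s(W_{s+1})$ costs another $\exp(\O(SD))$ factor.

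\textbf{Missing idea: the mix-loss argument.}
The paper inherits this from OFUL-MLogB (Lemmas~\ref{lem:cr-ell-regret} and~\ref{lem:cr-ell-regret-3}).

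First, keep the look-ahead form throughout. With $\eta=\alpha/2$, exp-concavity at $W_{t+1}$ gives a telescoping inequality whose only stochastic term is $\sum_s\ell_s(\Ws)-\ell_s(W_{s+1})$. No gradient-norm term appears.

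Second, because $W_{s+1}$ depends on $y_s$, compare it with the predictable Gaussian-mixture prediction $z_s=\sig^+(\E_{W\sim\mathcal N(W_s,(cZ_s)^{-1})}[\sig(W\x_s)])$. This splits the error into three pieces:
\begin{itemize}
\item $\ell_s(\Ws)-\ellh_s(z_s^\mu)$ is handled by a time-uniform martingale bound, which is where the $\ln t$ and $\ln\frac1\delta$ factors come from.
\item Smoothing $z_s$ to $z_s^\mu$ costs $2\mu t$.
\item Shifting the Gaussian mean from $W_s$ to $W_{s+1}$, together with self-concordance, gives $\frac{\sqrt6}{c}\tr(\nabla^2\ell_s(W_{s+1})Z_s^{-1})+\frac c2\|\Wv_{s+1}-\Wv_s\|_{Z_s}^2$.
\end{itemize}

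The trace term is a potential in exactly the index-shifted Hessians fed to the sketch, so your Step~4 applies with no $\kappa$. The last term is cancelled by the negative term $-\|\Wv_{s+1}-\Wv_s\|^2_{Z_s}$ from the three-point inequality. That cancellation, with $c=24Kd/\lambda$, is what the condition $\lambda\ge 24\alpha Kd$ is for. It is not about keeping per-step potentials small; that role belongs to $\lambda\ge 2LD^2$, which gives $\nabla^2\ell_s\preceq\frac\lambda2 I$.
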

The precise definition of $\beta_t(\delta)$ is given in \eqref{eq:cr-beta} in appendix,
and the detailed proof is provided in Appendix~\ref{ssec:proofthm}.
The confidence region is defined with respect to the sketched matrix $Z_t$ instead of the full Hessian $H_t$.
Compared with the OFUL-MLogB confidence radius $\tilde{\beta}_t(\delta)=\O(\ln^2K\ln t(Kd\ln t+\ln\frac1\delta))$ \citep[Theorem 3]{zhang2023mlb},
the sketched radius replaces $Kd\ln t$ by $Kd\ln\tail{t}+m\ln t$ and introduces the multiplicative factor $\tail{t}$.
These terms quantify the cumulative approximation error introduced by FD.

\begin{remark}[Independence of $\kappa$]
Theorem~\ref{thm:confidence-region} preserves the $\kappa$-independence of OFUL-MLogB.
This follows because EOFD-MLogB keeps the same second-order surrogate in \eqref{eq:oful-mlogb-W-update} and index-shifted Hessian $H_t=\lambda I+\sum_{s=1}^{t-1}\nabla^2 \ell_s(W_{s+1})$ as OFUL-MLogB,
while the singular values discarded by FD are controlled by the upper bound $\O(LD^2)$ on Hessian updates rather than the strong-convexity lower bound $1/\kappa$.
\end{remark}

We next derive the regret bound for the multinomial case based on Theorem~\ref{thm:confidence-region}.
\begin{theorem}[Multinomial Regret] \label{thm:regret-K-ge-2}
Under the conditions of Theorem~\ref{thm:confidence-region}, with probability at least $1-\delta$,
\begin{equation}
	\reg(T)
	=
	\Ot\left(
		\tail{T} (K d \ln \tail{T} + m) \sqrt{T}
	\right)
. \end{equation}
\end{theorem}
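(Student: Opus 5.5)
Conditioned on the event of Theorem~\ref{thm:confidence-region}, which holds with probability at least $1-\delta$, I will bound the per-round regret by twice the optimistic bonus and then control the sum of bonuses with an elliptical-potential argument adapted to the sketched matrix $Z_t$.

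\textbf{Per-round reduction.} On the good event, $\Ws\in\C_t$ for all $t$. I will use Lemma~\ref{lem:diff-rwd-upper-bound}, the second-order Taylor bound underlying the bonus, with $H_t$ replaced by $Z_t$. This gives
\[
\bigl|\rwdv^\top\sig(W\x)-\rwdv^\top\sig(W_t\x)\bigr|\le b_t(\x,\delta)
\qquad\text{for all } W\in\C_t .
\]
Here $b_t$ uses $Z_t$, and its spectral term equals $\|A(\x)\|_2$ by Lemma~\ref{lem:fd-oful-mlogb-efficient-reward}. Optimism and the choice of $\x_t$ then give $\max_{\x}\rwdv^\top\sig(\Ws\x)\le R_t(\x_t,\delta)$. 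Since also $\rwdv^\top\sig(\Ws\x_t)\ge \rwdv^\top\sig(W_t\x_t)-b_t(\x_t,\delta)$, the instantaneous regret is at most $2b_t(\x_t,\delta)$, capped by $2R$.

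Summing over rounds gives two sums. The first is $\sqrt{\beta_T}\sum_t\|(\nabla\sig(W_t\x_t)\otimes\x_t)\rwdv\|_{Z_t^{-1}}$. The second is $\|\rwdv\|_1\beta_T\sum_t\|Z_t^{-1/2}(I_K\otimes\x_t)\|_2^2$. I will use that $\beta_t$ is nondecreasing in $t$ and pull $\beta_T$ out of both sums.

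\textbf{Elliptical potential under sketching.} This step is the main obstacle, for two reasons. The Hessian added at round $t$ is $\nabla^2\ell_t(W_{t+1})$, not the Hessian at $W_t$ that appears in the first bonus term. Moreover, $Z_{t+1}\ne Z_t+\nabla^2\ell_t(W_{t+1})$ because FD discards mass.

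For the first issue, I will use Assumption~\ref{asmp:derivative-bound}: the bonus matrix satisfies $\nabla\sig(W_t\x_t)^2\preceq L\,\nabla\sig(W_t\x_t)$. I will then relate $\nabla\sig(W_t\x_t)$ to $\nabla\sig(W_{t+1}\x_t)$ through self-concordance of the multinomial logistic loss, using that $\|W_{t+1}-W_t\|$ is small because $\lambda$ is large. If that comparison fails, I will fall back on $\kappa$-dependent constants inside lower-order terms only.

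For the second issue, I will use the standard FD guarantee, $H_{t}\succeq Z_t$ and $H_t\preceq Z_t+\rho_{1:t-1}I$. This gives $Z_t^{-1}\preceq(1+\rho_{1:t-1}/\lambda)(Z_t+\rho_{1:t-1}I)^{-1}$, so the weighted norms are controlled by those of a surrogate matrix at a cost of the factor $\tail{T}$. I will then apply the log-determinant potential to the sketched sequence, as in sketched linear bandits \citep{kuzborskij2019efficient}. Writing $\ln\det$ of $Z_{T+1}$ as the sum of contributions from the $m$ retained eigenvalues and the $Kd-m$ directions at level $\lambda$, together with the $\rho$ corrections, I expect
\[
\sum_t\min\{1,\|G_t\|^2_{Z_t^{-1}}\}\lesssim \tail{T}\bigl(m\ln(1+TLD^2/(m\lambda))+Kd\ln\tail{T}\bigr).
\]
The same bound holds for the trace/spectral version, since $\|Z_t^{-1/2}(I_K\otimes\x_t)\|_2^2\le\tr(\cdot)\le \kappa\,\tr\bigl(Z_t^{-1/2}\nabla^2\ell_t Z_t^{-1/2}\bigr)$ up to the constant relating $\nabla\sig$ and $I_K$. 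Large $\lambda$ keeps each term below 1, so the $\min$ can be dropped.

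\textbf{Combining.} For the first sum, Cauchy--Schwarz gives
\[
\sqrt{\beta_T}\sqrt{T\cdot\tail{T}(m\ln T+Kd\ln\tail{T})}=\Ot\bigl(\tail{T}(Kd\ln\tail{T}+m)\sqrt{T}\bigr),
\]
using $\beta_T=\Ot(\tail{T}(m+Kd\ln\tail{T}))$ from Theorem~\ref{thm:confidence-region}. The second sum is $\beta_T\cdot\Ot(\tail{T}(m+Kd\ln\tail{T}))$. This term is lower order in $T$ but carries $\tail{T}^2$; I will absorb it into the $\sqrt T$ term under the stated choice of $\lambda$, or bound it via the regret cap of $2R$ per round. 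Finally, I will collect the logarithmic factors into $\Ot$.
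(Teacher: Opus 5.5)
Your plan is essentially the paper's proof. It bounds the regret by $2\sum_t b_t(\x_t,\delta)$. The first-order bonus is handled by Cauchy--Schwarz, the FD comparison costing a factor $\tail{T}$, and the Kuzborskij-type log-determinant bound (Lemmas~\ref{lem:fd-reg-sketch-relation}, \ref{lem:fd-det-origin}, \ref{lem:fd-self-norm-t}). The second-order bonus is handled by $I_K\preceq\kappa\nabla\sig(W_{t+1}\x_t)$ and the same potential (Lemma~\ref{lem:fd-self-norm-I-xxt-zt}). The resulting $T$-independent $\kappa\tail{T}^2$ term is simply declared lower order, exactly as the paper does; tuning $\lambda$ or using the $2R$ cap would not remove its $\kappa$-dependence, so drop those remarks.

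Two minor deviations are harmless. First, where the paper bridges $\nabla\sig(W_t\x_t)$ and $\nabla\sig(W_{t+1}\x_t)$ additively (mean value theorem plus Lemma~\ref{lem:omd-wt-wt-1-bound}, i.e.\ your fallback), your multiplicative self-concordance comparison also works, since $\|(W_t-W_{t+1})\x_t\|_2\le 2\eta D^2/\lambda$ is bounded. Second, the paper telescopes through the unsketched $H_t$, whereas you telescope through $Z_t+\rho_{1:t-1}I$; both give the same bound.
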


Theorem~\ref{thm:regret-K-ge-2} achieves similar regret bound of OFUL-MLogB $\Ot(Kd\sqrt{T})$ \citep[Theorem 4]{zhang2023mlb} except the sketching factor $\tail{T}$ and the replacement of $Kd$ by $Kd\ln\tail{T}+m$.
When $m\ge Kd$, no truncation occurs, so $\rho_{1:T}=0$ and $\tail{T}=1$, recovering the OFUL-MLogB bound.

The sketching error is controlled by the spectral tail of the cumulative Hessian.
By the FD guarantee \citep[Theorem 1.1]{ghashami2016fd},
\begin{equation}
	\rho_{1:T} \le \min_{0\le k<m} \frac{1}{m-k} \left\|M_{T+1}^{1/2}-(M_{T+1}^{1/2})_{(k)}\right\|_F^2
, \end{equation}
where $M_T=\sum_{t=1}^T\nabla^2\ell_t(W_{t+1})$ denotes the cumulative Hessian and $(M_T^{1/2})_{(k)}$ is the best rank-$k$ approximation of $M_T^{1/2}$.
Thus, if the accumulated Hessian $M_T$ is approximately low-rank, EOFD-MLogB remains close to OFUL-MLogB in regret while substantially reducing computation.
If the Hessian has slow spectral decay, the sketching error may grow and the regret bound correspondingly deteriorates.

\begin{remark}[Extension to Related MLogB Frameworks] \label{rmk:extension}
The proposed spectral reductions are not specific to OFUL-MLogB.
For example, \citet{boudart2025enjoyNonLinear} use a similar second-order optimistic bonus and ONS-type update, differing mainly in the initialization phase.
Replacing the full Hessian by $Z_t$ and applying Lemma~\ref{lem:fd-oful-mlogb-efficient-reward} and Lemma~\ref{lem:fd-oful-mlogb-efficient-W-update} yields analogous efficiency gains and regret guarantees.
We focus on OFUL-MLogB because it naturally handles time-varying action sets $\A_t$.
\end{remark}

For completeness, we also state the binary-case guarantee, which admits the simplified optimistic rule and tighter regret bound.
\begin{proposition}[Binary Regret] \label{prop:regret-k-eq-1}
Under the conditions of Theorem~\ref{thm:confidence-region},
if $K = 1$, with probability at least $1-\delta$,
\begin{equation}
	\reg(T)
	=
	\Ot\left(
		\tail{T} (d \ln \tail{T} + m) \sqrt{T/\kappa_*}
	\right)
, \end{equation}
where $\kappa_* = 1 / \sigma'(\ws^\top\x_*)$, and $\x_*$ is the optimal action.
\end{proposition}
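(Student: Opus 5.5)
Our plan is to follow the standard binary logistic-bandit analysis (as in OL$^2$M and the $\kappa_*$-refined analyses of \citet{abeille2021minimax,faury2022jointly-logistic-bandit}), replacing $H_t$ by the sketched matrix $Z_t$ and using Theorem~\ref{thm:confidence-region} for $K=1$.

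\textbf{Per-round regret.} Work on the event of Theorem~\ref{thm:confidence-region}. Write $\w_t$ for the estimate, $\beta_t=\beta_t(\delta)$, and $\C_t$ for the ellipsoid in $Z_t$-norm. By optimism, $\max_{\w\in\C_t}\sigma(\w^\top\x_t)\ge \sigma(\ws^\top\x_*)$. Let $\tilde\w_t\in\C_t$ attain the optimistic value at $\x_t$. Then the instantaneous regret is at most $\sigma(\tilde\w_t^\top\x_t)-\sigma(\ws^\top\x_t)$. By the mean value theorem and the self-concordance-type property $|\sigma''|\le\sigma'$, this is at most
\[
\sigma'(\ws^\top\x_t)\,|(\tilde\w_t-\ws)^\top\x_t| + \O\bigl(|(\tilde\w_t-\ws)^\top\x_t|^2\bigr).
\]
Both $\tilde\w_t$ and $\ws$ lie in $\C_t$, so Cauchy--Schwarz gives $|(\tilde\w_t-\ws)^\top\x_t|\le 2\sqrt{\beta_t}\|\x_t\|_{Z_t^{-1}}$.

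\textbf{Getting $\kappa_*$.} We use the first-order expansion $\sigma'(\ws^\top\x_t)\le \sigma'(\ws^\top\x_*)+\O(|\sigma(\ws^\top\x_*)-\sigma(\ws^\top\x_t)|)$. The part proportional to the regret itself is absorbed in the usual way, after summing and solving a quadratic inequality. This reduces the leading term to
\[
\sqrt{\beta_T/\kappa_*}\sum_t\sqrt{\sigma'(\ws^\top\x_t)}\,\|\x_t\|_{Z_t^{-1}}.
\]
We then need $Z_t$ to dominate a curvature-weighted design matrix: $\lambda I+\sum_{s<t}\sigma'(\w_{s+1}^\top\x_s)\x_s\x_s^\top$, which the FD sketch approximates. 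Relating $\sigma'(\w_{s+1}^\top\x_s)$ to $\sigma'(\ws^\top\x_s)$ again uses self-concordance together with the confidence width. Terms with large width are lower order and are bounded crudely using $\lambda$ large.

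\textbf{Elliptical potential under sketching (main obstacle).} The FD guarantee gives $Z_{t}\preceq H_{t}\preceq Z_{t}+\rho_{1:t-1}I$, hence $Z_t^{-1}\preceq \tail{t}H_t^{-1}$. The sketch is not a rank-one update of the previous matrix, so the determinant telescoping must be redone. Write $Z_{t+1}=Z_t+\nabla^2\ell_t-(\text{shrinkage})$ and bound
\[
\sum_t \min\{1,\|\tilde\x_t\|^2_{Z_t^{-1}}\}\le \O\bigl(\tail{T}(m\ln T+d\ln\tail{T})\bigr),
\]
where $\tilde\x_t=\sqrt{\sigma'}\x_t$. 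This follows the FD-bandit argument of \citet{kuzborskij2019efficient}: $\ln\det$ of $Z_T$ contributes only $m$ nontrivial eigenvalues, giving $m\ln T$, while the discarded mass costs $d\ln\tail{T}$. This is the same quantity already controlled inside the proof of Theorem~\ref{thm:confidence-region}, so we would reuse it. Cauchy--Schwarz then yields
\[
\sqrt{\beta_T\,T\,\tail{T}(m\ln T+d\ln\tail{T})/\kappa_*}.
\]
Plugging in $\beta_T=\Ot(\tail{T}(m+d\ln\tail{T}))$ from Theorem~\ref{thm:confidence-region} gives $\Ot(\tail{T}(d\ln\tail{T}+m)\sqrt{T/\kappa_*})$. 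The second-order terms sum to $\Ot(\beta_T\tail{T}(m+d\ln\tail{T}))$ with no dependence on $\sqrt{T}$, so they are lower order. The delicate points are justifying the potential bound with the shrinkage step and handling the $\sigma'$ mismatch between $\w_{s+1}$ and $\ws$.
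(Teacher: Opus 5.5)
Your proposal is correct in outline and shares the paper's skeleton. Both arguments use optimism followed by a Taylor expansion around $\ws^\top\x_t$. Both extract $\kappa_*$ through $\sigma'(\ws^\top\x_t)\le\sigma'(\ws^\top\x_*)+|\sigma(\ws^\top\x_*)-\sigma(\ws^\top\x_t)|$, i.e.\ $\sum_t\sigma'(\ws^\top\x_t)\le\reg(T)+T/\kappa_*$, and finish with a quadratic solve. Both control the curvature-weighted potential through $Z_t^{-1}\preceq\tail{t}H_t^{-1}$ together with Lemmas~\ref{lem:fd-self-norm-t} and~\ref{lem:fd-det-origin}.

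Your ``main obstacle'' is already settled by Lemma~\ref{lem:fd-self-norm-t}. After comparing with $H_t$, the telescoping is done on the exact index-shifted $H_t$, with $\lambda\ge 2LD^2$ absorbing the shift. FD enters only through the bound on $\det H_T$, so no telescoping on $Z_t$ is needed. That matters, because FD sketches are not Loewner-monotone.

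The step where you genuinely differ is the mismatch between $\sigma'(\ws^\top\x_t)$ and the weight $\sigma'(\w_{t+1}^\top\x_t)$ that enters the sketch. The paper handles it additively. Since $|(\sqrt{\sigma'})'|=|\sigma''|/(2\sqrt{\sigma'})\le 1/4$, it writes $\sqrt{\sigma'(\ws^\top\x_t)}\le\sqrt{\sigma'(\w_{t+1}^\top\x_t)}+\tfrac14|(\ws-\w_{t+1})^\top\x_t|$. This produces a single cross term $\tfrac18\|\wh_t-\ws\|_{Z_t}\|\ws-\w_{t+1}\|_{Z_{t+1}}\|\x_t\|_{Z_t^{-1}}\|\x_t\|_{Z_{t+1}^{-1}}$ and needs no case analysis. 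Your multiplicative device $\sigma'(a)\le e^{|a-b|}\sigma'(b)$ is the more familiar one and gives a clean factor $e$ on good rounds. It requires splitting rounds by the width $\epsilon_t=|(\ws-\w_{t+1})^\top\x_t|$, and ends with the same order of bound.

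Two details need fixing.

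\textbf{Bad rounds.} These cannot be handled by taking $\lambda$ large. The radius $\beta_t$ scales linearly with $\lambda$ (see \eqref{eq:cr-beta}), so the crude width bound $\sqrt{\beta_t/\lambda}\,D$ does not shrink as $\lambda$ grows. Instead:
\begin{itemize}
\item bound the instantaneous regret by $1$ on rounds with $\epsilon_t>1$;
\item count these rounds via $\sum_t\epsilon_t^2\le\beta_{T+1}\sum_t\|\x_t\|^2_{Z_{t+1}^{-1}}=\O(\kappa\,\beta_{T+1}\tail{T}(d\ln\tail{T}+m\ln T))$, using Lemma~\ref{lem:fd-self-norm-I-xxt-zt+1};
\item do this split before any Cauchy--Schwarz step that multiplies by $\sqrt{T}$, since otherwise an extra $\sqrt{\kappa}$ leaks into the leading term.
\end{itemize}
The width lives in the $Z_{t+1}$-norm because $\w_{t+1}\in\C_{t+1}$. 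Since $Z_{t+1}\not\succeq Z_t$ under FD, you cannot fall back on $Z_t$ here.

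\textbf{Second-order terms.} These are not $\kappa$-free. The sum $\sum_t\|\x_t\|^2_{Z_t^{-1}}$ involves unweighted actions and is controlled only through $\sigma'\ge 1/\kappa$ (Lemma~\ref{lem:fd-self-norm-I-xxt-zt}). So these terms are $\O(\kappa\,\beta_T\tail{T}(d\ln\tail{T}+m\ln T))$.

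Both corrections remain $\sqrt{T}$-free. They are therefore lower order in exactly the sense the paper uses when it drops its own $\kappa$-dependent terms, and your leading term is unaffected.
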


The binary case follows \citet[Corollary 1]{zhang2023mlb}:
the monotonicity of the scalar logistic reward reduces the optimistic decision rule to a simple vector norm, avoiding the multinomial spectral-norm bonus for the second-order reward approximation.
Theoretically, it yields the refined $\sqrt{1/\kappa_*}$ dependence, while retaining the sketching factor $\tail{T}$.

\section{Experiments} \label{sec:experiment}

\begin{figure*}[tb]
	\centering
	\begin{subfigure}{0.32\textwidth}
		\includegraphics[width=\linewidth]{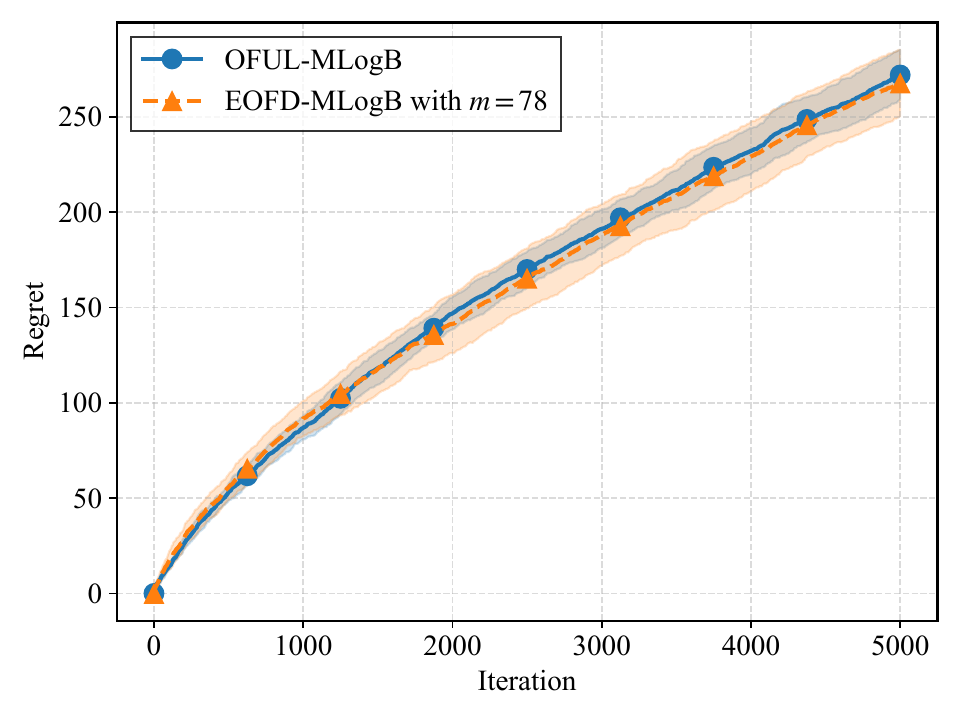}
		\caption{\texttt{MNIST}}
		\label{fig:exp-mnist-regret}
	\end{subfigure}
	\hfill
	\begin{subfigure}{0.32\textwidth}
		\includegraphics[width=\linewidth]{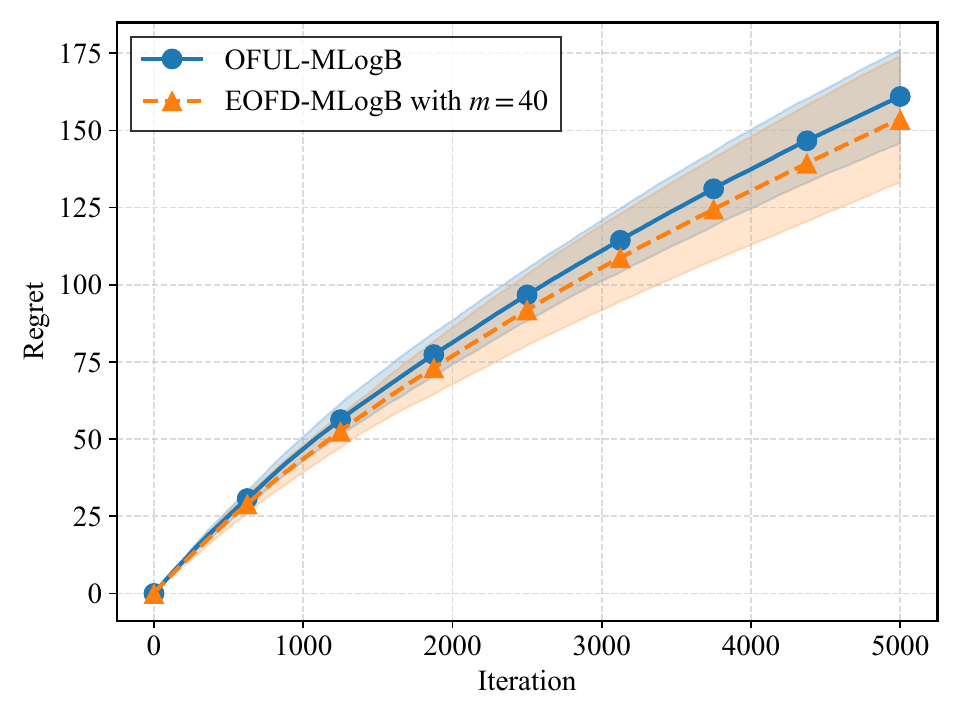}
		\caption{Multinomial case}
		\label{fig:exp-mul-regret}
	\end{subfigure}
	\hfill
	\begin{subfigure}{0.32\textwidth}
		\includegraphics[width=\linewidth]{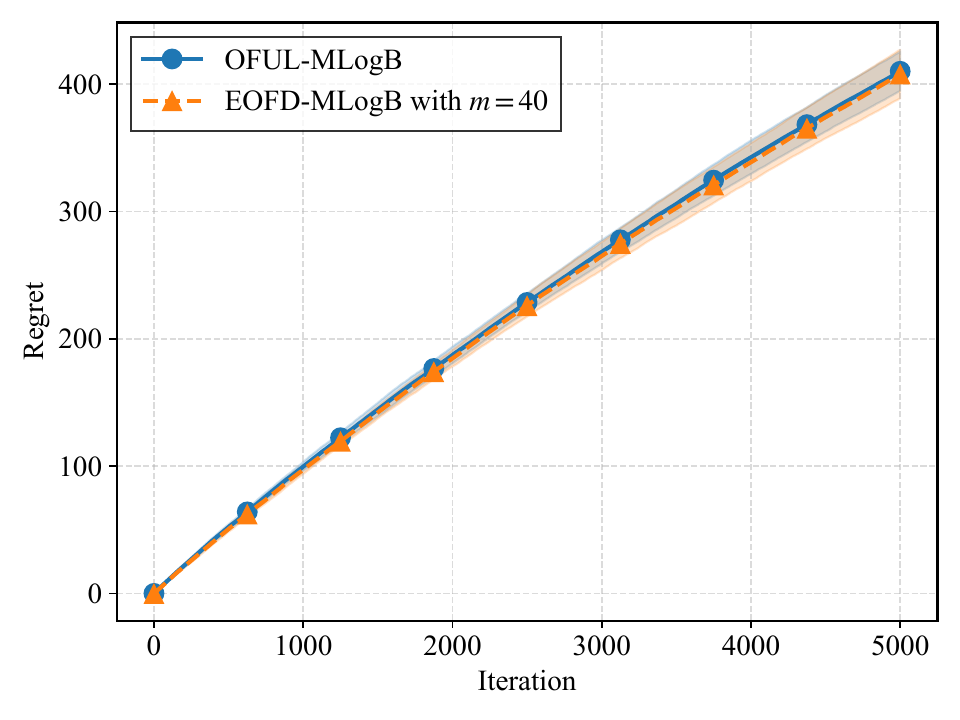}
		\caption{Binary case}
		\label{fig:exp-bin-regret}
	\end{subfigure}
	\caption{Cumulative regret of EOFD-MLogB and OFUL-MLogB over 10 trials.}
	\label{fig:exp-reg}
\end{figure*}

\begin{figure*}[tb]
	\centering
	\begin{subfigure}{0.32\textwidth}
		\includegraphics[width=\linewidth]{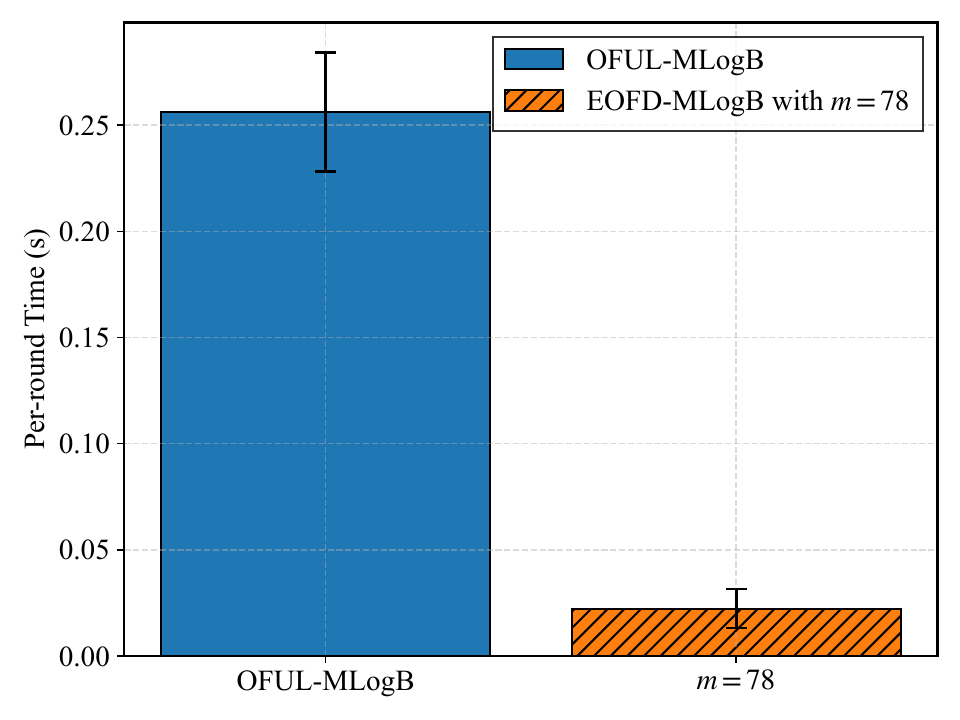}
		\caption{\texttt{MNIST}}
		\label{fig:exp-mnist-time}
	\end{subfigure}
	\hfill
	\begin{subfigure}{0.32\textwidth}
		\includegraphics[width=\linewidth]{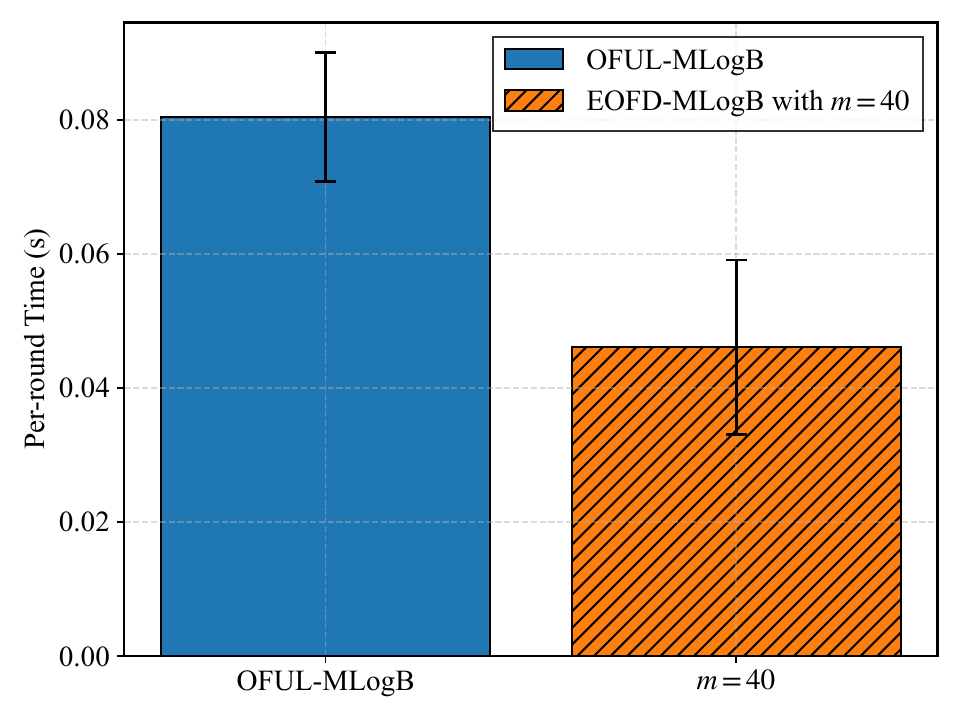}
		\caption{Multinomial case}
		\label{fig:exp-mul-time}
	\end{subfigure}
	\hfill
	\begin{subfigure}{0.32\textwidth}
		\includegraphics[width=\linewidth]{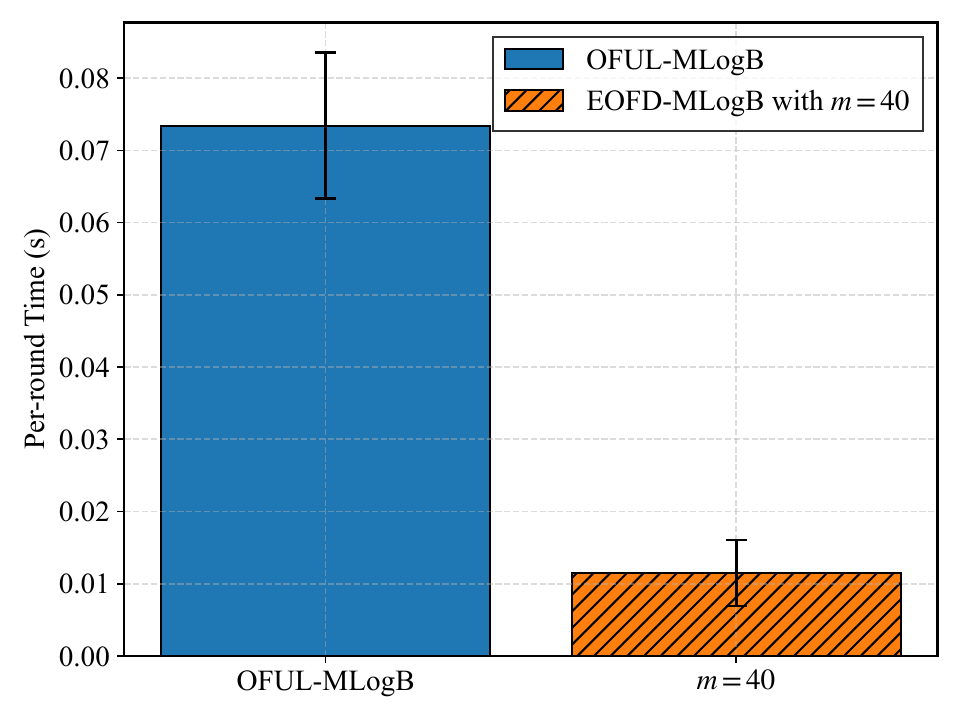}
		\caption{Binary case}
		\label{fig:exp-bin-time}
	\end{subfigure}
	\caption{Average per-round computation time over 10 trials.}
	\label{fig:exp-time}
\end{figure*}

This section evaluates the computational efficiency and regret performance of EOFD-MLogB on \texttt{MNIST} \citep{lecun1998gradient} and synthetic data.

\paragraph{Setup.}
We follow the experimental configuration of OFUL-MLogB \citep{zhang2023mlb}.
The baseline hyperparameters are $\lambda_0=\sqrt{K}\alpha S$ and $\eta_0=S/2+\ln(K+1)/4$.
We set $(\lambda,\eta)=((\sqrt{K}d)^l\lambda_0,(\sqrt{K}d)^e\eta_0)$ and search over $(l,e)\in\{-0.5,-0.25,0,0.25,0.5,0.75,1\}^2$ on synthetic validation runs.
For the reported runs, we use $(l,e)=(0.75,0.75)$ based on the convergence of $W_t$ for both EOFD-MLogB and OFUL-MLogB.
Unless otherwise stated, we set $m=\lfloor Kd/10\rfloor$, and $\delta=1/T$.
All results are averaged over 10 independent trials.

For \texttt{MNIST}, we adopt the protocol of \citet{cesa2013gang} to convert the multi-class problem into a binary contextual bandit problem.
At each round, the learner receives 10 candidate with distinct labels,
and obtains reward 1 if the selected label is \texttt{1} and 0 otherwise.
We append a bias term to the normalized features, yielding $d=785$.
For synthetic experiments, we simulate high-dimensional environments with $(d,K)=(100,4)$ and $(400,1)$.
At each round, action sets of size $|\A_t|\in[10,30]$ are sampled from the unit sphere.
For the multinomial synthetic setting, the reward vector is $\rwdv=[0,0.25,0.5,0.75,1]$.

The experiments are run on Intel(R) Core(TM) i5-14500 CPU (20 cores, 0.8 GHz min, 5.0 GHz max) with 16GB RAM, and the code is implemented in Python 3.12.13, with single thread.

\paragraph{Results.}
Figures~\ref{fig:exp-reg} and~\ref{fig:exp-time} report cumulative regret and average per-round computation time, respectively.
Solid lines and bars show means over 10 trials; shaded regions and error bars show one sample standard deviation.

On \texttt{MNIST}, EOFD-MLogB reduces the per-round computation time of OFUL-MLogB by approximately $80\%$ while maintaining a similar regret curve, with about $10\%$ average regret overhead
(Figures~\ref{fig:exp-mnist-time} and~\ref{fig:exp-mnist-regret}).
On synthetic data, EOFD-MLogB achieves regret comparable to OFUL-MLogB in both multinomial and binary settings
(Figures~\ref{fig:exp-mul-regret} and~\ref{fig:exp-bin-regret}).
Meanwhile, it reduces per-round computation time by over $35\%$ and $80\%$ in the multinomial and binary settings, respectively
(Figures~\ref{fig:exp-mul-time} and~\ref{fig:exp-bin-time}).
These results demonstrate that EOFD-MLogB substantially improves computational efficiency while preserving competitive regret.

\section{Conclusion} \label{sec:conclusion}

In this paper, we proposed EOFD-MLogB to address the prohibitive full-Hessian time and space costs of OFUL-MLogB in high-dimensional settings.
By maintaining a low-rank FD sketch of the accumulated Hessian, EOFD-MLogB reduces parameter estimation and optimistic reward construction to efficient spectral subroutines.
It achieves dominant per-round time complexity $\O(Kd(m+K)^2)$ and space complexity $\O(Kd(m+K))$, where $m\ll d$ is the sketch size.
Theoretically, we established a regret bound $\Ot(\tail{T}(Kd\ln\tail{T}+m)\sqrt{T})$, where $\tail{T}=1+\rho_{1:T}/\lambda$ and $\rho_{1:T}$ is controlled by the spectral tail of the cumulative Hessian.
This characterizes the FD approximation error:
when the Hessian is approximately low-rank, EOFD-MLogB maintains regret close to OFUL-MLogB while substantially reducing computation.
Experiments on real and synthetic data confirm that EOFD-MLogB delivers significant speedups with competitive regret.

The regret guarantee depends on the spectral decay of the accumulated Hessian;
for slower spectral decay, the sketching factor $\tail{T}$ may become large.
Future work includes adaptive selection of the sketch size $m$ and developing problem-dependent theoretical conditions or adaptive algorithms that directly control $\tail{T}$.

\bibliographystyle{plainnat}
\bibliography{ref}

@inproceedings{tran-dinh2015composite-self-concordant,
	title = {Composite Convex Minimization Involving Self-concordant-Like Cost
	         Functions},
	author = {Tran-Dinh, Quoc and Li, Yen-Huan and Cevher, Volkan},
	booktitle = {Proceedings of the 3rd International Conference on Modelling,
	             Computation and Optimization in Information Systems and
	             Management Sciences},
	pages = {155--168},
	year = {2015},
}

@inproceedings{zhang2016logistic,
	title = {Online Stochastic Linear Optimization under One-bit Feedback},
	author = {Zhang, Lijun and Yang, Tianbao and Jin, Rong and Xiao, Yichi and
	          Zhou, Zhi-hua},
	booktitle = {Proceedings of The 33rd International Conference on Machine
	             Learning},
	pages = {392--401},
	year = {2016},
}

@inproceedings{abeille2021minimax,
	title = {Instance-Wise Minimax-Optimal Algorithms for Logistic Bandits},
	author = {Abeille, Marc and Faury, Louis and Calauzenes, Clement},
	booktitle = {Proceedings of The 24th International Conference on Artificial
	             Intelligence and Statistics},
	pages = {3691--3699},
	year = {2021},
}

@inproceedings{amani2021ucb-based-MNL,
	title = {{UCB}-based Algorithms for Multinomial Logistic Regression Bandits},
	author = {Amani, Sanae and Thrampoulidis, Christos},
	booktitle = {Advances in Neural Information Processing Systems},
	volume = {34},
	pages = {2913--2924},
	year = {2021},
}

@inproceedings{zhang2023mlb,
	author = {Zhang, Yu-Jie and Sugiyama, Masashi},
	title = {Online (Multinomial) Logistic Bandit: Improved Regret and Constant
	         Computation Cost},
	booktitle = {Advances in Neural Information Processing Systems},
	volume = {36},
	pages = {29741--29782},
	year = {2023},
}

@inproceedings{zhang2026GLBonepass,
	title = {Generalized Linear Bandits: Almost Optimal Regret with One-Pass
	         Update},
	author = {Zhang, Yu-Jie and Xu, Sheng-An and Zhao, Peng and Sugiyama,
	          Masashi},
	booktitle = {Advances in Neural Information Processing Systems},
	volume = {38},
	year = {2025},
}

@inproceedings{jezequel2021mixiability,
	title = {Mixability made efficient: Fast online multiclass logistic
	         regression},
	author = {Jézéquel, Rémi and Gaillard, Pierre and Rudi, Alessandro},
	booktitle = {Advances in Neural Information Processing Systems},
	volume = {34},
	pages = {23692--23702},
	year = {2021},
}

@article{drineas2006columnselection,
	title = {Fast Monte Carlo Algorithms for Matrices I: Approximating Matrix
	         Multiplication},
	author = {Drineas, Petros and Kannan, Ravi and Mahoney, Michael W.},
	journal = {SIAM Journal on Computing},
	volume = {36},
	number = {1},
	pages = {132--157},
	year = {2006},
}

@inproceedings{kuzborskij2019efficient,
	title = {Efficient Linear Bandits through Matrix Sketching},
	author = {Kuzborskij, Ilja and Cella, Leonardo and Cesa-Bianchi, Nicol\`{o}},
	booktitle = {Proceedings of the 22nd International Conference on Artificial
	             Intelligence and Statistics},
	pages = {177--185},
	year = {2019},
}

@inproceedings{cesa2013gang,
	title = {A Gang of Bandits},
	author = {Cesa-Bianchi, Nicolò and Gentile, Claudio and Zappella, Giovanni},
	booktitle = {Advances in Neural Information Processing Systems},
	volume = {26},
	year = {2013},
	pages = {737--745},
}

@inproceedings{chen2021efficient,
	title = {Efficient and Robust High-Dimensional Linear Contextual Bandits},
	author = {Chen, Cheng and Luo, Luo and Zhang, Weinan and Yu, Yong and Lian,
	          Yijiang},
	booktitle = {Proceedings of the 29th International Joint Conference on
	             Artificial Intelligence},
	pages = {4259--4265},
	year = {2020},
}

@inproceedings{filippi2010glb,
	title = {Parametric Bandits: The Generalized Linear Case},
	author = {Filippi, Sarah and Cappe, Olivier and Garivier, Aur\'{e}lien and
	          Szepesv\'{a}ri, Csaba},
	booktitle = {Advances in Neural Information Processing Systems},
	volume = {23},
	pages = {586--594},
	year = {2010},
}

@inproceedings{faury2022jointly-logistic-bandit,
	title = { Jointly Efficient and Optimal Algorithms for Logistic Bandits },
	author = {Faury, Louis and Abeille, Marc and Jun, Kwang-Sung and Calauzenes,
	          Clement},
	booktitle = {Proceedings of The 25th International Conference on Artificial
	             Intelligence and Statistics},
	volume = {151},
	pages = {546--580},
	year = {2022},
}

@misc{wen2024currentpitfalls,
	title = {Matrix Sketching in Bandits: Current Pitfalls and New Framework},
	author = {Dongxie Wen and Hanyan Yin and Xiao Zhang and Zhewei Wei},
	year = {2024},
	eprint = {2410.10258},
	archivePrefix = {arXiv},
}

@article{ghashami2016fd,
	title = {Frequent Directions: Simple and Deterministic Matrix Sketching},
	author = {Ghashami, Mina and Liberty, Edo and Phillips, Jeff M. and Woodruff
	          , David P.},
	journal = {SIAM Journal on Computing},
	volume = {45},
	number = {5},
	pages = {1762--1792},
	year = {2016},
}

@article{baraniuk2010random-projection,
	title = {Low-Dimensional Models for Dimensionality Reduction and Signal
	         Recovery: A Geometric Perspective},
	author = {Baraniuk, Richard G. and Cevher, Volkan and Wakin, Michael B.},
	journal = {Proceedings of the IEEE},
	volume = {98},
	pages = {959--971},
	year = {2010},
}

@inproceedings{li2010recommandation,
	title = {A contextual-bandit approach to personalized news article
	         recommendation},
	author = {Li, Lihong and Chu, Wei and Langford, John and Schapire, Robert E.
	          },
	booktitle = {Proceedings of the 19th International Conference on World Wide
	             Web},
	pages = {661--670},
	year = {2010},
}

@inproceedings{li2017provablyGLB,
	title = {Provably Optimal Algorithms for Generalized Linear Contextual
	         Bandits},
	author = {Lihong Li and Yu Lu and Dengyong Zhou},
	booktitle = {Proceedings of the 34th International Conference on Machine
	             Learning},
	pages = {2071--2080},
	volume = {70},
	year = {2017},
}

@inproceedings{luo2016ONSsketchig,
	title = {Efficient Second Order Online Learning by Sketching},
	booktitle = {Advances in Neural Information Processing Systems},
	author = {Luo, Haipeng and Agarwal, Alekh and Cesa-Bianchi, Nicolò and
	          Langford, John},
	year = {2016},
	volume = {29},
}

@article{Hazan2007ons,
	title = {Logarithmic Regret Algorithms for Online Convex Optimization},
	journal = {Machine Learning},
	author = {Hazan, Elad and Agarwal, Amit and Kale, Satyen},
	year = {2007},
	volume = {69},
	pages = {169--192},
}

@article{wan2022adafd,
	title = {Efficient Adaptive Online Learning via Frequent Directions},
	journal = {{IEEE} Transactions on Pattern Analysis and Machine Intelligence},
	author = {Wan, Yuanyu and Zhang, Lijun},
	year = {2022},
	pages = {6910--6923},
	volume = {44},
	number = {10},
}

@inproceedings{yang2025dimensionfree,
	title = {Dimension-Free Adaptive Subgradient Methods with Frequent
	         Directions},
	author = {Sifan Yang and Yuanyu Wan and Peijia Li and Yibo Wang and Xiao
	          Zhang and Zhewei Wei and Lijun Zhang },
	booktitle = {Proceedings of The 42nd International Conference on Machine
	             Learning},
	pages = {},
	year = {2025},
}

@article{luo2019rfd,
	title = {Robust Frequent Directions with Application in Online Learning},
	author = {Luo Luo and Cheng Chen and Zhihua Zhang and Wu-Jun Li and Tong
	          Zhang},
	journal = {Journal of Machine Learning Research},
	year = {2019},
	volume = {20},
	number = {45},
	pages = {1--41},
}

@article{lecun1998gradient,
	title = {Gradient-based learning applied to document recognition},
	author = {LeCun, Yann and Bottou, L{\'e}on and Bengio, Yoshua and Haffner,
	          Patrick},
	journal = {Proceedings of the IEEE},
	volume = {86},
	number = {11},
	pages = {2278--2324},
	year = {1998},
}

@inproceedings{Lee2024r2cs,
	title = {Improved Regret Bounds of (Multinomial) Logistic Bandits via
	         Regret-to-Confidence-Set Conversion},
	author = {Lee, Junghyun and Yun, Se-Young and Jun, Kwang-Sung},
	booktitle = {Proceedings of The 27th International Conference on Artificial
	             Intelligence and Statistics},
	pages = {4474--4482},
	volume = {238},
	year = {2024},
}

@misc{boudart2025enjoyNonLinear,
	title = {Enjoying Non-linearity in Multinomial Logistic Bandits: A
	         Minimax-Optimal Algorithm},
	author = {Pierre Boudart and Pierre Gaillard and Alessandro Rudi},
	year = {2026},
	howpublished = {arXiv:2507.05306},
	eprint = {2507.05306},
	archivePrefix = {arXiv},
	primaryClass = {stat.ML},
}

@inproceedings{lee2025ImprovedConfidBound4MLogB,
	title = {Improved Online Confidence Bounds for Multinomial Logistic Bandits},
	author = {Lee, Joongkyu and Oh, Min-Hwan},
	booktitle = {Proceedings of the 42nd International Conference on Machine
	             Learning},
	pages = {33576--33615},
	volume = {267},
	year = {2025},
}

@inproceedings{ding2021sgdts4glb,
	title = { An Efficient Algorithm For Generalized Linear Bandit: Online
	         Stochastic Gradient Descent and Thompson Sampling },
	author = {Ding, Qin and Hsieh, Cho-Jui and Sharpnack, James},
	booktitle = {Proceedings of The 24th International Conference on Artificial
	             Intelligence and Statistics},
	pages = {1585--1593},
	volume = {130},
	year = {2021},
}


\appendix

\section{Properties of Algorithm~\ref{algo:adpfd}} \label{sec:rfd-properties}

In this section, we provide some useful lemmas of Algorithm~\ref{algo:adpfd}, and corresponding necessary proofs.

\begin{lemma} \label{lem:fd-ineq-telescope}
\[
	V_{t+1} \Lambda_{t+1} V_{t+1}^\top
	\preceq
	V_t \Lambda_t V_t^\top + \nabla^2 \ell_t(W_{t+1})
	\preceq
	V_{t+1} \Lambda_{t+1} V_{t+1}^\top + (\rho_{1:t} - \rho_{1:t-1}) I_{Kd}
.\]
\begin{proof}
By the eigen-decomposition and the definition of $B_t$ in line 1 and line 2 of Algorithm \ref{algo:adpfd},
we have
\[
	[V_t \Lambda_t^{\frac{1}{2}}, B_t] [V_t \Lambda_t^{\frac{1}{2}}, B_t]^\top
	= V_t \Lambda_t V_t^\top + B_t B_t^\top
	= V_t \Lambda_t V_t^\top + P D P^\top \otimes \x_t \x_t^\top
	= V_t \Lambda_t V_t^\top + \nabla^2 \ell_t(W_{t+1})
. \]
On the other hand, by the definition of $\bar{U}, \bar{\Sigma}, \bar{V}$ in line 3, we have
\[
	\bar{V} \bar{\Sigma}^2 \bar{V}^\top
	= [V_t \Lambda_t^{\frac{1}{2}}, B_t] [V_t \Lambda_t^{\frac{1}{2}}, B_t]^\top
. \]
Thus
\[
	\bar{V} \bar{\Sigma}^2 \bar{V}^\top
	= V_t \Lambda_t V_t^\top + \nabla^2 \ell_t(W_{t+1})
. \]
Let $\bar{\sigma}_i$ be the $i$-th largest singular value of $\bar{\Sigma}$.
Then we have
\[
	\bar{\sigma}_1^2 \ge \dots \ge \bar{\sigma}_m^2 \ge \bar{\sigma}_{m+1}^2 \ge \bar{\sigma}_{m+2}^2 \ge \dots \ge \bar{\sigma}_{m+K}^2 \ge 0
. \]
Let $\tilde{\Sigma} = \diag(\bar{\sigma}_{m+1}, \dots, \bar{\sigma}_{m+1}, \bar{\sigma}_{m+2}, \bar{\sigma}_{m+3}, \dots, \bar{\sigma}_{m+K}) \in \R^{(m+K)\times(m+K)}$.
By line 5 and line 6, we have
\[
	V_{t+1} \Lambda_{t+1} V_{t+1}^\top + \bar{V} \tilde{\Sigma}^2 \bar{V}^\top
	= \bar{V} \bar{\Sigma}^2 \bar{V}^\top
. \]
To conclude
\[
	V_{t+1} \Lambda_{t+1} V_{t+1}^\top + \bar{V} \tilde{\Sigma}^2 \bar{V}^\top
	= V_t \Lambda_t V_t^\top + \nabla^2 \ell_t(W_{t+1})
. \]
By $\bar{V} \tilde{\Sigma}^2 \bar{V}^\top \succeq 0$, we have the first inequality of the claim:
\[
	V_{t+1} \Lambda_{t+1} V_{t+1}^\top
	\preceq
	V_t \Lambda_t V_t^\top + \nabla^2 \ell_t(W_{t+1})
. \]
And by $\bar{V} \tilde{\Sigma}^2 \bar{V}^\top \preceq \bar{\sigma}_{m+1}^2 I_{Kd}$, we have the second inequality of the claim:
\[
	V_t \Lambda_t V_t^\top + \nabla^2 \ell_t(W_{t+1})
	\preceq
	V_{t+1} \Lambda_{t+1} V_{t+1} + \bar{\sigma}_{m+1}^2 I_{Kd}
	=
	V_{t+1} \Lambda_{t+1} V_{t+1} + (\rho_{1:t} - \rho_{1:t-1}) I_{Kd}
. \]
\end{proof}
\end{lemma}

\begin{lemma}[Property 1, 2 of \citet{ghashami2016fd}] \label{lem:fd-sketch-relation}
Let $M_t = \sum_{s=1}^{t-1} \nabla^2 \ell_s(W_{s+1})$ be the cumulative Hessian, then
\[
	V_t \Lambda_t V_t^\top
	\preceq
	M_t
	\preceq
	V_t \Lambda_t V_t^\top + \rho_{1:t-1} I_{Kd}
. \]
\begin{proof}
By the first inequality of Lemma~\ref{lem:fd-ineq-telescope}, we have
\[
	V_{s+1} \Lambda_{s+1} V_{s+1}^\top - V_s \Lambda_s V_s^\top
	\preceq \nabla^2 \ell_s(W_{s+1})
. \]
Summing over $s$ from $1$ to $t-1$, we have
\[
	V_t \Lambda_t V_t^\top
	\preceq \sum_{s=1}^{t-1} \nabla^2 \ell_s(W_{s+1}) + V_1 \Lambda_1 V_1^\top
	= M_t
. \]
By the second inequality of Lemma~\ref{lem:fd-ineq-telescope}, we have
\[
	\nabla^2 \ell_s(W_{s+1})
	\preceq
	V_{s+1} \Lambda_{s+1} V_{s+1} - V_s \Lambda_s V_s^\top + (\rho_{1:s} - \rho_{1:s-1}) I_{Kd}
. \]
Summing over $s$ from $1$ to $t-1$, we have
\[
	M_t
	\preceq
	V_t \Lambda_t V_t^\top + \sum_{s=1}^{t-1} (\rho_{1:s} - \rho_{1:s-1}) I_{Kd}
	=
	V_t \Lambda_t V_t^\top + \rho_{1:t-1} I_{Kd}
. \]
\end{proof}
\end{lemma}

\begin{lemma} \label{lem:fd-reg-sketch-relation}
Let $M_t = \sum_{s=1}^{t-1} \nabla^2 \ell_s(W_{s+1})$ be the cumulative Hessian, then for any $\lambda > 0$,
\[
	V_t \Lambda_t V_t^\top + \lambda I_{Kd}
	\preceq
	M_t + \lambda I_{Kd}
	\preceq
	\left(1 + \frac{\rho_{1:t-1}}{\lambda}\right)
	\left( V_t \Lambda_t V_t^\top + \lambda I_{Kd} \right)
\]
\begin{proof}
We follow the main sketch of lemma 9 of \citet{kuzborskij2019efficient}.
By the second inequality of Lemma~\ref{lem:fd-sketch-relation},
the first inequality of the claim can be proved easily.
\par 
For the second inequality, let $(\lambda_i, \bv_i)$ be the $i$-th full eigen-pair of $V_t \Lambda_t V_t^\top$, then we have the decomposition \[
	V_t \Lambda_t V_t^\top + \lambda I_{Kd} = \sum_{i=1}^{Kd} (\lambda_i + \lambda) \bv_i \bv_i^\top
.\] And by the first inequality of property \ref{lem:fd-sketch-relation}
\[
	M_t
	\preceq
	S_t^\top S_t + \rho_{1:t-1} I_{Kd}
	=
	\sum_{i=1}^{Kd} (\lambda_i + \lambda + \rho_{1:t-1}) \bv_i \bv_i^\top
.\]
Therefore
\begin{align*}
	&~
	(V_t \Lambda_t V_t^\top + \lambda I_d)^{-\frac{1}{2}} (M_t + \lambda I_d) (V_t \Lambda_t V_t^\top + \lambda I_d)^{-\frac{1}{2}}
	\\
	\preceq &~
	(V_t \Lambda_t V_t^\top + \lambda I_d)^{-\frac{1}{2}} (V_t \Lambda_t V_t^\top + (\rho_{1:t-1} + \lambda) I_d) (V_t \Lambda_t V_t^\top + \lambda I_d)^{-\frac{1}{2}}
	\\
	\preceq &~
	\sum_{t=1}^{Kd} \frac{\lambda_i + \lambda + \rho_{1:t-1}}{\lambda_i + \lambda} \bv_i \bv_i^\top
	\\
	\preceq &~
	\sum_{t=1}^{Kd} \frac{\lambda + \rho_{1:t-1}}{\lambda} \bv_i \bv_i^\top
	= \frac{\lambda + \rho_{1:t-1}}{\lambda} I
\end{align*}
Left and right multiply $(V_t \Lambda_t V_t^\top + \lambda I_{Kd})^{\frac{1}{2}}$ to get \[
	M_t + \lambda I_{Kd}
	\preceq
	\left(1 + \frac{\rho_{1:t-1}}{\lambda}\right) (V_t \Lambda_t V_t^\top + \lambda I_{Kd})
\]
\end{proof}
\end{lemma}

\begin{lemma}[Lemma 8 of \citet{kuzborskij2019efficient}] \label{lem:fd-det-origin}
Let $M_t = \sum_{s=1}^{t-1} \nabla^2 \ell_s(W_{s+1})$ be the cumulative Hessian, then for any $\lambda > 0, t > 0$,
\[
	\frac{\left| M_t + \lambda I_{Kd} \right|}{\left| \lambda I_{Kd} \right|}
	\le
	\left(1 + \frac{\rho_{1:t}}{\lambda}\right)^{Kd}
	\left( 1 + \frac{\tr(M_t)}{m\lambda} \right)^m
. \]
\begin{proof}
Let $(\mu_i, \bu_i)$ be the full eigen-pair of $M_t$,
and $(\lambda_i, \bv_i)$ be the full eigen-pair of $V_t \Lambda_t V_t$,
then we have the decomposition
\begin{align*}
	M_t + \lambda I_{Kd} &= \sum_{i=1}^{Kd} (\mu_i + \lambda) \bu_i \bu_i^\top
	\\
	V_t \Lambda_t V_t &= \sum_{i=1}^{Kd} \lambda_i \bv_i \bv_i^\top
\end{align*}
Therefore,
\begin{align*}
\left| M_t + \lambda I_d \right|
&= \prod_{i=1}^{Kd} (\mu_i + \lambda)
\\&~\text{(by the first inequality of Lemma~\ref{lem:fd-sketch-relation})}
\\&\le
\prod_{i=1}^{Kd} \left( \lambda_i + \rho_{1:t-1} + \lambda \right)
\\&~\text{(the rank of $V_t \Lambda_t V_t$ is at most $m$)}
\\&=
(\rho_{1:t-1} + \lambda)^{Kd - m}
\prod_{i=1}^{m} \left( \lambda_i + \rho_{1:t-1} + \lambda \right)
\\&~\text{(by AM-GM inequality)}
\\&\le
(\rho_{1:t-1} + \lambda)^{Kd - m}
\left(\sum_{i=1}^{m} \frac{\lambda_i}{m} + \rho_{1:t-1} + \lambda\right)^m
\\&=
(\rho_{1:t-1} + \lambda)^{Kd}
\left( \frac{\sum_{i=1}^{m} \lambda_i}{m(\rho_{1:t-1} + \lambda)} + 1\right)^m
\\&=
(\rho_{1:t-1} + \lambda)^{Kd}
\left( \frac{\tr(V_t \Lambda_t V_t)}{m(\rho_{1:t-1} + \lambda)} + 1\right)^m
\\&~\text{(by the second inequality of property \ref{lem:fd-sketch-relation})}
\\&\le
(\rho_{1:t-1} + \lambda)^{Kd}
\left( 1 + \frac{\tr(M_t)}{m(\rho_{1:t-1} + \lambda)} \right)^m
\end{align*}
And \[
	\left| \lambda I_{Kd} \right| = \lambda^{Kd}
, \] therefore \[
	\frac{\left| M_t + \lambda I_{Kd} \right|}{\left| \lambda I_{Kd} \right|}
	\le
	\left(1 + \frac{\rho_{1:t-1}}{\lambda}\right)^{Kd}
	\left( 1 + \frac{\tr(M_t)}{m(\rho_{1:t-1} + \lambda)} \right)^m
	\le
	\left(1 + \frac{\rho_{1:t}}{\lambda}\right)^{Kd}
	\left( 1 + \frac{\tr(M_t)}{m\lambda} \right)^m
. \]
\end{proof}
\end{lemma}

\newpage
\section{Theorem in section \ref{sec:fd-oful-mlogb}} \label{sec:proof-oful-mlogb}

In this section, we will give the proof details of Theorem \ref{thm:confidence-region}, Theorem \ref{thm:regret-K-ge-2} and Proposition \ref{prop:regret-k-eq-1}.

\subsection{Preliminary} \label{ssec:apd-preliminary}
In this section, we explain some notations and properties used in the proof.
The multinomial logistic function $\sig: \R^K \mapsto (0, 1)^{K+1}$ is defined as \[
	\sig(\z) = \frac{1}{1 + \sum_{k=1}^{K} \exp(x_k)} \left[1, \exp(x_1), \exp(x_2), \dots, \exp(x_K)\right]^\top
. \] However, since the $0$-th component of $\rwdv$ is $\rwd_0 = 0$, it does not contribute to the reward and the regret.
Namely, $\sum_{k=0}^{K} \rho_k \sigma_k(W\x) = \sum_{k=1}^K \rho_k \sigma_k(W\x)$.
Thus we ignore the $0$-th component of $\sig(\z)$, which is refined as \[
	\sig(\z) = \frac{1}{1 + \sum_{k=1}^{K} \exp(x_k)} \left[\exp(x_1), \exp(x_2), \dots, \exp(x_K)\right]^\top \in (0, 1)^{K}
. \]
\par 
For a special case where $K = 1$, the reward vector is defined as $\rwdv = [0, 1]^\top$.
Therefore the expected reward for given action $\x \in \A$ is \[
	\rwdv^\top \sig(\ws^\top \x) = 0 \cdot \sigma_0(\ws^\top \x) + 1 \cdot \sigma_1(\ws^\top \x) = \sigma_1(\ws^\top \x)
, \] where $\sigma_1(z) = \exp(z) / (1 + \exp(z))$.
And the regret analysis can be reduced to the binary logistic function $\sigma_1(z)$ without $\rwdv$ involved \citep{zhang2016logistic}.
\par 
The first order derivative of $\sig(\z)$ is \[
	\nabla \sig(\z) = \diag{\sig(\z)} - \sig(\z) \sig(\z)^\top
. \] And the second order derivative of $\rwdv^\top\sig(\z)$ is \[
	D^2 \sig(\z)[\rwdv]
	= \sum_{k=1}^K \rwd_k \sigma_k(\z) \left(2 \sig(\z) \sig(\z)^\top - \diag(\sig(\z)) - \e_k \sig(\z)^\top - \sig(\z) \e_k^\top + \e_k \e_k^\top\right)
, \] which is bounded by \begin{equation} \label{ineq:second-derivative-sigma-bound}
	- \|\rwdv\|_1 I \preceq D^2 \sig(\z)[\rwdv] \preceq \frac{1}{2} \|\rwdv\|_1 I
. \end{equation}

Let $\x_t$ be the action selected at round $t$ and $y_t \in \{0, 1, \dots, K\}$ be the observed outcome and $\sigma_k(\z) = [\sig(\z)]_k$ be the $k$-th component of $\sig(\z)$.
We denote $\y_t = [\I\{y_t = 1\}, \I\{y_t = 2\}, \dots, \I\{y_t = K\}]^\top$.
The logistic loss function $\ell_t: \R^{K \times d} \mapsto \R$ is defined as \[
	\ell_t(W) = - \ln \sigma_{y_t}(W \x_t)
, \] and the first and second order derivative of $\ell_t(W)$ are \begin{align*}
	\nabla \ell_t(W)
	&= (\sig(W \x_t) - \y_t) \otimes \x_t
	\\
	\nabla^2 \ell_t(W)
	&= \nabla \sig(W\x_t) \otimes \x_t \x_t^\top \\
	&= (\diag(\sig(W\x_t)) - \sig(W\x_t)\sig(W\x_t)^\top) \otimes \x_t \x_t^\top
\end{align*}
And this logistic function has $\sqrt6$-self-concordant-like property \citep[Lemma 4]{tran-dinh2015composite-self-concordant} and exp-concave-like property \citep[Lemma 4]{jezequel2021mixiability}, which is \begin{equation} \label{ineq:ell-self-concordant-like}
	\left| \left< D^3\ell_t(W)[\bu_1] \bu_2, \bu_3 \right> \right|
	\le \sqrt6 \left\| \bu_1 \right\|_2 \left\| \bu_2 \right\|_{\nabla^2 \ell_t(W)} \left\| \bu_3 \right\|_{\nabla^2 \ell_t(W)}
, \end{equation} and \begin{equation} \label{ineq:exp-concave-like}
	\ell_t(W_1) - \ell_t(W_2)
	\le \left< \nabla \ell_t(W_1), W_1 - W_2 \right> - \frac{1}{\ln(K+1) + 2 (SD + 1)} \left\| W_1 - W_2 \right\|_{\nabla^2 \ell_t(W_1)}^2
. \end{equation}

\subsection{Proof for Theorems} \label{ssec:proofthm}
We denote $\tail{t}{\lambda} = 1 + \rho_{1:t}/\lambda$ as the sketching error factor in the following proofs.
\paragraph{Proof for Theorem \ref{thm:confidence-region}}
\begin{proof}
We follow the proof of \citet[Theorem 3]{zhang2023mlb}, and emphasize the difference where FD are applied.
To prove the theorem \ref{thm:confidence-region} using the sketched technique,
the FD Lemma \ref{lem:fd-ineq-telescope} and \ref{lem:fd-det-origin} is pivotal.
We fix the step size $\eta = \frac{1}{2} (\ln(K+1) + 2 (SD + 1))$
\par 
In the OMD analysis, the surrogate loss function \[
	\ellt_t(W) := \left< \nabla \ell_t(\Wv_t), \Wv - \Wv_t \right> + \frac{1}{2} \left\| \Wv - \Wv_t \right\|_{\nabla^2 \ell_t(W_t)}^2
\] is a second order approximation of the loss function $\ell_t(W)$ on the point $W_t$ instead of the linear approximation $\left< \nabla \ell_t(\Wv_t), \Wv - \Wv_t \right>$ in traditional ONS. And the update rule is equivalent to \[
	W_{t+1} := \argmin_{W \in \W}
		\eta \ellt_t(W)
		+ \frac{1}{2} \left\| \Wv - \Wv_t \right\|_{Z_t}^2
. \] By the first order optimal condition, we have \[
	\forall \bu \in \W, \left< \eta \nabla \ellt_t(W_t) + Z_t(\Wv_{t+1} - \Wv_t), \bu - \Wv_{t+1} \right> \ge 0
. \] Let $\bu = \Wvs$, we have \[
	\eta \left< \nabla \ellt_t(\Wv_t), \Wv_{t+1} - \Wvs \right>
	\le \frac{1}{2} \left\| \Wvs - \Wv_t  \right\|_{Z_t}^2
		- \frac{1}{2} \left\| \Wvs - \Wv_{t+1}  \right\|_{Z_t}^2
		- \frac{1}{2} \left\| \Wv_t - \Wv_{t+1}  \right\|_{Z_t}^2
. \] And by \eqref{ineq:exp-concave-like} \[
	\eta(\ell_t(W_{t+1}) - \ell_t(\Ws))
	\le \eta \left< \nabla \ell_t(\Wv_{t+1}), \Wv_{t+1} - \Wvs \right> - \frac{1}{2} \left\| W_{t+1} - \Ws \right\|_{\nabla^2 \ell_t(W_{t+1})}^2
. \] Add the two inequalities together and rearrange to get \begin{align} \label{ineq:ps-cr-range}
	&~
	\frac{1}{2} \left\| \Wvs - \Wv_{t+1}  \right\|_{Z_t + \nabla^2 \ell_t(W_{t+1})}^2
	- \frac{1}{2} \left\| \Wvs - \Wv_t  \right\|_{Z_t}^2
	+ \frac{1}{2} \left\| \Wv_t - \Wv_{t+1}  \right\|_{Z_t}^2
	\notag\\ \le&~
	\eta (\ell_t(\Ws) - \ell_t(W_{t+1}))
	+ \eta \left< \nabla \ell_t(\Wv_{t+1}) - \nabla \ellt_t(\Wv_{t+1}), \Wv_{t+1} - \Wvs \right>
\end{align}
In left hand side \begin{align*}
	\frac{1}{2} \left\| \Wvs - \Wv_{t+1} \right\|_{Z_t + \nabla^2 \ell_t(W_{t+1})}^2
	&= \frac{1}{2} \left\| \Wvs - \Wv_{t+1} \right\|_{\lambda I + V_t \Lambda_t V_t^\top + \nabla^2 \ell_t(W_{t+1})}^2
	\\
	&\ge \frac{1}{2} \left\| \Wvs - \Wv_{t+1} \right\|_{\lambda I + V_{t+1} \Lambda_{t+1} V_{t+1}^\top}^2
	\\
	&= \frac{1}{2} \left\| \Wvs - \Wv_{t+1} \right\|_{Z_{t+1}}^2
\end{align*} where the inequality is by FD Lemma \ref{lem:fd-ineq-telescope}.
And in the right hand side, the second term $\langle \nabla \ell_t(\Wv_{t+1}) - \nabla \ellt_t(\Wv_{t+1}), \Wv_{t+1} - \Wvs \rangle$
is the first order Taylor remainder of the loss function $f(W) = \left< \nabla \ell_t(\Wv), \Wv_{t+1} - \Wvs \right>$ between the point $\Wv_{t+1}$ and $\Wv_t$,
and thus can be bounded by \eqref{ineq:ell-self-concordant-like}.
\begin{align*}
	\left< \nabla \ell_t(\Wv_{t+1}) - \nabla \ellt_t(\Wv_{t+1}), \Wv_{t+1} - \Wvs \right>
	&=
	\left< \frac{1}{2!} D^3\ell_t(\xi) \left[\Wv_t - \Wv_{t-1}, \Wv_t - \Wv_{t-1}\right], \Wv_t - \Wvs \right>
	\\
	&=
	\frac{1}{2} D^3\ell_t(\xi) \left[\Wv_t - \Wv_{t-1}, \Wv_t - \Wv_{t-1}, \Wv_t - \Wvs\right]
	\\
	&=
	\frac{1}{2} D^3\ell_t(\xi) \left[\Wv_t - \Wvs, \Wv_t - \Wv_{t-1}, \Wv_t - \Wv_{t-1}\right]
	\\
	&\le
	\frac{\sqrt6}{2} \left\| \Wv_t - \Wvs \right\|_2 \left\| \Wv_t - \Wv_{t-1} \right\|_{\nabla^2 \ell_t(\xi)}^2
	\\
	&\le
	\sqrt6 L S \left\| \Wv_{t+1} - \Wv_t \right\|_{I\otimes\x_t\x_t^\top}^2
\end{align*}
Plug the above inequality into \eqref{ineq:ps-cr-range} and sum over $t$ to get
\begin{align} \label{ineq:ps-cr-range-sum}
	\left\| \Wv_{t+1} - \Ws \right\|_{Z_{t+1}}^2 - \left\| \Wv_1 - \Ws \right\|_{Z_1}^2
	\le &~
	2 \eta \sum_{s=1}^{t} \ell_s(\Ws) - \ell_s(W_{s+1})
	+ 2 \sqrt6 \eta L S \sum_{s=1}^{t} \left\| \Wv_{s+1} - \Wv_s \right\|_{I\otimes\x_s\x_s^\top}^2
	\notag\\ &
	- \sum_{s=1}^{t} \left\| \Wv_{s+1} - \Wv_s \right\|_{Z_s}^2
. \end{align} $\sum_{s=1}^{t} \ell_s(\Ws) - \ell_s(W_{s+1})$ can be bounded by Lemma \ref{lem:cr-ell-regret} for any martingale sequence $\{ (W_t, \x_t) \,|\, t \in \N \}$.
\begin{align*}
	\sum_{s=1}^{t} \ell_s(\Ws) - \ell_s(W_{s+1})
	\le &~
	\left(D S + \ln(K+1) + 2 \ln ((K+1) t + 2)\right) \left(\frac{5}{4}  + 4 \ln \frac{\sqrt{1 + 2 t}}{\delta}\right)
	+
	2 + \ln (K+1)
	\\ &
	+
	\frac{\sqrt6}{24 Kd} \lambda \tail{t}{\lambda} \left( Kd \ln \tail{t}{\lambda/2} + m \ln (1 + 2\frac{D^2}{m \lambda}t) \right)
	+ \frac{12 Kd}{\lambda} \sum_{s=1}^{t} \left\| \Wv_{s+1} - \Wv_s \right\|_{Z_s}^2
\end{align*}
We apply the lemma \ref{lem:cr-ell-regret} to \eqref{ineq:ps-cr-range-sum} ,
and tune the regularized parameter $\lambda \ge \Max{2 L D^2, 2 \eta \Max{2 \sqrt6 D^2 S L, 24 Kd}}$ to get the last three terms cancel out \begin{align*}
	&~
	\frac{24 \eta Kd}{\lambda} \sum_{s=1}^{t} \left\| \Wv_s - \Wv_{s+1} \right\|_{Z_s}^2
	+ 2 \sqrt6 \eta L S \sum_{s=1}^{t} \left\| \Wv_{s+1} - \Wv_s \right\|_{I \otimes \x_t \x_t^\top}^2
	- \sum_{s=1}^{t} \left\| \Wv_{s+1} - \Wv_s \right\|_{Z_s}^2
	\\
	\le&~
	\frac{24 \eta Kd}{\lambda} \sum_{s=1}^{t} \left\| \Wv_s - \Wv_{s+1} \right\|_{Z_s}^2
	+ 2 \sqrt6 \eta L S D^2 \sum_{s=1}^{t} \left\| \Wv_{s+1} - \Wv_s \right\|_2^2
	- \sum_{s=1}^{t} \left\| \Wv_{s+1} - \Wv_s \right\|_{Z_s}^2
	\\
	\le &~
	0
. \end{align*}
Therefore, with $\eta = \O(\ln K)$, we have \[
	\left\| \Wv_{t+1} - \Ws \right\|_{Z_{t+1}}^2
	\le
	\beta_{t+1}(\delta)
, \] where
\begin{align}
	\beta_{t+1}(\delta)
	=&~
	\lambda (\left\| W_1 \right\|_F + S)^2
	\notag\\&
	+ 2 \eta \left(D S + \ln(K+1) + 2 \ln ((K+1) t + 2)\right) \left(\frac{5}{4}  + 4 \ln \frac{\sqrt{1 + 2 t}}{\delta}\right)
	+ 2 \eta (2 + \ln (K+1))
	\notag\\&
	+ \frac{\sqrt6 \eta}{12 Kd} \lambda \tail{t}{\lambda} \left(K d \ln \tail{t}{\lambda/2} + m \ln (1 + 2\frac{D^2}{m \lambda} t)\right)
	\notag\\&
	+ \sum_{s=1}^{t} 2\sqrt6 \eta L S \left\| (W_{s+1} - W_s) \x_s \right\|_2^2 + (\frac{24 \eta K d}{\lambda} - 1) \left\| \Wv_{s+1} - \Wv_s \right\|_{Z_s}^2
	\label{eq:cr-beta}
\end{align}
And
\[
	\beta_{t+1}(\delta)
	=
	\O\left(\ln K (\ln K + \ln t) \tail{t}{\lambda} \left(K d \ln \tail{t} + m \ln t + \ln \frac{1}{\delta}\right)\right)
. \]
\end{proof}

\paragraph{Proof for Theorem \ref{thm:regret-K-ge-2}}
\begin{proof}
The actual expected reward $\rwdv^\top \sig(\Ws \x)$ and the estimated reward $\rwdv^\top \sig(W_t \x)$ has the following difference:
\begin{lemma}
\label{lem:diff-rwd-upper-bound}
For any $\delta \in (0, 1)$, with probability at least $1 - \delta$, we have \[
	\forall t > 0.~
	\left| \rwdv^\top \sig(\Ws \x) - \rwdv^\top \sig(W_t \x) \right|
	\le
	b_t(\x, \delta)
	=
	\dfst_t(\x, \delta) + \dsnd_t(\x, \delta)
, \] where \[
	\dfst_t(\x, \delta)
	=
	\sqrt{\beta_t(\delta)} \left\| (\nabla \sig(W_t \x) \otimes \x) \rwdv \right\|_{Z_t^{-1}}
, \] and \[
	\dsnd_t(\x, \delta)
	=
	\frac{\|\rwdv\|_1 }{2} \beta_t(\delta) \left\| Z_t^{-\frac{1}{2}} (I \otimes \x) \right\|_{2,2}^2
. \]
\end{lemma}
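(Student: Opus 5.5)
We work on the event of Theorem~\ref{thm:confidence-region}, which holds with probability at least $1-\delta$ simultaneously for all $t$. On this event $\|\Wvs-\Wv_t\|_{Z_t}^2\le\beta_t(\delta)$, and we aim to show the two-sided bound deterministically for every $\x$. Fix $t$ and $\x$, and set $\Delta=\Wvs-\Wv_t\in\R^{Kd}$ and $\bv=(\Ws-W_t)\x\in\R^K$. Under the row-wise vectorization, $\bv=(I_K\otimes\x)^\top\Delta$.

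\textbf{Step 1: integral Taylor expansion.} Let $g(\z)=\rwdv^\top\sig(\z)$ and expand it around $\z_0=W_t\x$ with the integral form of the remainder:
\[
g(\z_0+\bv)-g(\z_0)=\rwdv^\top\nabla\sig(\z_0)\bv+\int_0^1(1-s)\,\bv^\top D^2\sig(\z_0+s\bv)[\rwdv]\,\bv\,ds .
\]
This form, rather than a Lagrange remainder, is what produces the factor $\tfrac12$ on both sides, because $\int_0^1(1-s)\,ds=\tfrac12$. We then bound the two terms in absolute value separately.

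\textbf{Step 2: first-order term.} Since $\nabla\sig(\z_0)$ is symmetric, $\rwdv^\top\nabla\sig(\z_0)\bv=\sum_k(\nabla\sig(\z_0)\rwdv)_k\,\x^\top\Delta_k$. This equals $\langle(\nabla\sig(\z_0)\otimes\x)\rwdv,\Delta\rangle$, where $\Delta_k$ is the $k$-th block of $\Delta$. By Cauchy--Schwarz in the $Z_t$ / $Z_t^{-1}$ pairing, its absolute value is at most
\[
\|(\nabla\sig(W_t\x)\otimes\x)\rwdv\|_{Z_t^{-1}}\,\|\Delta\|_{Z_t}\le\dfst_t(\x,\delta).
\]

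\textbf{Step 3: second-order term.} From \eqref{ineq:second-derivative-sigma-bound}, $-\|\rwdv\|_1 I\preceq D^2\sig(\z)[\rwdv]\preceq\tfrac12\|\rwdv\|_1 I$ for every $\z\in\R^K$. This gives $|\bu^\top D^2\sig(\z)[\rwdv]\bu|\le\|\rwdv\|_1\|\bu\|_2^2$ uniformly in $\z$, and in particular along the whole segment, so no domain restriction is needed. Hence the remainder is bounded in absolute value by $\tfrac12\|\rwdv\|_1\|\bv\|_2^2$. Writing $\bv=(I_K\otimes\x)^\top Z_t^{-1/2}Z_t^{1/2}\Delta$, we get
\[
\|\bv\|_2^2\le\|Z_t^{-1/2}(I_K\otimes\x)\|_2^2\,\|\Delta\|_{Z_t}^2\le\beta_t(\delta)\,\|Z_t^{-1/2}(I_K\otimes\x)\|_2^2 .
\]
The remainder is therefore at most $\dsnd_t(\x,\delta)$.

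\textbf{Step 4: combine.} The triangle inequality gives $|\rwdv^\top\sig(\Ws\x)-\rwdv^\top\sig(W_t\x)|\le\dfst_t(\x,\delta)+\dsnd_t(\x,\delta)=b_t(\x,\delta)$ for all $t$ on the event.

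The only delicate point is Step 3. The asymmetric semidefinite bound must be converted into the symmetric operator-norm bound $\|D^2\sig(\z)[\rwdv]\|_2\le\|\rwdv\|_1$, and this must be paired with the integral remainder. Using a one-sided Lagrange argument instead would lose the $\tfrac12$ on the lower side.
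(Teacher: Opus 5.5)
Your proof is correct and follows the paper's argument. It uses a second-order Taylor expansion of $\rwdv^\top\sig$ about $W_t\x$, Cauchy--Schwarz in the $Z_t$/$Z_t^{-1}$ pairing for the linear term, the uniform curvature bound \eqref{ineq:second-derivative-sigma-bound}, and the confidence region of Theorem~\ref{thm:confidence-region}. Your explicit step $|\bu^\top D^2\sig(\z)[\rwdv]\bu|\le\|\rwdv\|_1\|\bu\|_2^2$ is cleaner than the paper's identification of the operator norm with $\lambda_{\max}$.

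The only inaccuracy is your remark about the remainder. Since $\rwdv^\top\sig$ is scalar-valued, the Lagrange form $\tfrac12\bv^\top D^2\sig(\xi_1)[\rwdv]\bv$ used in the paper carries the same factor $\tfrac12$ and loses nothing. The integral form is therefore a stylistic choice, not a necessity.
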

The proof of the above lemma is similar to \citet[C.2.1]{zhang2023mlb}, and the main difference is the replacement of $H_t$ with $Z_t$.
We delay its proof to Appendix \ref{proof:lem-diff-rwd-upper-bound}.
\begin{corollary}[Proposition 1 of \citet{zhang2023mlb}]
For any $\delta \in (0, 1)$, with probability at least $1 - \delta$, we have \[
	\forall t > 0.~
	\rwdv^\top \sig(\Ws \x)
	\le
	R_t(\x, \delta)
	=
	\rwdv^\top \sig(W_t \x)
	+ b_t(\x, \delta)
. \]
\end{corollary}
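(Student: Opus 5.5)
The plan is to obtain this corollary directly from Lemma~\ref{lem:diff-rwd-upper-bound}, with no new probabilistic argument. Let $\mathcal{E}$ be the event of Theorem~\ref{thm:confidence-region}, namely $\Ws\in\C_t$ for all $t\ge1$, which has probability at least $1-\delta$. On $\mathcal{E}$, Lemma~\ref{lem:diff-rwd-upper-bound} gives, simultaneously for every $t>0$ and every $\x$,
\[
\left|\rwdv^\top\sig(\Ws\x)-\rwdv^\top\sig(W_t\x)\right|\le b_t(\x,\delta).
\]
Dropping the absolute value on the side we need yields $\rwdv^\top\sig(\Ws\x)\le\rwdv^\top\sig(W_t\x)+b_t(\x,\delta)=R_t(\x,\delta)$. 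This matches the $K\ge2$ branch of \eqref{eq:oful-mlogb-optimistic-reward} with $H_t$ replaced by $Z_t$. By Lemma~\ref{lem:fd-oful-mlogb-efficient-reward}, it also coincides with the computable form \eqref{eq:fd-oful-mlogb-efficient-reward}, since $\|Z_t^{-1/2}(I_K\otimes\x)\|_2^2=\|A(\x)\|_2$.

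The key steps, in order, are these. First, fix the good event $\mathcal{E}$ once, so that the guarantee holds uniformly over all $t$ and all actions; this is the event on which Lemma~\ref{lem:diff-rwd-upper-bound} is proved, so no union bound over $t$ is needed. Second, apply the one-sided part of the lemma. Third, identify $b_t$ with the bonus used by the algorithm. If the binary case is meant to be covered as well, I would add that for $K=1$ the event $\Ws\in\C_t$ gives $\sigma(\ws^\top\x)\le\max_{\w\in\C_t}\sigma(\w^\top\x)$ immediately.

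The only real content is inside Lemma~\ref{lem:diff-rwd-upper-bound}, whose proof is deferred. I expect that to be the main obstacle, and I would handle it as follows. Take a second-order Taylor expansion of $\z\mapsto\rwdv^\top\sig(\z)$ around $W_t\x$, and set $\Delta=(\Ws-W_t)\x=(I_K\otimes\x)^\top(\Wvs-\Wv_t)$.

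For the first-order term, write $\rwdv^\top\nabla\sig(W_t\x)\Delta=\langle(\nabla\sig(W_t\x)\otimes\x)\rwdv,\Wvs-\Wv_t\rangle$. Cauchy--Schwarz in the $Z_t$ / $Z_t^{-1}$ pairing bounds it by $\dfst_t$ on $\mathcal{E}$.

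For the remainder, use the bound \eqref{ineq:second-derivative-sigma-bound} on $D^2\sig[\rwdv]$ to get at most $\tfrac12\|\rwdv\|_1\|\Delta\|_2^2$. Then
\[
\|\Delta\|_2^2\le\|Z_t^{-1/2}(I_K\otimes\x)\|_2^2\,\|\Wvs-\Wv_t\|_{Z_t}^2\le\beta_t(\delta)\,\|Z_t^{-1/2}(I_K\otimes\x)\|_2^2,
\]
which gives $\dsnd_t$.

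Since these steps only use that $Z_t$ is positive definite and that $\Ws$ lies in the $Z_t$-ellipsoid, replacing $H_t$ by $Z_t$ causes no difficulty.
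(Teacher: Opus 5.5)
Your proposal is correct and follows the paper's route. The corollary is read off from Lemma~\ref{lem:diff-rwd-upper-bound} by dropping the absolute value on the good event of Theorem~\ref{thm:confidence-region}. Your sketch of that lemma is the same argument as the paper's deferred proof:
\begin{itemize}
\item a second-order Taylor expansion of $\rwdv^\top\sig(\cdot)$ at $W_t\x$;
\item Cauchy--Schwarz in the $Z_t$/$Z_t^{-1}$ pairing for the linear term;
\item the curvature bound \eqref{ineq:second-derivative-sigma-bound} combined with $\|\Delta\|_2^2\le\|Z_t^{-1/2}(I_K\otimes\x)\|_2^2\,\|\Wvs-\Wv_t\|_{Z_t}^2$ for the remainder.
\end{itemize}
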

The regret is defined as the expected reward of the best action minus the expected reward of the action taken by the algorithm:
\begin{align*}
\reg(T)
= &~
\sum_{t=1}^{T} \left( \rwdv^\top \sig(\Ws \xs) - \rwdv^\top \sig(\Ws \x_t) \right)
\\
= &~
\sum_{t=1}^{T}
\underbrace{
	\rwdv^\top \sig(\Ws \xs) - \rwdv^\top \sig(W_t \xs)
}_{\texttt{term a}}
+
\underbrace{
	\rwdv^\top \sig(W_t \xs) - \rwdv^\top \sig(W_t \x_t)
}_{\texttt{term b}}
\\ &~
+
\underbrace{
	\rwdv^\top \sig(W_t \x_t)- \rwdv^\top \sig(\Ws \x_t)
}_{\texttt{term c}}
\end{align*}
The \texttt{term a} and \texttt{term c} is bounded \[
	\texttt{term a} \le b_t(\xs, \delta)
	,
	\texttt{term c} \le b_t(\x_t, \delta)
, \] by the lemma \ref{lem:diff-rwd-upper-bound}.
The \texttt{term b} is bounded \[
	R_t(\xs, \delta) \le R_t(\x_t, \delta)
	\implies
	\texttt{term b} \le b_t(\x_t, \delta) - b_t(\xs, \delta)
, \] by the decision rule $\x_t := \argmax_{\x \in \D_t} R_t(\x, \delta)$.
Thus the regret is bounded by $2 \sum_{t=1}^{T} b_t(\x_t, \delta)$. \begin{equation} \label{ineq:reg-less-delta}
	\reg(T)
	\le
	2 \sum_{t=1}^{T} b_t(\x_t, \delta)
	=
	2 \sum_{t=1}^{T} \dfst_t(\x_t, \delta) + 2 \sum_{t=1}^{T} \dsnd_t(\x_t, \delta)
. \end{equation}
And we substitute $H_t$ to $Z_t$ to get the lemma holds in the proposed algorithm.
For the term $\sum_{t=1}^{T} \dfst_{t-1}(\x_t, \delta)$,
\begin{align*}
&~
\sum_{t=1}^{T} \dfst_t(\x_t, \delta)
=
\sum_{t=1}^{T}
\sqrt{\beta_t(\delta)}
\left\| (\nabla \sig(W_t \x_t) \otimes \x_t) \rwdv \right\|_{Z_t^{-1}}
\\
\le &~
\sqrt{\beta_T(\delta)} \sum_{t=1}^{T}
\left\| (\nabla \sig(W_{t+1} \x_t) \otimes \x_t) \rwdv \right\|_{Z_t^{-1}}
+
\left\| ((\nabla \sig(W_t \x_t) - \nabla \sig(W_{t+1} \x_t)) \otimes \x_t) \rwdv \right\|_{Z_t^{-1}}
\\
\le &~
\sqrt{\beta_T(\delta)}
\sum_{t=1}^{T}
\left\| \nabla \sig(W_{t+1} \x_t)^{\frac{1}{2}} \rwdv \right\|_2
\left\| \nabla \sig(W_{t+1} \x_t)^{\frac{1}{2}} \otimes \x_t\right\|_{2,Z_t^{-1}}
\\ &
+
\frac{1}{2}
\max_{\xi_1} \left\| Z_t^{-\frac{1}{2}} (D^2 \sig(\xi_1)[\rwdv] \otimes \x_t \x_t^\top) Z_t^{-\frac{1}{2}} \right\|_{2,2}
\left\|\Wv_t - \Wv_{t+1}\right\|_{Z_t}
\end{align*}
where the second inequality is by Mean Value Theorem of vector function in form of finite increment.
\par 
The following terms is bounded: $\left\| \nabla \sig(W_{t+1} \x_t)^{\frac{1}{2}} \rwdv \right\|_2 \le R$,
and $\left\|\Wv_t - \Wv_{t+1}\right\|_{Z_t} \le \frac{2 \eta D}{\sqrt{\lambda}}$ by lemma \ref{lem:omd-wt-wt-1-bound}.
And moreover
\begin{align*}
\left\| \nabla \sig(W_{t+1} \x_t)^{\frac{1}{2}} \otimes \x_t\right\|_{2,Z_t^{-1}}
&\le \sqrt{\tr((\nabla \sig(W_{t+1} \x_t) \otimes \x_t \x_t^\top) Z_t^{-1})}
= \sqrt{\tr(\nabla^2 \ell_t(W_{t+1}) Z_t^{-1})}
\\
\max_{\xi_1} \left\| Z_t^{-\frac{1}{2}} (D^2 \sig(\xi_1)[\rwdv] \otimes \x_t \x_t^\top) Z_t^{-\frac{1}{2}} \right\|_{2,2}
&\le \|\rwdv\|_1 \lambda_{\max}( (I \otimes \x_t^\top) Z_t^{-1} (I \otimes \x_t) )
\le \|\rwdv\|_1 \tr\left(Z_t^{-1} (I \otimes \x_t \x_t^\top)\right)
\end{align*}
in which the second inequality is by $- \|\rwdv\|_1 I \preceq D^2 \sig(\xi_1)[\rwdv] \preceq \frac{1}{2} \|\rwdv\|_1 I$.
Thus by GM-AM inequality, we have
\begin{equation} \label{ineq:regret-dfst-bound}
\sum_{t=1}^{T} \dfst_t(\x_t, \delta)
\le
R \sqrt{\beta_T(\delta)} \sqrt{T} \sqrt{\sum_{t=1}^{T} \tr\left( Z_t^{-1} \nabla^2 \ell_t(W_{t+1}) \right)}
+
\frac{2 \eta D}{\sqrt{\lambda}} \|\rwdv\|_1
\sqrt{\beta_{T-1}(\delta)}
\sum_{t=1}^{T} \tr\left( Z_t^{-1} (I \otimes \x_t \x_t^\top) \right)
\end{equation}
\par 
And for the term $\dsnd_t(\x_t, \delta)$
\begin{align}
\sum_{t=1}^{T} \dsnd_t(\x_t, \delta)
&=
\sum_{t=1}^{T}
\frac{\|\rwdv\|_1}{2} \beta_t(\delta)
\left\| Z_t^{-\frac{1}{2}} (I \otimes \x) \right\|_{2,2}^2
\le
\frac{\|\rwdv\|_1}{2} \beta_T(\delta)
\sum_{t=1}^{T}
\left\| Z_t^{-\frac{1}{2}} (I \otimes \x) \right\|_{2,2}^2
\notag\\
&\le
\frac{\|\rwdv\|_1}{2} \beta_T(\delta)
\sum_{t=1}^{T}
\lambda_{\max}((I \otimes \x)^\top Z_t^{-1} (I \otimes \x))
\notag\\
&\le
\frac{\|\rwdv\|_1}{2} \beta_T(\delta)
\sum_{t=1}^{T}
\tr((I \otimes \x)^\top Z_t^{-1} (I \otimes \x))
\notag\\
&=
\frac{\|\rwdv\|_1}{2} \beta_T(\delta)
\sum_{t=1}^{T}
\tr(Z_t^{-1} (I \otimes \x \x^\top))
\label{ineq:regret-dsnd-bound}
\end{align}
\par 
Plugging \eqref{ineq:regret-dfst-bound} and \eqref{ineq:regret-dsnd-bound} into the regret bound \eqref{ineq:reg-less-delta},
\begin{align*}
\reg(T)
\le &~
2 R \sqrt{\beta_{T-1}(\delta)} \sqrt{T}
\sqrt{ \sum_{t=1}^{T} \tr\left( Z_t^{-1} \nabla^2 \ell_t(W_{t+1}) \right)}
\\ &
+
R \sqrt{K}
\left( \frac{4 \eta D }{\sqrt{\lambda}} \sqrt{\beta_{T-1}(\delta)} + \beta_{T-1}(\delta) \right)
\sum_{t=1}^{T} \tr\left( Z_t^{-1} (I \otimes \x_t \x_t^\top) \right)
\end{align*}
By lemma \ref{lem:fd-self-norm-t} and \ref{lem:fd-self-norm-I-xxt-zt},
\begin{align*}
\reg(T)
\le &~
2 R \sqrt{\beta_{T-1}(\delta)} \sqrt{T}
\sqrt{\tail{T} \left(Kd \ln \tail{T}{\lambda/2} + m \ln (1 + 2\frac{D^2}{m \lambda}T)\right)}
\\ &
+
R \sqrt{K}
\left( \frac{4 \eta D }{\sqrt{\lambda}} \sqrt{\beta_{T-1}(\delta)} + \beta_{T-1}(\delta) \right)
\kappa \tail{T}{\lambda} \left(Kd \ln \tail{T}{\lambda/2} + m \ln (1 + 2\frac{D^2}{m \lambda}T)\right)
\end{align*}
Here
\begin{align*}
	\beta_T(\delta)
	&=
	\O\left(
		\ln K (\ln K + \ln T) \tail{t} (K d \ln \tail{t} + m \ln T)
	\right)
. \end{align*}
Therefore the regret bound becomes
\begin{align*}
&~
\reg(T)
\\ = &~
\O\left(
	\tail{T}{\lambda} (Kd \ln \tail{T} + m \ln T) \ln K \sqrt{\ln T} \sqrt{T}
	+
	\kappa \sqrt{K} \ln^2 K \ln T \tail{T}^2 (Kd \ln \tail{T}{\lambda} + m \ln T)^2
\right)
\\ = &~
\Ot\left(
	\tail{T}{\lambda} (K d \ln \tail{T}{\lambda} + m) \sqrt{T}
\right)
\end{align*}
\end{proof}

\paragraph{Proof for Proposition \ref{prop:regret-k-eq-1}}
\begin{proof}
In the
previous discussion,
The decision rule is \begin{equation} \label{eq:ofu-mlogb-optimistic-reward-rule-K-eq-1}
	(\wh_t, \x_t) = \argmax_{\w\in\W,\x\in\A_t} \w^\top \x_t
, \end{equation} and the regret is \[
	\reg(T)
	=
	\sum_{t=1}^{T} \sigma(\ws^\top \xs) - \sigma(\ws^\top \x_t)
, \] since $\rwdv = [0, 1]^\top$ and we ignore the $0$-th component of $\sig(\cdot)$.
And we perform the similar decomposition as in \citet[Corollary 1]{zhang2023mlb}:
\begin{align*}
&~
\reg(T)
=
\sum_{t=1}^{T} \sigma(\ws^\top \xs) - \sigma(\ws^\top \x_t)
\\ &~\text{(by optimistic decision rule \eqref{eq:ofu-mlogb-optimistic-reward-rule-K-eq-1})}
\\\le &~
\sum_{t=1}^{T} \sigma(\wh_t^\top \x_t) - \sigma(\ws^\top \x_t)
\\ &~\text{(by Taylor's theorem)}
\\= &~
\sum_{t=1}^{T}
\sigma'(\ws^\top \x_t)(\wh_t - \ws)^\top \x_t
+
\sigma''(\xi_t) ((\wh_t - \ws)^\top \x_t)^2
\\= &~
\sum_{t=1}^{T}
\sqrt{\sigma'(\ws^\top \x_t)}(\wh_t - \ws)^\top \sqrt{\sigma'(\ws^\top \x_t)}\x_t
+
\sigma''(\xi_t) ((\wh_t - \ws)^\top \x_t)^2
\\&~\text{(by Taylor's theorem)}
\\= &~
\sum_{t=1}^{T}
\sqrt{\sigma'(\ws^\top \x_t)} (\wh_t - \ws)^\top \sqrt{\sigma'(\w_{t+1}^\top \x_t)}\x_t
+
\sqrt{\sigma'(\ws^\top \x_t)} (\wh_t - \ws)^\top \x_t
\frac{\sigma''(\psi_t)}{2 \sqrt{\sigma'(\psi_t)}}
(\ws - \w_{t+1})^\top \x_t
\\ &
+
\sigma''(\xi_t) ((\wh_t - \ws)^\top \x_t)^2
\\&~\text{(by
	$\sigma'(\cdot) \le \frac{1}{4}$,
	$\sigma''(\cdot) \le \frac{1}{4}$, and
	$\sigma''(\cdot)/(2 \sqrt{\sigma'(\cdot)}) \le \frac{1}{4}$
)}
\\\le &~
\sum_{t=1}^{T}
\sqrt{\sigma'(\ws^\top \x_t)}(\wh_t - \ws)^\top \sqrt{\sigma'(\w_{t+1}^\top \x_t)}\x_t
+
\frac{1}{8}
(\wh_t - \ws)^\top \x_t
(\ws - \w_{t+1})^\top \x_t
+
\frac{1}{4} ((\wh_t - \ws)^\top \x_t)^2
\\&~\text{(by Cauchy-Schwarz inequality)}
\\\le &~
\sum_{t=1}^{T}
\left\| \wh_t - \ws \right\|_{Z_t}
\sqrt{\sigma'(\ws^\top \x_t)}
\left\| \sqrt{\sigma'(\w_{t+1}^\top \x_t)}\x_t \right\|_{Z_t^{-1}}
\\ &
+
\frac{1}{8}
\left\| \wh_t - \ws \right\|_{Z_t}
\left\| \ws - \w_{t+1} \right\|_{Z_{t+1}}
\left\| \x_t \right\|_{Z_t^{-1}}
\left\| \x_t \right\|_{Z_{t+1}^{-1}}
+
\frac{1}{4} \left\| \wh_t - \ws \right\|_{Z_t}^2
\left\| \x_t \right\|_{Z_t^{-1}}^2
\end{align*}
where the second equality is by
$\frac{ \sigma''(\cdot)}{2 \sqrt{\sigma'(\cdot)}}
= \frac{1}{2} (1 - 2 \sigma(\cdot)) \sqrt{\sigma(\cdot) (1 - \sigma(\cdot))}
\le \frac{1}{4}$ and $\sigma'(\cdot) \le \frac{1}{4}$.
\par 
The term $\|\w_t - \ws\|_{Z_t}$ is bounded by Theorem \ref{thm:confidence-region}, as well as $\|\wh_t - \ws\|_{Z_t}$
\begin{align*}
	\left\| \w_t - \ws \right\|_{Z_t}
	&\le \sqrt{\beta_t(\delta)} \le \sqrt{\beta_T(\delta)} \\
	\left\| \wh_t - \ws \right\|_{Z_t}
	&\le \left\| \wh_t - \w_t \right\|_{Z_t} + \left\| \w_t - \ws \right\|_{Z_t}
	\le 2 \sqrt{\beta_t(\delta)} \le 2 \sqrt{\beta_T(\delta)}
\end{align*}
Therefore we perform Cauchy-Schwarz inequality on the first and second terms
\begin{align*}
	\sum_{t=1}^{T}
		\left\| \wh_t - \ws \right\|_{Z_t}
		\sqrt{\sigma'(\ws^\top \x_t)}
		\left\| \sqrt{\sigma'(\w_{t+1}^\top \x_t)}\x_t \right\|_{Z_t^{-1}}
	&\le \sqrt{\beta_T(\delta)}
		\sum_{t=1}^{T} \sqrt{\sigma'(\ws^\top \x_t)} \left\| \sqrt{\sigma'(\w_{t+1}^\top \x_t)}\x_t \right\|_{Z_t^{-1}}
	\\
	&\le \sqrt{\beta_T(\delta)}
		\sqrt{\sum_{t=1}^{T} \sigma'(\ws^\top \x_t)}
		\sqrt{\sum_{t=1}^{T} \left\| \sqrt{\sigma'(\w_{t+1}^\top \x_t)}\x_t \right\|_{Z_t^{-1}}^2}
	\\
	\sum_{t=1}^{T}
		\frac{1}{8}
		\left\| \wh_t - \ws \right\|_{Z_t}
		\left\| \ws - \w_{t+1} \right\|_{Z_{t+1}}
		\left\| \x_t \right\|_{Z_t^{-1}}
		\left\| \x_t \right\|_{Z_{t+1}^{-1}}
	&\le \frac{1}{4} \beta_T(\delta)
		\sqrt{\sum_{t=1}^{T} \left\| \x_t \right\|_{Z_t^{-1}}^2}
		\sqrt{\sum_{t=1}^{T} \left\| \x_t \right\|_{Z_{t+1}^{-1}}^2}
\end{align*}
And for the third term, we have
\begin{align*}
	\sum_{t=1}^{T}
		\frac{1}{4} \left\| \wh_t - \ws \right\|_{Z_t}^2
		\left\| \x_t \right\|_{Z_t^{-1}}^2
	&\le \beta_T(\delta)
		\sum_{t=1}^{T} \left\| \x_t \right\|_{Z_t^{-1}}^2
\end{align*}
\par 
By Lemma \ref{lem:fd-self-norm-t}, \ref{lem:fd-self-norm-I-xxt-zt+1}, and \ref{lem:fd-self-norm-I-xxt-zt},
following terms have upper bounds
\begin{align*}
	\sum_{t=1}^{T} \left\| \sqrt{\sigma'(\w_{t+1}^\top \x_t)}\x_t \right\|_{Z_t^{-1}}^2
	&= \O\left(\tail{T}{\lambda} \left(d \ln \tail{T} + m \ln T\right)\right)
	\\
	\sum_{t=1}^{T} \left\| \x_t \right\|_{Z_{t+1}^{-1}}^2
	&= \O\left(\tail{T}{\lambda} \kappa \left(d \ln \tail{T} + m \ln T\right)\right)
	\\
	\sum_{t=1}^{T} \left\| \x_t \right\|_{Z_t^{-1}}^2
	&= \O\left(\tail{T}{\lambda} \kappa \left(d \ln \tail{T} + m \ln T\right)\right)
\end{align*}
And \citet[Theorem 2]{faury2022jointly-logistic-bandit}) indicates that
\begin{align*}
	\sum_{t=1}^{T} \sigma'(\ws^\top \x_t)
	&\le \reg(T) + T \sigma'(\ws^\top \xs)
\end{align*}
Plug the above terms into the regret bound, we have
\begin{align*}
\reg(T)
	= &~
	\O\left(\sqrt{\beta_T(\delta) \tail{T}{\lambda} \left(Kd \ln \tail{T}{\lambda} + m \ln T\right)}\right)
	\sqrt{\reg(T) + T \sigma'(\ws^\top \xs)}
	\\ &
	+ \O\left(\kappa \beta_T(\delta) \tail{T}{\lambda} \left(Kd \ln \tail{T}{\lambda} + m \ln T\right)\right)
\end{align*}
By resolving the inequality w.r.t. $\reg(T)$ above, we have \begin{align*}
	\reg(T)
	\le &~
	\O\left(\sqrt{\beta_T(\delta) \tail{T}{\lambda} \left(Kd \ln \tail{T}{\lambda} + m \ln T\right)}\right)
	\sqrt{T \sigma'(\ws^\top \xs)}
	+ \O\left(\kappa \beta_T(\delta) \tail{T}{\lambda} \left(Kd \ln \tail{T}{\lambda} + m \ln T\right)\right)
	\\ = &~
	\O\left(
		\tail{T} (Kd \ln \tail{T}{\lambda} + m \ln T)
		\ln K \sqrt{\ln T}
		\sqrt{T / \kappa_*}
		+
		\kappa \tail{T}^2
		(Kd \ln \tail{T}{\lambda} + m \ln T)^2
		\ln^2 K \ln T
	\right)
	\\ = &~
	\Ot\left(
		\tail{T} (Kd \ln \tail{T}{\lambda} + m) \sqrt{T / \kappa_*}
	\right)
, \end{align*} where $\kappa_*$ is $1/(\sigma'(\ws^\top \xs))$
\end{proof}

\subsection{Useful Lemmas}

\paragraph{Proof for Lemma \ref{lem:fd-oful-mlogb-efficient-reward}}
\begin{lemma*}
For any $\x \in \A_t$, let $Z_t = \lambda I_{Kd} + V_t \Lambda_t V_t^\top$.
The spectral norm satisfies
\[
	\left\| Z_t^{-\frac{1}{2}} (I_K \otimes \x) \right\|_2^2 = \|A(\x)\|_2
, \]
where $A(\x) \in \R^{K \times K}$ is a symmetric matrix with entries given by
\[
	A(\x)_{ij}
	=
	\sum_{k=1}^{m} \left(\frac{1}{\mu_k + \lambda} - \frac{1}{\lambda}\right) \left< \x, \bu_{ki} \right> \left< \x, \bu_{kj} \right>
	+
	\frac{\|\x\|_2^2}{\lambda} \delta_{ij}
. \]
Here, $\mu_k$ is the $k$-th diagonal element of $\Lambda_t$.
$\bu_{ki} \in \R^d$ denotes the sub-vector extracted from indices $(i-1)d+1$ to $id$ of $V_t$'s $k$-th column,
and $\delta_{ij}$ denotes the Kronecker delta.
\begin{proof}
We have \[
	\left\| Z_t^{-\frac{1}{2}} (I_K \otimes \x) \right\|_2^2
	=
	\lambda_{\max}( (I_K \otimes \x)^\top Z_t^{-\frac{1}{2}} Z_t^{-\frac{1}{2}} (I_K \otimes \x) )
	=
	\lambda_{\max} \left( (I_K \otimes \x)^\top Z_t^{-1} (I_K \otimes \x) \right)
. \]
The first equality is by $\|B\|_2^2 = \lambda_{\max}(B^\top B)$.
Let $A$ denote the matrix $(I_K \otimes \x)^\top Z_t^{-1} (I_K \otimes \x)$.
And let $\hat{\mu}_i$ and $\hat{\bu}_i$ be the complementary eigenvalues $\mu_i$ and eigenvectors $\bu_i$,
such that $\hat{\mu}_i = 0$ and $\hat{\bu}_i$ is orthogonal to the column space of $V_t$ for $i = m+1, \ldots, Kd$,
and $\hat{\mu}_i = \mu_i$ and $\hat{\bu}_i = \bu_i$ for $i = 1, \ldots, m$.
Then we have the eigen-decomposition of $Z_t$ as \[
	Z_t^{-1} = \sum_{i=1}^{Kd} \frac{1}{\hat{\mu}_i + \lambda} \hat{\bu}_i \hat{\bu}_i^\top
. \]
And the element $(A_{ij})_{i,j=1}^K$ can be computed as follows:
\begin{align*}
A_{ij}
&=
(\e_i \otimes \x)^\top Z_t^{-1} (\e_j \otimes \x)
\\
&=
(\e_i \otimes \x)^\top \sum_{k=1}^{Kd} \frac{1}{\hat{\mu}_k + \lambda} \hat{\bu}_k \hat{\bu}_k^\top (\e_j \otimes \x)
\\
&=
(\e_i \otimes \x)^\top \sum_{k=1}^m \frac{1}{\mu_k + \lambda} \bu_k \bu_k^\top (\e_j \otimes \x)
+
\frac{1}{\lambda} (\e_i \otimes \x)^\top \sum_{k=m+1}^{Kd} \hat{\bu}_k \hat{\bu}_k^\top (\e_j \otimes \x)
\\
&=
\sum_{k=1}^m \frac{1}{\mu_k + \lambda} (\e_i \otimes \x)^\top \bu_k \bu_k^\top (\e_j \otimes \x)
+
\frac{1}{\lambda} (\e_i \otimes \x)^\top \left(I_{Kd} - \sum_{k=1}^m \bu_k \bu_k^\top\right) (\e_j \otimes \x)
\\
&=
\frac{1}{\lambda} (\e_i \otimes \x)^\top (\e_j \otimes \x)
+
\sum_{k=1}^m \left(\frac{1}{\mu_k + \lambda} - \frac{1}{\lambda}\right) (\e_i \otimes \x)^\top \bu_k \bu_k^\top (\e_j \otimes \x)
\\
&=
\frac{\x^\top \x}{\lambda} \delta_i^j
+
\sum_{k=1}^m \left(\frac{1}{\mu_k + \lambda} - \frac{1}{\lambda}\right) \x^\top \bu_{ki} \x^\top \bu_{kj}
\end{align*}
where $\delta_i^j$ is the Kronecker delta function.
The fourth equality is by the fact that $\sum_{k=m+1}^{Kd} \hat{\bu}_k \hat{\bu}_k^\top = I_{Kd} - \sum_{k=1}^m \hat{\bu}_k \hat{\bu}_k^\top = I_{Kd} - \sum_{k=1}^m \bu_k \bu_k^\top$.
\par 
Then $\{ \x^\top \bu_{kj} \}_{k,j}$ can be computed in $\O(K d m)$ time,
$A$ can be constructed in $\O(K^2 m)$,
and $\lambda_{\max}(A) = \|A\|_2$ can be computed in $\O(K^3)$.
Therefore, the operator norm $\left\| Z_t^{-\frac{1}{2}}(I_K\otimes\x) \right\|_{2,2}$ can be computed in $\O(K^3 + K^2 m + K d m)$ time.
\end{proof}
\end{lemma*}

\paragraph{Proof for Lemma \ref{lem:fd-oful-mlogb-efficient-W-update}}
\begin{lemma*}
Let $\Zt_t = \lambda I_{Kd} + \Vt_t \Lamt_t \Vt_t^\top$, where $\bvt_i$ is the $i$-th column of $\Vt_t$ and $\mu_i$ is the $i$-th diagonal
element of $\Lamt_t$.
If $\|\Wv'\|_2 \le S$, then $\Wv_{t+1} = \Wv'$.
Otherwise, the projection is given by:
\[
	\Wv_{t+1} = \left(\frac{\lambda}{\lambda + \nu} I + \frac{\nu}{\lambda + \nu} \Vt_t \Lamt_t M \Vt_t^\top\right) \Wv'
, \]
where $M = ((\lambda + \nu)I_{m+K} + \Lamt_t)^{-1}$ is diagonal.
The dual variable $\nu > 0$ is the unique root of
\[
	f(\nu)= S^2
, \]
where $f(\nu)$ is a monotonically decreasing function on the interval $(0, +\infty)$
\[
	f(\nu) = \frac{(\|\Wv'\|_2^2-\|\Vt_t^\top\Wv'\|_2^2) \lambda^2}{(\lambda + \nu)^2} + \sum_{i=1}^{m+K} \frac{(\bvt_i^\top \Wv')^2 (\mu_i + \lambda)^2}{(\mu_i + \lambda + \nu)^2}
. \]
\begin{proof}
In the following proof, we denote $V = \Vt_t, \Lambda = \Lamt_t$ for readability.
And with loss of generality, we assume $\Lambda$ to be a full-rank diagonal matrix, otherwise we can remove the zero eigenvalues and the corresponding eigenvectors without affecting the solution.
And we assume the rank of $\Lambda$ is $m'$.
We rewrite the projection problem as a convex optimization problem
\begin{align*}
	\min_{\Wv} \quad& \frac{1}{2} \left\| \Wv - \Wv' \right\|_{\Zt_t}^2 \\
	\mathrm{s.t.} \quad& \frac{1}{2} \left\| \Wv \right\|_2^2 \le \frac{S^2}{2}
\end{align*}
If $\|\Wv'\|_2 \le S$, then $\Wv'$ is the optimal solution with optimal value $0$.
\par 
Assume $\|\Wv'\|_2 > S$.
Since there exists an interior point $\Wv = 0$ that strictly satisfies the constraint $\frac{1}{2} \| 0 \|_2^2 < S^2 / 2$,
the Slater's condition holds and the strong duality holds.
Therefore the optimal solution is equivalent to the Karush-Kuhn-Tucker (KKT) conditions.
\begin{align}
	\Zt_t(\Wv - \Wv') + \nu \Wv &= 0 \label{eq:fomWupdate-kkt-differential-equation}\\
	\left\| \Wv \right\|_2 &\le S \label{ineq:fomWupdate-kkt-w-region} \\
	\nu &\ge 0 \label{ineq:fomWupdate-kkt-nv-region} \\
	\nu \left( \left\| \Wv \right\|_2 - S \right) &= 0 \label{eq:fomWupdate-kkt-complementary-slackness}
\end{align}
Since $\Zt_t$ is positive definite and $\nu \ge 0$, the matrix $\Zt_t + \nu I$ is invertible.
By equation \eqref{eq:fomWupdate-kkt-differential-equation}, we have \begin{align*}
	\Wv &= (\Zt_t + \nu I)^{-1} \Zt_t \Wv' \\
		&= ((\lambda + \nu) I + V \Lambda V^\top)^{-1} (\lambda I + V \Lambda V^\top) \Wv'
\end{align*}
By Woodbury matrix identity, we have
\begin{align*}
	((\lambda + \nu) I + V \Lambda V^\top)^{-1}
	&= \frac{1}{\lambda + \nu} I - \frac{1}{(\lambda + \nu)^2} V (\Lambda^{-1} + \frac{1}{\lambda + \nu} V^\top V)^{-1} V^\top \\
	&= \frac{1}{\lambda + \nu} I - \frac{1}{\lambda + \nu} V (\Lambda + (\lambda + \nu) I)^{-1} \Lambda V^\top
\end{align*}
Therefore,
\begin{align*}
	\Wv
	&= \left(\frac{1}{\lambda + \nu} I - \frac{1}{\lambda + \nu} V (\Lambda + (\lambda + \nu) I)^{-1} \Lambda V^\top\right)
		(\lambda I + V \Lambda V^\top) \Wv' \\
	&= \left(\frac{\lambda}{\lambda + \nu} I - \frac{\lambda}{\lambda + \nu} V (\Lambda + (\lambda + \nu) I)^{-1} \Lambda V^\top
		+ \frac{1}{\lambda + \nu} V \Lambda V^\top - \frac{1}{\lambda + \nu} V ((\lambda + \nu) I + \Lambda)^{-1} \Lambda^2 V^\top\right) \Wv' \\
	&= \left(\frac{\lambda}{\lambda + \nu} I + \frac{\nu}{\lambda + \nu} V ((\lambda + \nu) I + \Lambda)^{-1} \Lambda V^\top\right) \Wv'
\end{align*}
Let $M = ((\lambda + \nu) I + \Lambda)^{-1}$, therefore \[
	\Wv
	=
	\left(\frac{\lambda}{\lambda + \nu} I + \frac{\nu}{\lambda + \nu} V M \Lambda V^\top\right) \Wv'
. \]
If $\nu = 0$, then $\Wv = \Wv'$.
However by \eqref{ineq:fomWupdate-kkt-w-region}, $\|\Wv\|_2 \le S$, which contradicts the assumption $\|\Wv'\|_2 > S$.
Therefore by \eqref{ineq:fomWupdate-kkt-nv-region} $\nu > 0$.
And by \eqref{eq:fomWupdate-kkt-complementary-slackness}, we have $\|\Wv\|_2^2 = S^2$.
Let $\hat{V}, \hat{\Lambda}$ be the full matrix of $V, \Lambda$ by adding the orthogonal basis of the null space of $V$ and $0$ eigenvalues.
Then we have
\begin{align*}
	\left\| \Wv \right\|_2^2
	&= \left\| (\Zt_t + \nu I)^{-1} \Zt_t \Wv' \right\|_2^2 \\
	&= \left\| \hat{V} ((\lambda + \nu) I + \hat{\Lambda})^{-1} \hat{V}^\top \hat{V} (\lambda I + \hat{\Lambda}) \hat{V}^\top \Wv' \right\|_2^2 \\
	&= \left\| \hat{V} ((\lambda + \nu) I + \hat{\Lambda})^{-1} (\lambda I + \hat{\Lambda}) \hat{V}^\top \Wv' \right\|_2^2 \\
	&= \left\| ((\lambda + \nu) I + \hat{\Lambda})^{-1} (\lambda I + \hat{\Lambda}) \hat{V}^\top \Wv' \right\|_2^2 \\
	&= \sum_{i=1}^{Kd} \frac{(\hat{\bv}_i^\top \Wv)^2 (\lambda + \hat{\mu}_i)^2}{(\lambda + \hat{\mu}_i + \nu)^2}
\end{align*}
Here $\hat{\bv}_i$ is the $i$-th column of $\hat{V}$ and $\hat{\mu}_i$ is the $i$-th diagonal element of $\hat{\Lambda}$.
Since $\hat{\lambda}_i = 0$ for $i > m'$, we have
\begin{align*}
	\left\| \Wv \right\|_2^2
	&= \sum_{i=1}^{Kd} \frac{(\hat{\bv}_i^\top \Wv')^2 (\lambda + \hat{mu}_i)^2}{(\lambda + \hat{mu}_i + \nu)^2} \\
	&= \sum_{i=1}^{m'} \frac{(\bv_i^\top \Wv')^2 (\lambda + \mu_i)^2}{(\lambda + \mu_i + \nu)^2}
		+ \sum_{i=m'+1}^{Kd} \frac{(\hat{\bv}_i^\top \Wv')^2 \lambda^2}{(\lambda + \nu)^2}
\end{align*}
Here by assumption $m' \le m+K$ we set $\mu_i = 0$ for $m'<i\le m+K$.
And we have
\begin{align*}
	\left\| \Wv \right\|_2^2
	&= \sum_{i=1}^{m+K} \frac{(\bv_i^\top \Wv')^2 (\lambda + \mu_i)^2}{(\lambda + \mu_i + \nu)^2}
		+ \sum_{i=m+K+1}^{Kd} \frac{(\hat{\bv}_i^\top \Wv')^2 \lambda^2}{(\lambda + \nu)^2} \\
	&= \sum_{i=1}^{m+K} \frac{(\bv_i^\top \Wv')^2 (\lambda + \mu_i)^2}{(\lambda + \mu_i + \nu)^2}
		+ \sum_{i=m+K+1}^{Kd} \frac{\Wv'^\top \hat{\bv}_i \hat{\bv}_i^\top \Wv' \lambda^2}{(\lambda + \nu)^2} \\
	&= \sum_{i=1}^{m+K} \frac{(\bv_i^\top \Wv')^2 (\lambda + \mu_i)^2}{(\lambda + \mu_i + \nu)^2}
		+ \Wv'^\top \left(I - \sum_{i=1}^{m+K} \hat{\bv}_i \hat{\bv}_i^\top\right) \frac{\Wv' \lambda^2}{(\lambda + \nu)^2} \\
	&= \sum_{i=1}^{m+K} \frac{(\bv_i^\top \Wv')^2 (\lambda + \mu_i)^2}{(\lambda + \mu_i + \nu)^2}
		+ \frac{\Wv'^\top (I - V V^\top) \Wv' \lambda^2}{(\lambda + \nu)^2}
\end{align*}
Let \[
	f(\nu)
	= \sum_{i=1}^{m+K} \frac{(\bv_i^\top \Wv')^2 (\lambda + \lambda_i)^2}{(\lambda + \lambda_i + \nu)^2}
		+ \frac{\Wv'^\top (I - V V^\top) \Wv' \lambda^2}{(\lambda + \nu)^2}
	, \nu > 0
. \] Therefore \[
	f(\nu) = \left\| \Wv \right\|_2^2 = S^2
. \] This means that the dual optimiser $\nu$ is the root of the equation $f(\nu) = S^2$.
Since $f(\nu)$ is a monotonically decreasing function with respect to $\nu$ in the interval $(0, +\infty)$,
and $\lim_{\nu \to 0^+} f(\nu) = \|\Wv'\|_2^2 > S^2$, $\lim_{\nu \to +\infty} f(\nu) = 0 < S^2$,
there exists a unique root of the equation $f(\nu) = S^2$ in the interval $(0, +\infty)$.
Therefore, given $\Zt_t = \lambda I + V \Lambda V^\top$, the problem $f(\nu) = S^2$ can be solved in $\O(Kd (m + K) + (m + K) \log\log \varepsilon)$ using Newton's method,
and the optimal solution $\Wv_{t+1}$ can be computed in $\O(Kd (m + K))$ time,
where $\varepsilon$ is the machine precision.
Therefore the total time complexity is $\O(Kd (m + K))$.
\end{proof}
\end{lemma*}

\begin{lemma}[Lemma 13, 14 of \citet{zhang2023mlb}] \label{lem:cr-ell-regret}
For any $\delta \in (0, 1)$, with probability at least $1 - \delta$, we have
\begin{align*}
	\forall t > 0.~
	\sum_{s=1}^{t} \ell_s(\Ws) - \ell_s(W_{s+1})
	\le &~
	\left(D S + \ln(K+1) + 2 \ln ((K+1) t + 2)\right) \left(\frac{5}{4}  + 4 \ln \frac{\sqrt{1 + 2 t}}{\delta}\right)
	+
	2 + \ln (K+1)
	\\ &
	+
	\frac{\sqrt6}{24 Kd} \lambda \tail{t}{\lambda} \left( Kd \ln \tail{t}{\lambda/2} + m \ln (1 + 2\frac{D^2}{m \lambda}t) \right)
	+ \frac{12 Kd}{\lambda} \sum_{s=1}^{t} \left\| \Wv_{s+1} - \Wv_s \right\|_{Z_s}^2
. \end{align*}
\begin{proof}
Let $\sig^+$ be the inverse of $\sig$, \[
	\sig^+(\p) = \left[\ln \frac{p_1}{p_0}, \ln \frac{p_2}{p_0}, \dots, \ln \frac{p_K}{p_0}\right]^\top,
	\left\| \p \right\|_1 = 1, \p \in [0, 1]^{K+1}
. \] And define a intermedia function w.r.t. $\ell_s$ \[
	\ellh_s(z) := - \y_t^\top \ln\sig(z), \ellh_s(W \x_s) = \ell_s(W)
. \] Let $z_s$ be an intermediary prediction w.r.t. the distribution $\cP_s = \mathcal{N}(W_s, (c Z_s)^{-1})$ \[
	z_s = \sig^+(\E_{W \sim \cP_s}[\sig(W \x_s)])
, \] and $z_s^\mu$ be an smooth approximation of $z_s$ \[
	z_s^\mu = \sig^+(\smooth_\mu(\sig(z_s)))
, \] where $\smooth_\mu(\p) = (1 - \mu)\p + \mu / (K+1)$ for any probability distribution $\left\| \p \right\|_1 = 1, \p \in [0, 1]^{K+1}$.
Thus $z_s$ and $z_s^\mu$ are $\F_s$-measurable, \[
	\left\| z_s^\mu \right\|_\infty
	=
	\max_{k\in\left<K\right>} \left| \ln \frac{\sig_k(z_s^\mu)}{\sig_0(z_s^\mu)} \right|
	\le
	\Max{\left| \ln \frac{1 - \mu K / (K+1)}{\mu / (K+1)} \right|, \left| \ln \frac{\mu / (K+1)}{1 - \mu K / (K+1)} \right|}
	\le
	\ln \left(\frac{K+1}{\mu} - K\right)
. \]
\par 
We follow the proof of \citet[Theorem 3]{zhang2023mlb}.
By dividing $\sum_{s=1}^{t} \ell_s(\Ws) - \ell_s(W_s)$ into three parts,\[
	\sum_{s=1}^{t} \ell_s(\Ws) - \ell_s(W_s)
	=
	\sum_{s=1}^{t} \ell_s(\Ws) - \ellh_s(z_s^\mu)
	+
	\sum_{s=1}^{t} \ellh_s(z_s^\mu) - \ellh_s(z_s)
	+
	\sum_{s=1}^{t} \ellh_s(z_s) - \ell_s(W_s)
. \] For the first term $\sum_{s=1}^{t} \ell_s(\Ws) - \ellh_s(z_s^\mu)$,
by \citet[Lemma 13]{zhang2023mlb}, we have \begin{equation} \label{ineq:cr-ell-regret-first}
	\sum_{s=1}^{t} \ell_s(\Ws) - \ellh_s(z_s^\mu)
	\le
	\frac{5}{4} (D S + S_\mu) + 4 (D S + S_\mu) \ln \frac{\sqrt{1 + 2 t}}{\delta}
,  \end{equation} where $S_\mu = \ln (K+1) + 2 \ln (\frac{K+1}{\mu} + 2) \ge \ln (K+1) + 2 (\left\| z_s^\mu \right\|_\infty + 1)$.
And for the second term $\sum_{s=1}^{t} \ellh_s(z_s^\mu) - \ellh_s(z_s)$, by \citet[Lemma 17]{zhang2023mlb}, if $\mu \in [0, \frac{1}{2}]$ \begin{align}
	\sum_{s=1}^{t} \ellh_s(z_s^\mu) - \ellh_s(z_s)
	&=
	\sum_{s=1}^{t} \ln \frac{\sigma_{y_s}(z_s)}{(1 - \mu)\sigma_{y_s}(z_s) + \mu 1 / (K+1)}
	\notag \\
	&\le
	\sum_{s=1}^{t} \ln \frac{1}{1 - \mu K / (K+1)}
	\le
	\sum_{s=1}^{t} \ln \exp(2 \mu K / (K+1))
	\notag \\
	&\le
	\sum_{s=1}^{t} 2 \mu
	=
	2 \mu t
	\label{ineq:cr-ell-regret-second}
. \end{align}
The second inequality is due to $\forall x \in [0, \frac{1}{2}].~ \frac{1}{1 - x} \le \exp(2x)$.
And by lemma \ref{lem:cr-ell-regret-3}, the third term is bounded \begin{equation} \label{ineq:cr-ell-regret-third}
	\sum_{s=1}^{t} \ellh_s(z_s) - \ell_s(W_{s+1})
	\le
	\frac{\sqrt6}{24 Kd} \lambda \tail{t}{\lambda} \left( Kd \ln 2 \tail{t} + m \ln (1 + \frac{D^2}{m \lambda}t) \right)
	+ \frac{12 Kd}{\lambda} \sum_{s=1}^{t} \left\| \Wv_s - \Wv_{s+1} \right\|_{Z_s}^2
. \end{equation}
Combining \eqref{ineq:cr-ell-regret-first}, \eqref{ineq:cr-ell-regret-second} and \eqref{ineq:cr-ell-regret-third},
we get
\begin{align*}
	\sum_{s=1}^{t} \ell_s(\Ws) - \ell_s(W_{s+1})
	\le &~
	\frac{5}{4} (D S + S_\mu) + 4 (D S + S_\mu) \ln \frac{\sqrt{1 + 2 t}}{\delta}
	+
	2 \mu t
	\\ &
	+
	\frac{\sqrt6}{24 Kd} \lambda \tail{t}{\lambda} \left( Kd \ln 2 \tail{t} + m \ln (1 + \frac{D^2}{m \lambda}t) \right)
	+ \frac{12 Kd}{\lambda} \sum_{s=1}^{t} \left\| \Wv_{s+1} - \Wv_s \right\|_{Z_s}^2
\end{align*}
And set $\mu = \frac{1}{t}$ to get
\begin{align*}
	\sum_{s=1}^{t} \ell_s(\Ws) - \ell_s(W_{s+1})
	\le &~
	\left(D S + \ln(K+1) + 2 \ln ((K+1) t + 2)\right) \left(\frac{5}{4}  + 4 \ln \frac{\sqrt{1 + 2 t}}{\delta}\right)
	+
	2 + \ln (K+1)
	\\ &
	+
	\frac{\sqrt6}{24 Kd} \lambda \tail{t}{\lambda} \left( Kd \ln 2 \tail{t} + m \ln (1 + \frac{D^2}{m \lambda}t) \right)
	+ \frac{12 Kd}{\lambda} \sum_{s=1}^{t} \left\| \Wv_{s+1} - \Wv_s \right\|_{Z_s}^2
\end{align*}
\end{proof}
\end{lemma}

\begin{lemma} \label{lem:cr-ell-regret-3}
For any $t > 0$ \[
	\sum_{s=1}^{t} \ellh_s(z_s) - \ell_s(W_{s+1})
	\le
	\frac{\sqrt6}{24 Kd} \lambda \tail{t}{\lambda} \left( Kd \ln 2 \tail{t}{\lambda/2} + m \ln (1 + 2\frac{D^2}{m \lambda}t) \right)
	+
	\frac{12 Kd}{\lambda} \sum_{s=1}^{t} \left\| \Wv_s - \Wv_{s+1} \right\|_{Z_s}^2
. \]
\begin{proof}
By definition of $z_s$, we know \[
	\exp(- \ellh_s(z_s))
	=
	\Exp{\exp(- \ell_s(W))}{W \sim \cP_s}
. \] And therefore
\begin{align*}
\sum_{s=1}^{t} \ellh_s(z_s) - \ell_s(W_{s+1})
&=
\sum_{s=1}^{t} - \ln \exp(- \ellh_s(z_s)) - \ell_s(W_{s+1})
\\
&=
\sum_{s=1}^{t} - \ln \Exp{\exp (- \ell_s(W))}{W \sim \cP_s} - \ell_s(W_{s+1})
\\
&=
\sum_{s=1}^{t} - \ln \Exp{\exp (- (\ell_s(W) - \ell_s(W_{s+1})))}{W \sim \cP_s}
\end{align*}
We shift the mean of $\cP_s$ from $W_s$ to $W_{s+1}$,
and get $\Q_s := \mathcal{N}(W_{s+1}, (c Z_s)^{-1})$.
For probability density function $f_p(W), f_q(W)$, the following relation holds \[
	\frac{f_p(W)}{f_q(W)} = \exp\left(- \left<c Z_s (\Wv_{s+1} - \Wv), \Wv_s - \Wv_{s+1}\right>\right) \exp\left(- \frac{1}{2} \left\| \Wv_s - \Wv_{s+1} \right\|_{c Z_s}^2\right)
. \] Thus
\begin{align}
&~
\sum_{s=1}^{t} - \ln \Exp{\exp (- (\ell_s(W) - \ell_s(W_{s+1})))}{W \sim \cP_s}
\notag \\ = &~
\sum_{s=1}^{t}- \ln \Exp{\frac{f_p(W)}{f_q(W)} \exp\left(- \left(
	\ell_s(W) - \ell_s(W_{s+1})
\right)\right)}{W \sim \Q_s}
\notag \\ = &~
\sum_{s=1}^{t}- \ln \Exp{\exp\left(- \left(
	\ell_s(W) - \ell_s(W_{s+1})
	+
	\left< \Wv_{s+1} - \Wv, c Z_s^\top (\Wv_s - \Wv_{s+1}) \right>
\right)\right)}{W \sim \Q_s}
\notag \\ &~
+
\frac{1}{2} \left\| \Wv_{s+1} - \Wv_s \right\|_{c Z_s}^2
\label{eq:lem-cr-ell-regret-3-sum}
\end{align}
By \citet[Theorem 4(d)]{tran-dinh2015composite-self-concordant}, and $\forall x > 0.~ (\exp(x) - x - 1) / x^2 \le \exp(x^2)$, \[
	\ell_s(W) - \ell_s(W_{s+1})
	\le
	\left< \nabla \ell_s(\Wv_{s+1}), \Wv - \Wv_{s+1} \right>
	+
	\rme^{6 \left\| W - W_{s+1} \right\|_F^2} \left\| \Wv - \Wv_{s+1} \right\|_{\nabla^2 \ell_s(W_{s+1})}^2
, \] the logarithm expectation term in \eqref{eq:lem-cr-ell-regret-3-sum} can be bounded as follows
\begin{align*}
&~
- \ln \Exp{\exp\left(- \left(
	\ell_s(W) - \ell_s(W_{s+1})
	+
	\left< c Z_s (\Wv_{s+1} - \Wv_s), \Wv - \Wv_{s+1} \right>
\right)\right)}{W \sim \Q_s}
\\ \le &~
- \ln \Exp{\exp\left(- \left(
	\left< \nabla \ell_s(\Wv_s) + cZ_s(\Wv_{s+1}-\Wv_s), \Wv - \Wv_{s+1} \right>
	+
	\rme^{6 \left\| W - W_{s+1} \right\|_F^2} \left\| \Wv - \Wv_{s+1} \right\|_{\nabla^2 \ell_s(W_{s+1})}^2
\right)\right)}
\\ &~\text{(by Jensen's inequality)}
\\ = &~
- \ln \exp\left(- \Exp{
	\left< \nabla \ell_s(\Wv_s) + cZ_s(\Wv_{s+1}-\Wv_s), \Wv - \Wv_{s+1} \right>
	+
	\rme^{6 \left\| W - W_{s+1} \right\|_F^2} \left\| \Wv - \Wv_{s+1} \right\|_{\nabla^2 \ell_s(W_{s+1})}^2
}\right)
\\ = &~
\Exp{
	\left< \nabla \ell_s(\Wv_s) + cZ_s(\Wv_{s+1}-\Wv_s), \Wv - \Wv_{s+1} \right>
	+
	\rme^{6 \left\| W - W_{s+1} \right\|_F^2} \left\| \Wv - \Wv_{s+1} \right\|_{\nabla^2 \ell_s(W_{s+1})}^2
}{W \sim \Q_s}
\\ = &~
\Exp{
	\rme^{6 \left\| W - W_{s+1} \right\|_F^2} \left\| \Wv - \Wv_{s+1} \right\|_{\nabla^2 \ell_s(W_{s+1})}^2
}{W \sim \Q_s}
\\ &~\text{(by Cachy-Schwarz inequality)}
\\ \le &~
\sqrt{\Exp{\rme^{12 \left\| W - W_{s+1} \right\|_F^2}}{W \sim \Q_s}}
\sqrt{\Exp{\left\| \Wv - \Wv_{s+1} \right\|_{\nabla^2 \ell_s(W_{s+1})}^4}{W \sim \Q_s}}
\end{align*}
For the first term $\sqrt{\Exp{\rme^{12 \left\| W - W_{s+1} \right\|_F^2}}{W \sim \Q_s}}$,
let $U \Sigma U^\top = Z_s$ temporarily be the eigendecomposition of $Z_s$.
Thus random variable $X := \sqrt{c \Sigma} U^\top (\Wv - \Wv_{s+1})$ follows the distribution $\mathcal{N}(0, I)$,
and
\begin{align*}
\sqrt{\Exp{\rme^{12 \left\| W - W_{s+1} \right\|_F^2}}{W \sim \Q_s}}
&=
\sqrt{\Exp{\exp\left( 12 \left\| U (c \Sigma)^{-\frac{1}{2}} X \right\|_2^2 \right)}{X}}
\\ &=
\sqrt{\Exp{\exp\left( 12 \sum_{i=1}^{Kd} \frac{1}{c \lambda_i} X_i^2 \right)}{X}}
\\ &~\text{(by $Z_s \succeq \lambda I$)}
\\ &=
\Exp{\exp\left( \frac{12}{c \lambda} X_i^2 \right)}{X}^{Kd / 2}
\\ &~\text{(by Jensen's inequality)}
\\ &\le
\Exp{\exp\left( \frac{6 Kd}{c \lambda} X_i^2 \right)}{X}
\end{align*}
By setting $\frac{6 Kd}{c \lambda} \le \frac{1}{4}$,
and $\Exp{\exp(X / 4)}{X \sim \mathcal{X}^2} \le \sqrt2$,
we have \begin{equation} \label{ineq:lem-cr-ell-regret-3-e1}
	\sqrt{\Exp{\rme^{12 \left\| W - W_{s+1} \right\|_F^2}}{W \sim \Q_s}}
	\le
	\sqrt2
. \end{equation}
And for the second term $\sqrt{\Exp{\left\| \Wv - \Wv_{s+1} \right\|_{\nabla^2 \ell_s(W_{s+1})}^4}{W \sim \Q_s}}$,
Let the full eigendecomposition $U \Sigma U^\top = Z_s^{-\frac{1}{2}} \nabla^2 \ell_s(W_{s+1}) Z_s^{-\frac{1}{2}}$ with rank at most $K$.
And random variable $X := U^\top (cZ_s)^{\frac{1}{2}} (\Wv - \Wv_{s+1})$
follows the distribution $\mathcal{N}(0, I)$,
and
\begin{align*}
\sqrt{\Exp{\left\| \Wv - \Wv_{s+1} \right\|_{\nabla^2 \ell_s(W_{s+1})}^4}{W \sim \Q_s}}
&=
\sqrt{\Exp{\left\| \nabla^2 \ell_s(W_{s+1})^{\frac{1}{2}} (cZ_s)^{-\frac{1}{2}} U X \right\|_2^4}{X}}
\\ &=
\sqrt{\Exp{\left( X^\top U^\top Z_s^{-\frac{1}{2}} \nabla^2 \ell_s(W_{s+1}) Z_s^{-\frac{1}{2}} U X / c \right)^2}{X}}
\\ &=
\sqrt{\Exp{\left( \sum_{i=1}^{K} \frac{\lambda_i}{c} X_i^2 \right)^2}{X}}
\\ &=
\sqrt{\Exp{\sum_{i, j} \frac{\lambda_i \lambda_j}{c^2} X_i^2 X_j^2}{X}}
\\ &=
\frac{1}{c} \sqrt{\sum_{i, j} \lambda_i \lambda_j \Exp{X_i^2 X_j^2}{X}}
\end{align*}
By $\forall i, j.~ \Exp{X_i^2 X_j^2}{X_i, X_j \sim \mathcal{N}(0, 1)} \le 3$
\begin{align}
&~
\sqrt{\Exp{\left\| \Wv - \Wv_{s+1} \right\|_{\nabla^2 \ell_s(W_{s+1})}^4}{W \sim \Q_s}}
\notag \\ \le &~
\frac{1}{c} \sqrt{\sum_{i, j} \lambda_i \lambda_j 3}
=
\frac{\sqrt3}{c} \sum_{i=1}^{Kd} \lambda_i
\notag \\ = &~
\frac{\sqrt3}{c} \tr(Z_s^{-\frac{1}{2}} \nabla^2 \ell_s(W_{s+1}) Z_s^{-\frac{1}{2}})
\notag \\ = &~
\frac{\sqrt3}{c} \tr(\nabla^2 \ell_s(W_{s+1}) Z_s^{-1})
\label{ineq:lem-cr-ell-regret-3-e2}
\end{align}
Combining \eqref{ineq:lem-cr-ell-regret-3-e1} and \eqref{ineq:lem-cr-ell-regret-3-e2},
the equation \eqref{eq:lem-cr-ell-regret-3-sum} can be bounded as follows \[
	\sum_{s=1}^{t} \ellh_s(z_s) - \ell_s(W_{s+1})
	\le
	\frac{\sqrt6}{c} \sum_{s=1}^{t} \tr(\nabla^2 \ell_s(W_{s+1}) Z_s^{-1})
	+
	\frac{1}{2} \sum_{s=1}^{t} \left\| \Wv_s - \Wv_{s+1} \right\|_{c Z_s}^2
. \]
To conclude, we recall that $\frac{6Kd}{c \lambda} \le \frac{1}{4}$.
Therefore, we set $c := \frac{24 Kd}{\lambda}$ \[
	\sum_{s=1}^{t} \ellh_s(z_s) - \ell_s(W_{s+1})
	\le
	\frac{\sqrt6}{24 Kd} \lambda \sum_{s=1}^t \tr( \nabla^2 \ell_s(W_{s+1}) Z_s^{-1} )
	+
	\frac{12 Kd}{\lambda} \sum_{s=1}^{t} \left\| \Wv_s - \Wv_{s+1} \right\|_{Z_s}^2
. \]
And by Lemma \ref{lem:fd-self-norm-t}, if $\lambda \ge 2 L D^2$\[
	\sum_{s=1}^t \tr( \nabla^2 \ell_s(W_{s+1}) Z_s^{-1} )
	\le
	\tail{t}{\lambda} (Kd \ln \tail{t}{\lambda/2} + m \ln (1 + 2\frac{D^2}{m \lambda}t))
. \]
Therefore, if $\lambda \ge 2 L D^2$, we have \[
	\sum_{s=1}^{t} \ellh_s(z_s) - \ell_s(W_{s+1})
	\le
	\frac{\sqrt6}{24 Kd} \lambda \tail{t}{\lambda} \left( Kd \ln 2 \tail{t} + m \ln (1 + \frac{D^2}{m \lambda}t) \right)
	+
	\frac{12 Kd}{\lambda} \sum_{s=1}^{t} \left\| \Wv_s - \Wv_{s+1} \right\|_{Z_s}^2
\]
\end{proof}
\end{lemma}

\begin{lemma} \label{lem:omd-wt-wt-1-bound}
For any $t > 0$, we have \[
	\left\| \Wv_{t+1} - \Wv_t \right\|_{Z_t}^2
	\le
	\frac{2 \eta D}{\sqrt{\lambda}}
. \]
\begin{proof}
By the definition of $W_t$, \[
	W_{t+1} := \argmin_{W \in \W} \left< \eta \nabla \ell_t(\Wv_t), \Wv \right> + \frac{1}{2} \left\| \Wv - \Wv_t \right\|_{Z_t + \eta \nabla^2 \ell_t(W_t)}^2
, \] $W_{t+1}$ satisfies the first order optimality condition \[
	\left<
		\eta \nabla \ell_t(\Wv_t)
		+ (Z_t + \eta \nabla^2 \ell_t(W_t)) (\Wv_{t+1} - \Wv_t)
		,
		\Wv_{t+1} - \Wv_t
	\right>
	\le
	0
. \] Rearrange to get \[
	\left\| \Wv_{t+1} - \Wv_t \right\|_{Z_t + \eta \nabla^2 \ell_t(W_t)}^2
	\le
	\left< \eta \nabla \ell_t(\Wv_t), \Wv_{t+1} - \Wv_t \right>
. \] By $\eta \nabla^2 \ell_t(W_t) \succeq 0$ and Cauchy-Schwarz inequality,
\begin{align*}
\left\| \Wv_{t+1} - \Wv_t \right\|_{Z_t}
&\le
\left\| \eta \nabla \ell_t(\Wv_t) \right\|_{Z_t^{-1}}
\\&~\text{(by $Z_t \succeq \lambda I$)}
\\
&\le
\frac{\eta}{\sqrt{\lambda}} \left\| \nabla \ell_t(\Wv_t) \right\|_2
\\
&=
\frac{\eta}{\sqrt{\lambda}}
\left\| \sig(W_t \x_t) - \y_t \right\|_2 \left\| \x_t \right\|_2
\\
&\le
\frac{2 \eta D}{\sqrt{\lambda}}
\end{align*}
\end{proof}
\end{lemma}

\begin{lemma}[Range term and Negative term of OMD analysis] \label{lem:cr-omd}
For any $t > 0$, we have \[
	\left< \nabla \ellt_t(\Wv_t), \Wv_t - \Wvs \right>
	\le
	\frac{1}{2 \eta} \left\| \Wvs - \Wv_{t-1} \right\|_{Z_{t-1}}^2
	-
	\frac{1}{2 \eta} \left\| \Wvs - \Wv_t \right\|_{Z_{t-1}}^2
	-
	\frac{1}{2 \eta} \left\| \Wv_{t-1} - \Wv_t \right\|_{Z_{t-1}}^2
. \]
\begin{proof}
By the update rule of $W_t$ \[
	W_t
	:=
	\argmin_{W \in \W}
		\ellt(W)
		+
		\frac{1}{2 \eta} \left\| \Wv - \Wv_{t-1} \right\|_{Z_{t-1}}^2
, \] $\ellt(W) = \left< \nabla \ell_t(\Wv_{t-1}), \Wv \right> + \frac{1}{2} \left\| \Wv - \Wv_{t-1} \right\|_{\nabla^2 \ell_t(W_{t-1})}^2$,
we have the first order optimality condition \[
	\left< \eta \nabla \ellt(\Wv_t) + Z_{t-1} \left(\Wv_t - \Wv_{t-1}\right), \Wvs - \Wv_t \right>
	\ge 0
. \] Rearranging the above equation, we have \begin{align*}
\left< \eta \nabla \ellt(\Wv_t), \Wv_t - \Wvs \right>
&\le
\left< Z_{t-1} \left(\Wv_t - \Wv_{t-1}\right), \Wvs - \Wv_t \right>
\\
&=
\frac{1}{2} \left\| \Wvs - \Wv_{t-1} \right\|_{Z_{t-1}}^2
- \frac{1}{2} \left\| \Wvs - \Wv_t \right\|_{Z_{t-1}}^2
- \frac{1}{2} \left\| \Wv_{t-1} - \Wv_t \right\|_{Z_{t-1}}^2
\end{align*}
and complete the proof.
\end{proof}
\end{lemma}

\paragraph{Proof of Lemma \ref{lem:diff-rwd-upper-bound}} \label{proof:lem-diff-rwd-upper-bound}
\begin{proof}
The following steps is by Taylor's Theorem to perform second order expansion.
\begin{align*}
\left| \rwdv^\top \sig(\Ws \x) - \rwdv^\top \sig(W_t \x) \right|
\le&~
\left| \rwdv^\top \nabla \sig(W_t \x) (\Wvs - \Wv_t) \x \right|
\\ &
+ \left| \frac{1}{2} \x^\top (\Wvs - \Wv_t)^\top (D^2 \sig(\xi_1) [\rwdv]) (\Wvs - \Wv_t) \x \right|
\\
=&~
\left| \left(\nabla \sig(W_t \x) \rwdv \otimes \x\right)^\top (\Wvs - \Wv_t) \right|
\\ &
+ \frac{1}{2} \left| (\Wvs - \Wv_t)^\top (D^2 \sig(\xi_1) [\rwdv] \otimes \x \x^\top) (\Wvs - \Wv_t) \right|
\end{align*}
for some $\xi_1$ in the line segment between $W_t \x$ and $\Ws \x$.
The first term can be further expanded as follows,
\begin{align*}
\left| \left(\nabla \sig(W_t \x) \rwdv \otimes \x\right)^\top (\Wvs - \Wv_t) \right|
&= \left| \left((\nabla \sig(W_t \x) \otimes \x) \rwdv \right)^\top (\Wvs - \Wv_t) \right|
\\
&\le \left\| (\nabla \sig(W_t \x) \otimes \x) \rwdv \right\|_{Z_t^{-1}} \left\| \Wvs - \Wv_t \right\|_{Z_t}
\\
&\le \sqrt{\beta_t(\delta)} \left\| (\nabla \sig(W_t \x) \otimes \x) \rwdv \right\|_{Z_t^{-1}}
\end{align*}
The first equality is by the property of Kronecker product, and the second inequality is by Cauchy-Schwarz inequality, and third inequality is by Theorem \ref{thm:confidence-region}.
\par 
The second term can be expended to
\begin{align*}
\left| (\Wvs - \Wv_t)^\top (D^2 \sig(\xi_1) [\rwdv] \otimes \x \x^\top) (\Wvs - \Wv_t) \right|
&\le
\left\| \Wvs - \Wv_t \right\|_{Z_t}
\left\| Z_t^{-\frac{1}{2}} (D^2 \sig(\xi_1) [\rwdv] \otimes \x \x^\top) Z_t^{-\frac{1}{2}} \right\|_{2,2}
\left\| \Wvs - \Wv_t \right\|_{Z_t}
\\
&\le
\beta_t(\delta)
\left\| Z_t^{-\frac{1}{2}} (D^2 \sig(\xi_1) [\rwdv] \otimes \x \x^\top) Z_t^{-\frac{1}{2}} \right\|_{2,2}
\end{align*}
The first inequality is by Cauchy-Schwarz inequality and definition of operator norm,
and the second inequality is by Theorem \ref{thm:confidence-region}.
For the last term
\begin{align*}
\left\| Z_t^{-\frac{1}{2}} (D^2 \sig(\xi_1) [\rwdv] \otimes \x \x^\top) Z_t^{-\frac{1}{2}} \right\|_{2,2}
&= \lambda_{\max}(Z_t^{-\frac{1}{2}} (D^2 \sig(\xi_1) [\rwdv] \otimes \x \x^\top) Z_t^{-\frac{1}{2}})
\\
&\le \lambda_{\max}(Z_t^{-\frac{1}{2}} (\|\rwdv\|_1 I_K \otimes \x \x^\top) Z_t^{-\frac{1}{2}})
\\
&= \|\rwdv\|_1 \lambda_{\max}(Z_t^{-\frac{1}{2}} (I_K \otimes \x \x^\top) Z_t^{-\frac{1}{2}})
\\
&= \|\rwdv\|_1 \lambda_{\max}(Z_t^{-\frac{1}{2}} (I_K \otimes \x) (I_K \otimes \x)^\top Z_t^{-\frac{1}{2}})
\\
&= \|\rwdv\|_1 \lambda_{\max}((I_K \otimes \x)^\top Z_t^{-\frac{1}{2}} Z_t^{-\frac{1}{2}} (I_K \otimes \x))
\\
&= \|\rwdv\|_1 \left\| Z_t^{-\frac{1}{2}} (I_K \otimes \x) \right\|_{2,2}^2
\end{align*}
The first equality is by the property of operator norm $\|A\|_{2,2} = \lambda_{\max}(A)$ for symmetric matrix $A$,
and the second inequality is by \eqref{ineq:second-derivative-sigma-bound}.
\par 
To conclude, we have
\begin{align*}
\left| \rwdv^\top \sig(\Ws \x) - \rwdv^\top \sig(W_t \x) \right|
&\le
\sqrt{\beta_t(\delta)} \left\| (\nabla \sig(W_t \x) \otimes \x) \rwdv \right\|_{Z_t^{-1}}
+
\frac{\|\rwdv\|_1 }{2} \beta_t(\delta) \left\| Z_t^{-\frac{1}{2}} (I \otimes \x) \right\|_{2,2}^2
\end{align*}
\end{proof}

\begin{lemma} \label{lem:fd-self-norm-t}
If $\lambda \ge 2 L D^2$,
then for any $t > 0$, we have \[
	\sum_{s=1}^{t} \tr\left( \nabla^2\ell_s(W_{s+1}) Z_s^{-1} \right)
	=
	\sum_{s=1}^{t} \left\| \sqrt{\nabla \sig(W_{s+1} \x_s)} \otimes \x_s \right\|_{Z_s^{-1}}^2
	\le
	\tail{t}{\lambda} \left(Kd \ln 2 \tail{t}{\lambda} + m \ln (1 + \frac{D^2}{m \lambda}t)\right)
. \]
\begin{proof}
Let $H_t$ be the original version of $Z_t$,\[
	H_t = \lambda I + \sum_{s=1}^{t-1} \nabla \sig(W_{s+1} \x_s) \otimes \x_s \x_s^\top = \lambda I + \sum_{s=1}^{t-1} \nabla^2 \ell_s(W_{s+1})
. \] Therefore
\begin{align*}
\sum_{s=1}^{t} \left\| \sqrt{\sig(W_{s+1} \x_s)} \x_s \right\|_{Z_s^{-1}}^2
= &~
\sum_{s=1}^{t} \tr(\nabla^2 \ell_s(W_{s+1}) Z_s^{-1})
\\ &~\text{(by FD lemma \ref{lem:fd-reg-sketch-relation})}
\\ \le &~
\sum_{s=1}^{t} \tail{t}{\lambda} \tr(\nabla^2 \ell_s(W_{s+1}) H_s^{-1})
\\ &~\text{(by Lemma \ref{lem:fd-self-origin-norm-t})}
\\ \le &~
\tail{t}{\lambda} \left(Kd \ln \tail{t}{\lambda/2} + m \ln (1 + 2\frac{D^2}{m \lambda}t)\right)
\end{align*}
\end{proof}
\end{lemma}

\begin{lemma} \label{lem:fd-self-origin-norm-t}
If $\lambda \ge 2 L D^2$,
then for any $t > 0$, we have \[
	\sum_{s=1}^{t} \tr(\nabla^2 \ell_s(W_{s+1}) H_s^{-1})
	\le
	Kd \ln \tail{t}{\lambda/2} + m \ln (1 + 2\frac{D^2}{m \lambda}t)
. \]
\begin{proof}
Let \[
	H_t = \lambda I + \sum_{s=1}^{t-1} \nabla \sig(W_{s+1} \x_s) \otimes \x_s \x_s^\top = \lambda I + \sum_{s=1}^{t-1} \nabla^2 \ell_s(W_{s+1})
. \] And $M_t := \frac{\lambda}{2} I + \sum_{s=1}^{t} \nabla^2 \ell_s(W_{s+1})$,
and by $\nabla^2 \ell_t(W_{t+1}) \preceq \frac{\lambda}{2} I$,
we have \[
	H_t
	=
	\frac{\lambda}{2} I + \sum_{s=1}^{t-1} \nabla^2 \ell_s(W_{s+1}) + \frac{\lambda}{2} I
	\succeq
	\frac{\lambda}{2} I + \sum_{s=1}^{t-1} \nabla^2 \ell_s(W_{s+1}) + \nabla^2 \ell_{t}(W_{t+1})
	=
	M_t
. \] Therefore
\begin{align*}
\sum_{s=1}^{t} \tr(\nabla^2 \ell_s(W_{s+1}) H_s^{-1})
\le &~
\sum_{s=1}^{t} \tr(\nabla^2 \ell_s(W_{s+1}) M_s^{-1})
\\ = &~
\sum_{s=1}^{t} \tr((M_s - M_{s-1}) M_s^{-1})
\\ &~\text{(by the concavity of $y = \ln |X|$)}
\\ \le &~
\sum_{s=1}^{t} \ln \frac{\left| M_s \right|}{\left| M_{s-1} \right|}
\\ \le &~
\sum_{s=1}^{t} \ln \frac{\left| M_s \right|}{\left| M_{s-1} \right|}
\\ = &~
\ln \frac{\left| M_t \right|}{\left| \frac{\lambda}{2} I \right|}
\end{align*}
By the FD Lemma \ref{lem:fd-det-origin},
we have \[
	\sum_{s=1}^{t} \tr(\nabla^2 \ell_s(W_{s+1}) H_s^{-1})
	\le
	Kd \ln \tail{t}{\lambda/2} + m \ln (1 + 2\frac{D^2}{m \lambda}t)
. \]
\end{proof}
\end{lemma}

\begin{lemma} \label{lem:fd-self-norm-I-xxt-zt}
If $\lambda \ge 2 L D^2$,
then for any $t > 0$, we have \[
	\sum_{s=1}^{t} \left\| I \otimes \x_s \right\|_{Z_s^{-1}}^2
	\le
	\kappa \tail{t}{\lambda} \left(Kd \ln \tail{t}{\lambda/2} + m \ln (1 + 2 \frac{D^2}{m \lambda}t)\right)
. \]
\begin{proof}
Let $H_t$ be the original version of $Z_t$,\[
	H_t = \lambda I + \sum_{s=1}^{t-1} \nabla^2 \ell_s(W_{s+1})
. \] And $M_t := \frac{\lambda}{2} I + \frac{1}{\kappa} \sum_{s=1}^{t} I \otimes \x_s \x_s^\top$,
and by $\frac{1}{\kappa} (I \otimes \x_s \x_s^\top) \preceq L D^2 I \preceq \frac{\lambda}{2} I$,
we have \[
	H_t
	=
	\frac{\lambda}{2} I + \sum_{s=1}^{t-1} \nabla^2 \ell_s(W_{s+1}) + \frac{\lambda}{2} I
	\succeq
	\frac{\lambda}{2} I
	+ \sum_{s=1}^{t-1} \frac{1}{\kappa} (I \otimes \x_s \x_s^\top)
	+ \frac{1}{\kappa} (I \otimes \x_t \x_t^\top)
	=
	M_t
. \] Therefore
\begin{align*}
\sum_{s=1}^{t} \left\| I \otimes \x_s \right\|_{Z_s^{-1}}^2
= &~
\sum_{s=1}^{t} \tr((I \otimes \x_s \x_s^\top) Z_s^{-1})
\\ &~\text{(by FD Lemma \ref{lem:fd-reg-sketch-relation})}
\\ \le &~
\sum_{s=1}^{t} \tail{s}{\lambda} \tr((I \otimes \x_s \x_s^\top) H_s^{-1})
\\ \le &~
\sum_{s=1}^{t} \tail{s}{\lambda} \kappa \tr(\frac{1}{\kappa} (I \otimes \x_s \x_s^\top) M_s^{-1})
\\ = &~
\tail{t}{\lambda} \kappa \sum_{s=1}^{t} \tr((M_s - M_{s-1}) M_s^{-1})
\\ &~\text{(by the concavity of $y = \ln |X|$)}
\\ \le &~
\tail{t}{\lambda} \kappa \sum_{s=1}^{t} \ln \frac{\left| M_s \right|}{\left| M_{s-1} \right|}
\\ \le &~
\tail{t}{\lambda} \kappa \sum_{s=1}^{t} \ln \frac{\left| M_s \right|}{\left| M_{s-1} \right|}
\\ = &~
\tail{t}{\lambda} \kappa \ln \frac{\left| M_t \right|}{\left| \frac{\lambda}{2} I \right|}
\end{align*}
By the FD Lemma \ref{lem:fd-det-origin},
we have \[
	\sum_{s=1}^{t} \left\| I \otimes \x_s \right\|_{Z_s^{-1}}^2
	\le
	\kappa \tail{t}{\lambda} \left(Kd \ln \tail{t}{\lambda/2} + m \ln (1 + 2\frac{D^2}{m \lambda}t)\right)
. \]
\end{proof}
\end{lemma}

\begin{lemma} \label{lem:fd-self-norm-I-xxt-zt+1}
If $\lambda \ge 2 L D^2$,
then for any $t > 0$, we have \[
	\sum_{s=1}^{t} \left\| I \otimes \x_s \right\|_{Z_{s+1}^{-1}}^2
	\le
	\kappa \tail{t}{\lambda} \left(Kd \ln \tail{t}{\lambda} + m \ln (1 + \frac{D^2}{m \lambda}t)\right)
. \]
\begin{proof}
Let $H_t$ be the original version of $Z_t$,\[
	H_t = \lambda I + \sum_{s=1}^{t-1} \nabla^2 \ell_s(W_{s+1})
. \] Therefore
\begin{align*}
\sum_{s=1}^{t} \left\| I \otimes \x_s \right\|_{Z_{s+1}^{-1}}^2
=&~
\sum_{s=1}^{t} \tr((I \otimes \x_s \x_s^\top) Z_{s+1}^{-1})
\\ &~\text{(by FD Lemma \ref{lem:fd-reg-sketch-relation})}
\\ \le &~
\sum_{s=1}^{t} \tail{s}{\lambda} \tr((I \otimes \x_s \x_s^\top) H_{s+1}^{-1})
\\ \le &~
\kappa \sum_{s=1}^{t} \tail{s}{\lambda} \tr((H_{s+1} - H_s) H_{s+1}^{-1})
\\ &~\text{(by the concavity of $y = \ln |X|$)}
\\ \le &~
\kappa \sum_{s=1}^{t} \tail{s}{\lambda} \ln \frac{\left| H_{s+1} \right|}{\left| H_s \right|}
\le
\tail{t}{\lambda} \kappa \sum_{s=1}^{t} \ln \frac{\left| H_{s+1} \right|}{\left| H_s \right|}
\\ = &~
\tail{t}{\lambda} \kappa \ln \frac{\left| H_t \right|}{\left| \lambda I \right|}
\end{align*}
By the FD Lemma \ref{lem:fd-det-origin},
we have \[
	\sum_{s=1}^{t} \left\| I \otimes \x_s \right\|_{Z_{s+1}^{-1}}^2
	\le
	\kappa \tail{t}{\lambda} \left(Kd \ln \tail{t}{\lambda} + m \ln (1 + \frac{D^2}{m \lambda}t)\right)
. \]
\end{proof}
\end{lemma}


\end{document}